\newtheorem{theorem}{Theorem}[section]
\newtheorem{corollary}{Corollary}[theorem]
\newtheorem{lemma}[theorem]{Lemma}
\newtheorem{prop}{Proposition}
\def\code#1{\texttt{#1}}
\title{System-Aware Neural ODE Processes for Few-Shot Bayesian Optimization}
 \author{%
  Jixiang Qing \\ 
  Imperial College London\\
  \And
  Becky D Langdon \\
  Imperial College London\\
  \And
  Robert M Lee \\ 
  BASF SE \\ 
  \And
  Behrang Shafei \\ 
  BASF SE \\ 
  \And
  Mark van der Wilk \\
  University of Oxford \\
  \And
  Calvin Tsay \\
  Imperial College London \\
  \And
    Ruth Misener \\
  Imperial College London \\
}
\begin{document}


\maketitle

\begin{abstract}
We consider the problem of optimizing initial conditions and termination time in dynamical systems governed by unknown ordinary differential equations (ODEs), where evaluating different initial conditions is costly and the state's value can not be measured in real-time but only with a delay while the measuring device processes the sample. To identify the optimal conditions in limited trials, we introduce a few-shot Bayesian Optimization (BO) framework based on the system's prior information. At the core of our approach is the System-Aware Neural ODE Processes (SANODEP), an extension of Neural ODE Processes (NODEP) designed to meta-learn ODE systems from multiple trajectories using a novel context embedding block. We further develop a two-stage BO framework to effectively incorporate search space constraints, enabling efficient optimization of both initial conditions and observation timings. We conduct extensive experiments showcasing SANODEP's potential for few-shot BO within dynamical systems. We also explore SANODEP's adaptability to varying levels of prior information, highlighting the trade-off between prior flexibility and model fitting accuracy.
\end{abstract}
 \doparttoc 
 \faketableofcontents 
 
\section{Introduction} 
This paper studies a widely encountered, yet less investigated, problem: optimizing the initial conditions and termination time in unknown dynamical systems where evaluations are computationally expensive. We assume that the primary evaluation cost comes from switching initial conditions and wish to use as few trajectories as possible. %

This issue is prevalent in multiple fields, including biotechnology, epidemiology, ecology, and chemistry. For instance, consider the optimization of a (bio)chemical reactor. Here, the objective is to determine the optimal “recipe,” i.e., the set of initial reactant concentrations and reaction times that maximize yield, enhance selectivity, and/or minimize waste \citep{taylor2023brief, schoepfer2024cost, schilter2024combining,thebelt2022maximizing}. Specifically, the high costs associated with changing reactants for multiple experimental runs highlight the need for developing efficient optimization algorithms~\citep{paulson2024bayesian}.

Bayesian Optimization (BO) \citep{frazier2018tutorial, garnett2023bayesian} is a well-established method for optimizing expensive-to-evaluate black-box objective functions. It relies on probabilistic surrogate models built from limited function evaluations to guide the optimization efficiently. However, standard Gaussian Processes (GPs) \citep{williams2006gaussian} (the \textit{de facto} probabilistic surrogate model in BO) with conventional kernels do not capture the dynamics of these systems effectively. Recent efforts to introduce Bayesian ODE models aim to incorporate suitable assumptions; however, their inference often involves time-consuming computations \citep{heinonen2018learning, dandekar2020bayesian}, making them impractical for a time-sensitive optimization scenario. Some Bayesian ODE models attempt to alleviate computational burdens by leveraging crude approximation inference \citep{ott2023uncertainty} or approximating numerical integrators \citep{hegde2022variational, ensinger2024exact}. While these approaches reduce computation time, the necessary approximations may degrade model performance. Consequently, the widely encountered yet under-investigated problem of performing BO in dynamical systems remains open, primarily due to the lack of an appropriate model. 

This work attempts to tackle this optimization problem by conceptualizing it as a \textit{few-shot} optimization, through leveraging \textit{prior information} about dynamical systems to formulate the \textit{meta tasks} to train a learning model, which is then able to adapt to new problems with very few data (a.k.a., shots) and served as our BO surrogate model. To achieve this, we focus on the Neural ODE Process (NODEP) \citep{norcliffe2021neural} as our learning model due to several useful model properties. First, NODEP combines Neural ODEs \citep{chen2018neural}—emphasizing the dynamical system perspective that may provide a more informative inductive bias—with Neural Processes (NP) \citep{garnelo2018neural}, enhancing the meta-learning aspect for few-shot optimization. Second, meta-learning endows NODEP with fast adaptation to new observations, mitigating the potential concern of model training time with incoming measurement results. Third, NODEP enables continuous gradient-based optimization over time, compared with discrete-time meta-learn models (e.g., \cite{foong2020meta}), integrating seamlessly within the BO framework. Finally, compared to other continuous-time models involving computationally intensive operations (e.g., time domain attention \citep{chen2024contiformer}), NODEP is computationally efficient, making it suitable for BO.

However, building a meta-learning-based few-shot BO framework for unknown dynamical systems is not a straightforward downstream application. An obstacle is that NODEP's functionality is limited by its original learning objective: predicting a single \textit{trajectory} state value given \textit{context} data for that trajectory. For initial condition optimization of unknown ODE systems, it is essential to learn to be aware of the underlying governing systems using context data consisting of one or several different trajectory observations, as this benefits for forecasting trajectories starting from arbitrary initial conditions. Moreover, no BO frameworks are proposed for such problem settings.
\begin{figure}[t]
    \centering
    \includegraphics[width=0.9\linewidth]{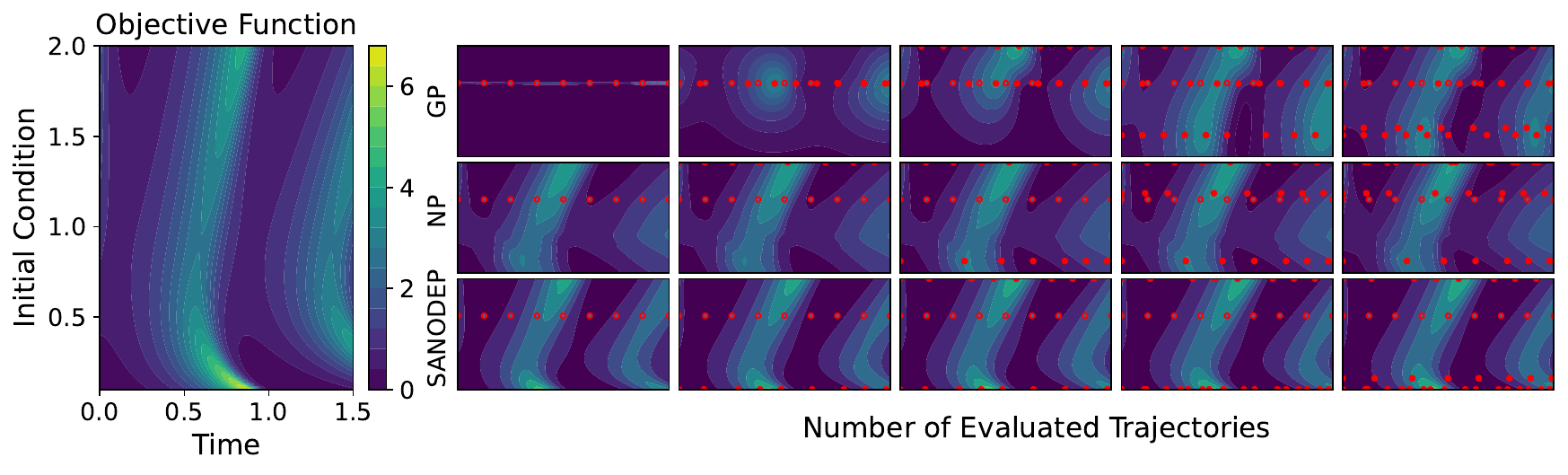}
    \caption{Illustration of meta-learning-based few-shot Bayesian Optimization (BO) with time-delay constraints (detailed in Section.~\ref{Sec:td_opt}) on the Lotka-Volterra (LV) system on different models: Gaussian Process (GP), Neural Process (NP) and System Aware Neural ODE Process (SANODEP). We start with one trajectory (marked as {\color{red} $\circ$}). Each trajectory ensures that adjacent observations respect a minimum time delay $\Delta t$ constraint. The figure clearly shows that after evaluating just one trajectory (the first, left-most column of three rows),  the meta-learned SANODEP model already resembles the original system more closely than the non-meta-learned GP, demonstrating its efficiency in guiding the search toward promising areas.}
    \label{fig:sequential_opt_illu}
\end{figure}

In line with our optimization requirements, we introduce the System-Aware Neural ODE Process (SANODEP), a generalization of NODEP in terms of meta-learning of ODE systems to plug-in our newly developed BO framework for few-shot BO in dynamical systems. Fig.~\ref{fig:sequential_opt_illu} shows, through example, SANODEP outcomes in comparison to other approaches. Our contributions are as follows:

\begin{enumerate}
\item We propose a novel context embedding mechanism enabling SANODEP to meta-learn from batch of trajectories with minimal model structure adjustments based on NODEP. 
\item We developed a model-agnostic, time-delay constraint process BO framework for optimizing initial conditions and termination time in dynamical systems. 
\item We compare SANODEP and other meta-learned models under our BO framework, validating the benefits of the ODE-aware model structure and demonstrating the effectiveness of the few-shot BO framework's model agnosticity property. 
\item We investigate how the different levels of prior information can be utilized for SANODEP. Strong prior information can enable a physically-informed model structure with extended loss considering parameter inference, enabling a novel \textit{few-shot parameter estimation} functionality, while weak prior information may still be useful through a properly designed \textit{task distribution}, albeit at a detriment to model fitting capability. 
\end{enumerate}

The rest of the paper is organized as follows: In Section~\ref{Sec: PS_BG}, we describe the preliminaries. Section~\ref{Sec:SANODEP} introduces the System-Aware Neural ODE Processes (SANODEP). In Section~\ref{Sec:td_opt}, we develop the optimization framework based on SANODEP specific to our optimization problem. Section~\ref{Sec:RelatedWork} discusses related work. Section~\ref{Sec: exp} presents the numerical experiments we conducted. In Section~\ref{Sec: PriorInfo}, we investigate the impact of different levels of prior information.

\section{Problem Statement and Background}\label{Sec: PS_BG}
\subsection{State Optimization}
Consider a dynamical system whose evolution is described by the following ordinary differential equation (ODE) system in the time domain, denoted as $t$:
\begin{equation}
\begin{aligned}
&\left\{
\begin{array}{ll}
\frac{d \boldsymbol{x}}{dt} = \boldsymbol{f}(\boldsymbol{x}, t) \\
\boldsymbol{x}(t=t_0) := \boldsymbol{x}_0 
\end{array}
\right.
\end{aligned}
\end{equation}
where $\boldsymbol{x}(t) \in \mathcal{X}_{\boldsymbol{x}} \subset \mathbb{R}^{d_{\boldsymbol{x}}}$ represents the state value of the system at time $t$, $\boldsymbol{f}(\cdot): \mathbb{R}^{d_{\boldsymbol{x}}} \times \mathbb{R} \rightarrow \mathbb{R}^{d_{\boldsymbol{x}}}$ is an unknown function (\textit{vector field}) governing the system dynamics, and $\boldsymbol{x}_0 \in \mathcal{X}_0 \subseteq \mathcal{X}_{\boldsymbol{x}} \subset \mathbb{R}^{d_{\boldsymbol{x}}}$ is the initial system state. Define $\boldsymbol{f}_{evolve}(\boldsymbol{x}_0, t, \boldsymbol{f}): \mathcal{X}_0 \times \tau \times (\mathbb{R}^{d_{\boldsymbol{x}}} \times \mathbb{R} \rightarrow \mathbb{R}^{d_{\boldsymbol{x}}})\rightarrow \mathbb{R}^{d_{\boldsymbol{x}}}$, where $\tau \in [t_0, t_{max}]$, to obtain the system state at time $t$ from an initial condition $\boldsymbol{x}_0$ as: $\boldsymbol{f}_{evolve}(\boldsymbol{x}_0, t, \boldsymbol{f})= \boldsymbol{x}_0 + \int_{t_0}^{t}\boldsymbol{f}(\boldsymbol{x}(h), h)dh$. Let $g(\cdot): \mathbb{R}^{d_{\boldsymbol{x}}} \rightarrow \mathbb{R}$ be a practitioner-specified (known) function that aggregates the state values into a scalar, we consider the following multi-objective  problem:
\begin{equation}
    \mathrm{maximize}_{\{t, \boldsymbol{x}_0\} \in \tau \times \mathcal{X}_0} \quad g\left(\boldsymbol{f}_{evolve}(\boldsymbol{x}_0, t, \boldsymbol{f})\right), - t
\label{Eq: multi_obj}
\end{equation}
The goal is to identify the initial conditions $\boldsymbol{x}_0$ along with the corresponding evolution termination time $t$ that provide the optimal trade-off between the objective $g$ applied to the state values $\boldsymbol{x}(t)$ and the amount of time to reach this state. Simply put, we wish to maximize objective $g$ early.  

\subsection{Neural ODE Processes (NODEP)}\label{Sec:NODEP_intro}
NODEP \citep{norcliffe2021neural} is a latent variable-based Bayesian meta-learning model with generative processes that can be summarized as follows: given a set of $N$ observations of state values at different times from a single trajectory, represented by context set $\mathbb{C}:=\{(t_i^\mathbb{C}, \boldsymbol{x}_i^\mathbb{C})\}_{i=1}^N$, NODEP assumes the conditional prediction has been generated from a latent controlled ODE of state dimensionality $d_{\boldsymbol{\ell}}$. The stochasticity of the model is induced by stochastic latent initial condition $L_0$ and the stochastic dynamics representation, which in practice is implemented via a 
time-invariant control $L_D$. These two terms are obtained through first encoding the context elements $\phi_r([t_i^\mathbb{C}, \boldsymbol{x}_i^\mathbb{C}])$ and then applying average pooling \citep{zaheer2017deep} to produce a single representation vector $\boldsymbol{r} = \frac{1}{N}\sum_{i=1}^N \phi_r([t_i^\mathbb{C}, \boldsymbol{x}_i^\mathbb{C}])$, which is then mapped to corresponding distributions $L_0$ and $L_D$. Once we sample the latent initial condition and the trajectory dynamics representation $\boldsymbol{u}$\footnote{With slight abuse of notation, we follow the common notation in control to use $\boldsymbol{u}$ (instead of $\boldsymbol{d}$) to represent the realization of time-invariant control term $L_D$. }, the evolution of the latent ODE is:
\begin{equation}
    \boldsymbol{l}(t)=\boldsymbol{f}_{evolve}(\boldsymbol{l}_0, t, \boldsymbol{f}_{nn}(\boldsymbol{u}, \theta_{ode})) := \boldsymbol{l}(t_{0}) + \int_{t_0}^{t}\boldsymbol{f}_{nn}\left(\boldsymbol{l}(h), \boldsymbol{u}, h, \theta_{ode}\right) dh
\label{Eq: nodep_solve}
\end{equation}
\noindent Here $\boldsymbol{f}_{nn}$ represents the vector field as a neural network \footnote{
We note that the latent ODE $\boldsymbol{f}_{nn}$ in Eq.~\ref{Eq: nodep_solve} does not have to align with the form of the unknown $\boldsymbol{f}$ in Eq.~\ref{Eq: multi_obj}. However, in section.~\ref{Sec: PriorInfo}, we also show that if explicitly know the form of $\boldsymbol{f}$, this is still possible to be leveraged as $\boldsymbol{f}_{nn}$.

} parameterized by $\theta_{ode}$. We compute $\boldsymbol{f}_{evolve}$ using numerical ODE solvers. To predict the state values at time $t$, after solving the latent ODE, the latent state $\boldsymbol{l}(t) \in \mathbb{R}^{d_{\boldsymbol{l}}}$ can be decoded back to real state predictions through a decoder $p(\boldsymbol{x}(t) | \boldsymbol{l}(t), t)$. Thus, the overall joint probability of the model can be represented as:
\begin{equation}
\begin{aligned}
& p\left(\boldsymbol{u}, \boldsymbol{l}_0, \boldsymbol{x}| \mathbb{C}, t \right)  = 
p\left(\boldsymbol{x} | \boldsymbol{f}_{evolve}(\boldsymbol{l}_0, t, \boldsymbol{f}_{nn}(\boldsymbol{u}, \theta)), t\right) p(\boldsymbol{u}| \mathbb{C} ) p(\boldsymbol{l}_0| \mathbb{C} ) 
\end{aligned}
\end{equation}
The latent state's dimensionality $d_{\boldsymbol{l}}$ is typically chosen to be larger than $d_{\boldsymbol{x}}$ as additional state variables benefit the model flexibility \citep{dupont2019augmented} and enable learning of higher-order dynamics implicitly \citep{norcliffe2020second}. 

Given a context set $\mathbb{C}$ and a target set $\mathbb{T}:=\{(t_k^{\mathbb{T}}, \boldsymbol{x}_k^{\mathbb{T}})\}_{k=1}^{J}$ \footnote{We follow the \cite{garnelo2018neural} and \cite{norcliffe2021neural} convention by assuming $\mathbb{T}$ is a superset of $\mathbb{C}$.}, to calculate the log-likelihood on $\mathbb{T}$ for inference, the intractable posterior of $p\left(\boldsymbol{l}_0| \mathbb{C} \right)p(\boldsymbol{u} \vert \mathbb{C})$ has been approximated through the mean field approximation $q(\boldsymbol{\ell}_0\vert \mathbb{T})q(\boldsymbol{u} \vert \mathbb{T})$, which in practice has been implemented through encoder $q_{L}\left(\boldsymbol{l}_0| \mathbb{C}\right)$, $q_{D}\left(\boldsymbol{u}| \mathbb{C} \right)$ in
an amortized fashion, eventually leading to the following evidence lower bound (ELBO): 
\begin{equation}
\begin{aligned}
& \text{log}\ p\left(\{(x_k^\mathbb{T})\}_{k=1}^J | \{(t_i^\mathbb{C}, \boldsymbol{x}_i^\mathbb{C})\}_{i=1}^N,  \boldsymbol{T}^\mathbb{T}  \right)  \\ & \approx \mathbb{E}_{q(\boldsymbol{\ell}_0, \boldsymbol{u}\vert \mathbb{T})} \left[\sum_{k=1}^J\text{log}\ p\left(\boldsymbol{x}_k | \boldsymbol{u}, \boldsymbol{l}_0, t_k\right)\right]  - \text{KL}\left[q_{L}\left(\boldsymbol{l}_0| \mathbb{T}\right) | | q_{L}\left(\boldsymbol{l}_0| \mathbb{C}\right)\right]  - \text{KL}\left[ q_{D}\left(\boldsymbol{u} | \mathbb{T} \right) | | q_{D}\left(\boldsymbol{u}| \mathbb{C} \right)\right] 
\end{aligned}
\label{Eq: loss_func_of_NODEP}
\end{equation}
\noindent where $\boldsymbol{T}^\mathbb{T}:=\{(t_k^{\mathbb{T}})\}_{k=1}^J$ represents the (irregularly sampled) target times\footnote{We denote context trajectory data either as a set of pairs $\{(t_i^\mathbb{C}, \boldsymbol{x}_i^\mathbb{C})\}_{i=1}^N$ or as vectors $\{\boldsymbol{T}^{\mathbb{C}}, \boldsymbol{X}^{\mathbb{C}}\}$, where $\boldsymbol{T}^{\mathbb{C}}=[t_1^{\mathbb{C}}, ..., t_N^{\mathbb{C}}]^\mathrm{T}$,   $\boldsymbol{X}^{\mathbb{C}}=[\boldsymbol{x}_1^{\mathbb{C}}, ..., \boldsymbol{x}_N^{\mathbb{C}}]^{\mathrm{T}}$. We use similar notation $\{\boldsymbol{T}^{\mathbb{T}}, \boldsymbol{X}^{\mathbb{T}}\}$ for the target set.}. NODEP has competitive performance in predicting single trajectory, and we therefore consider extending its functionality to meta-learn ODE system distributions for our subsequent optimization purpose.

\section{System Aware Neural ODE Process (SANODEP)}\label{Sec:SANODEP}

\begin{wrapfigure}[23]{r}{0.45\textwidth} 
\vspace{-10pt}
\hspace{-1pt}
\centering
\begin{tikzpicture}[node distance=0.7cm and 0.4cm, auto] 
  \tikzset{
    var/.style={circle, thick, draw, minimum size=1cm},
    latent/.style={circle, thick, draw, fill=gray!25, minimum size=1cm}
  }

  \node[latent] (tc_not_j) {$\boldsymbol{T}^{\mathbb{C}}$};
  \node[latent, right=of tc_not_j] (xc_not_j) {$\boldsymbol{X}^{{\mathbb{C}}}$};
  \node[latent, right=of xc_not_j] (tc) {$\boldsymbol{T}_{new}^{\mathbb{C}}$};
  \node[latent, right=of tc] (xc) {$\boldsymbol{X}_{new}^{{\mathbb{C}}}$};

  \node[var, below=of tc] (l0) {${\boldsymbol{L}_0}_{new}$};
  \node[var, right=of l0] (dsys) {$D_{sys}$};
  \node[latent, below=of l0] (tt) {$\boldsymbol{T}_{new}^{{\mathbb{T}}}$};
  \node[latent, right=of tt] (xt) {$\boldsymbol{X}_{new}^{{\mathbb{T}}}$};

  \draw[-{Latex[length=2mm]}] (xc) -- (l0);
  \draw[-{Latex[length=2mm]}] (tc) -- (l0);
  \draw[-{Latex[length=2mm]}] (xc) -- (dsys);
  \draw[-{Latex[length=2mm]}] (tc_not_j) -- (dsys);
  \draw[-{Latex[length=2mm]}] (xc_not_j) -- (dsys);
  \draw[-{Latex[length=2mm]}] (tc) -- (dsys);
\draw[-{Latex[length=2mm]}] ([xshift=0.5mm, yshift=0.5mm]l0.south east) -- ([xshift=0.5mm, yshift=0.5mm]xt.north west);
  \draw[-{Latex[length=2mm]}] (l0) -- (dsys);
  \draw[-{Latex[length=2mm]}] ([xshift=-0.3mm]dsys.south) -- ([xshift=-0.5mm]xt.north);
  \draw[-{Latex[length=2mm]}, dashed] ([xshift=0.5mm]xt.north) -- ([xshift=0.7mm]dsys.south);
    \draw[-{Latex[length=2mm]}, dashed] (tt)--(dsys);
  \draw [-{Latex[length=2mm]}] (tt) -- (xt);
  \draw[-{Latex[length=2mm]}, dashed] ([xshift=-0.5mm, yshift=-0.5mm]xt.north west) -- ([xshift=-0.5mm, yshift=-0.5mm]l0.south east);
    \draw[-{Latex[length=2mm]}, dashed] (tt)--(l0);
  \draw[thick] (-0.7,-0.8) rectangle (5.5, 1.0);
  \draw[thick] (2.1, -4.8) rectangle (5.5, -3.1);
  \node at (-0.5, 0.7) {$\mathbb{C}$};
  \node at (2.3,-3.4) {$\mathbb{T}$};
\end{tikzpicture}

\caption{Graphical Model of SANODEP, the model predicts any time point in the new trajectory by knowing both observations from new trajectories and from past trajectories. Depending on whether the $X_{new}^{{\mathbb{C}}}$ and $T_{new}^{{\mathbb{C}}}$ consists of more than the initial condition, the model focuses on forecasting or interpolating tasks. The solid and dashed lines represent the generative and inference processes, respectively.}
\end{wrapfigure}

\textbf{Few-Shot Optimization with Prior Information} Given that evaluating $\boldsymbol{f}_{evolve}(\boldsymbol{x}_0, t, \boldsymbol{f})$ is limited by cost to a small number of different initial conditions. To enable fast adaptation with few evaluations, we take a meta-learning approach by assuming that $\boldsymbol{f}$ is a realization of a stochastic function $\mathcal{F}$, and we can access data from its different realizations to formulate the \textit{meta-task distributions}, 

 More precisely, consider a random function $\mathcal{F}: \mathcal{X}_{0} \times \tau \rightarrow \mathbb{R}^{d_{\boldsymbol{x}}}$ representing the distribution of dynamical systems. From a specific realization of $\mathcal{F}$, suppose we have observed $M$ distinct trajectories, the context set then encompasses observations from all $M$ trajectories, denoted as $\mathcal{T}^\mathbb{C} = \{\mathcal{T}_1^\mathbb{C}, ..., \mathcal{T}_M^\mathbb{C}\}$. Each trajectory, labeled as $\mathcal{T}_l^\mathbb{C}$, includes its own set of $N_l$ context observations: $\mathcal{T}_l^\mathbb{C} = \{({t_{l}}_i^{\mathbb{C}}, {\boldsymbol{x}_{l}}_i^{\mathbb{C}})\}_{i=1}^{N_l}$. Furthermore, consider a \textit{new} trajectory $\mathcal{T}_{new}$. After observing additional context observations on this trajectory, $\mathcal{T}_{new}^{\mathbb{C}}$ (e.g., the initial condition of this new trajectory), we can define the extended context set as $\mathbb{C} = \mathcal{T}^\mathbb{C} \cup \mathcal{T}_{new}^{\mathbb{C}}$. Additionally, the target set is defined as $\mathbb{T} = \mathcal{T}^\mathbb{C} \cup \mathcal{T}_{new}^\mathbb{T}$\footnote{Note that again $\mathcal{T}_{new}^\mathbb{C} \subset \mathcal{T}_{new}^\mathbb{T}$.}, a visual elaboration is provided in Fig.~\ref{fig:interpolating_scenario_illustration} in Appendix \ref{app: pb_illu_nomenclature}.

It is straightforward to see why NODEP is suboptimal in such scenario as it makes predictions based on single time series ($\mathbb{C}=\mathcal{T}_{new}^\mathbb{C}$, $\mathbb{T}=\mathcal{T}_{new}^\mathbb{T}$), unable to leverage $M$ trajectorys' information. Below, we demonstrate how SANODEP is efficiently enabled through the set-based representation \citep{zaheer2017deep}.

\subsection{Set-based Dynamical System Representation}

Similar to the latent variable $\boldsymbol{u}$ (Eq.\ \ref{Eq: nodep_solve}) capturing trajectory dynamics in NODEP, we adapt a latent variable $D_{sys} \sim q(\boldsymbol{u}_{sys}| \mathbb{C})$  in SANODEP, effectively replacing $\boldsymbol{u}$ in the model structure but with an enhanced conditioning on context observations from $M+1$ trajectories, to capture the dynamical systems properties. Follow this mechanism, we will have a feature extraction for \textbf{multi-start multivariate irregular time series}, in addition with an efficiency requirement for our optimization purposes. 

Such questions are less considered in contemporary time series models, as existing approaches \citep{shukla2021multi, schirmer2022modeling} are mainly developed for single initial condition start trajectories. Specifically for multi-start scenario, \cite{jiang2022sequential} proposes first extracting a trajectory-wise aggregated feature vector, and then averaging feature vectors as a final representation. However, both extraction are implemented through a convolution operation, which poses challenges with irregular time series without additional modifications.

A straightforward thinking would be still using average pooling across all context elements as a set-based representation. While the Picard-Lindelöf theorem \citep{lindelof1894application} guarantees the uniqueness of state values in an initial value problem when $\boldsymbol{f}(\boldsymbol{x}, t)$ is Lipschitz continuous, in case when only part of the underline system states can be measured as $\boldsymbol{x}(t)$, different trajectories might still share identical $\boldsymbol{x}(t)$ values, leading to identifiability issues (duplicate context elements come from different initial conditions).  Consequently, we propose an augmented sets-based approach: we augment each observation with its corresponding initial condition: $({t}_i, {\boldsymbol{x}}_0, {\boldsymbol{x}}_i)$,  to enhance the model's ability to differentiate between trajectories that might otherwise appear identical. Then we perform the average pooling on the flattened context set to obtain the context representation $\boldsymbol{r}_{sys}= \frac{1}{\sum_{l=1}^{M+1}N_l}\sum_{l=1}^{M+1}\sum_{i=1}^{N_l}{\phi_r}_{sys}([{t_l}_i^{\mathbb{C}}, {\boldsymbol{x}_l}_0^{\mathbb{C}}, {\boldsymbol{x}_l}_i^{\mathbb{C}}])$, which is then mapped to a distribution representing possible system realizations from $\mathcal{F}$ that has generated $\mathbb{C}$. Aside from the theoretical intuition, we also empirically compare the average pooling without initial condition augmentation in the section.~\ref{Sec:train_model_comp}, and find that the augmentation leads to more robust performance even  
$\mathcal{F}$ only represents first-order systems with fully observable states.

For the rest of the model, SANODEP follows the NODEP structure, hence requiring minimal adjustment, only with additional care on activation function choices to enable differentiability (Appendix.~\ref{App: differentiability}). Appendix~\ref{App: model_structure} provides a detailed description of the model structure. 

\subsection{Bi-scenario Loss Function} 
In line with the principles of episode learning \citep{vinyals2016matching}, SANODEP's training is structured through multiple episodes. Motivated by our optimization problem that will be detailed in Section.~\ref{Sec:td_opt}, the design of each episode's problem ensures good model performance under two primary scenarios: 
\begin{enumerate}
    \item \textbf{Forecasting}: Using $M$ context trajectories $\mathcal{T}^\mathbb{C}$, the model predicts future state values ${\boldsymbol{X}_{new}^{\mathbb{T}}}$ for a new trajectory initiated from $\mathcal{T}_{new}^{\mathbb{C}} = \{({t_{new}}_0^{\mathbb{C}}, {{\boldsymbol{x}_{new}}_{0}^{\mathbb{C}}})\}$ at designated target times ${\boldsymbol{T}_{new}^{\mathbb{T}}}$.
    \item \textbf{Interpolating}: From the same $M$ trajectories, the model interpolates and extrapolates state values for a new trajectory that already includes $K>1$ observations: $\mathcal{T}_{new}^{\mathbb{C}} = \{({t_{new}}_i^{\mathbb{C}}, {{\boldsymbol{x}_{new}}_i^{\mathbb{C}}})\}_{i=0}^K$, predicting the states at times ${\boldsymbol{T}_{new}^{\mathbb{T}}}$.
\end{enumerate}

Both scenarios can be considered under a bi-scenario loss function, designed to enhance model's accuracy for predicting new trajectory states ${\boldsymbol{X}_{new}^{\mathbb{T}}}$ at ${\boldsymbol{T}_{new}^{\mathbb{T}}}$:
\begin{equation}
\begin{aligned}
    \mathcal{L}_{\theta}  = & \mathbb{E}_{\boldsymbol{f}\sim \mathcal{F}, M, \mathcal{T}^\mathbb{C}, \mathcal{T}_{new}^{\mathbb{T}}, \mathbbm{1}_{forecast}, \mathcal{T}_{new}^{\mathbb{C}}(\mathbbm{1}_{forecast})} \text{log}\ p_{\theta}\left({{\boldsymbol{X}_{new}^{\mathbb{T}}}} | \mathcal{T}^\mathbb{C} \cup \mathcal{T}_{new}^{\mathbb{C}}(\mathbbm{1}_{forecast}), {\boldsymbol{T}_{{new}}^{\mathbb{T}}}\right)
\label{Eq: multi_scenario_loss_fn}
\end{aligned}
\end{equation}
\noindent where $p_{\theta}(\cdot)$ represents the SANODEP prediction when parameterized by $\theta$.  

During training, once a dynamical system realization $\boldsymbol{f}$ from $\mathcal{F}$ has been drawn,  we randomly sample the number of context trajectories $M$ that we have observed, where the number of context elements, the initial condition of the trajectory and the observation time are all sampled. For the new trajectory to be predicted, besides sampling the target set $\{\boldsymbol{T}_{new}^{\mathbb{T}}, \boldsymbol{X}_{new}^{\mathbb{T}}\}$, 
a Bernoulli indicator $\mathbbm{1}_{forecast}\sim \text{Bernoulli}(\lambda)$ parameterize by $\lambda$ is sampled to determine whether the episode will address a forecasting or interpolating scenario. This indicator directly influences the part of the context set sampled from the new trajectory\footnote{For notation simplicity, we always use subscript $0$ to represent the initial condition of a trajectory.}: 
\begin{equation}
\mathcal{T}_{new}^{\mathbb{C}}(\mathbbm{1}_{forecast}) = 
\begin{cases} 
\{{({t_{new}}_{0}^{\mathbb{C}}}, {{\boldsymbol{x}_{new}}_{0}^{\mathbb{C}}})\} & \text{if } \mathbbm{1}_{forecast} = 1, \\
 \{({t_{new}}_{i}^{\mathbb{C}}, {{\boldsymbol{x}_{new}}_{i}^{\mathbb{C}}})\}_{i=0}^K & \text{if } \mathbbm{1}_{forecast}  = 0.
\end{cases}
\end{equation}

For each system realization, we train in a mini-batch way by making prediction on a batch of different new trajectories.

The intractable log-likelihood in Eq.~\ref{Eq: multi_scenario_loss_fn} can be approximated via the following evidence lower bound (ELBO): 
\begin{equation}
\begin{aligned}
& \log p_{\theta}\left(\boldsymbol{X}_{new}^{\mathbb{T}} \mid \mathcal{T}^\mathbb{C} \cup \mathcal{T}_{new}^{\mathbb{C}}(\mathbbm{1}_{forecast}), \boldsymbol{T}_{new}^{\mathbb{T}} \right) \\
& \approx \mathbb{E}_{q\left(\boldsymbol{u}_{sys} \mid \mathcal{T}^\mathbb{C} \cup \mathcal{T}_{new}^{\mathbb{C}}(\mathbbm{1}_{forecast}) \cup \mathcal{T}_{new}^{\mathbb{T}}\right)q\left(\boldsymbol{L}_{0_{new}}^{\mathbb{T}} \mid \left(t_0^{\mathbb{C}}, \boldsymbol{x}_{0_{new}}^{\mathbb{C}}\right)\right)}  \log p\left(\boldsymbol{X}_{new}^{\mathbb{T}} \mid \boldsymbol{T}_{new}^{\mathbb{T}}, \boldsymbol{u}_{sys}, \boldsymbol{L}_{0_{new}}^{\mathbb{T}}\right) \\
& \quad - \text{KL}\left[q\left(\boldsymbol{u}_{sys} \mid \mathcal{T}^\mathbb{C} \cup \mathcal{T}_{new}^{\mathbb{C}}(\mathbbm{1}_{forecast}) \cup \mathcal{T}_{new}^{\mathbb{T}}\right) \parallel q\left(\boldsymbol{u}_{sys} \mid \mathcal{T}^\mathbb{C} \cup \mathcal{T}_{new}^{\mathbb{C}}(\mathbbm{1}_{forecast}) \right)\right] \\
& \quad - \text{KL}\left[q\left(\boldsymbol{L}_{0_{new}}^{\mathbb{T}} \mid \left(t_{0_{new}}^{\mathbb{C}}, \boldsymbol{x}_{0_{new}}^{\mathbb{C}}\right)\right) \parallel p(\boldsymbol{L}_{0_{new}}^{\mathbb{T}})\right]
\end{aligned}
\label{Eq: ELBO_form}
\end{equation}

where $q\left(\boldsymbol{u}_{sys} \mid \mathcal{T}^\mathbb{C} \cup \mathcal{T}_{new}^{\mathbb{C}}(\mathbbm{1}_{forecast}) \cup \mathcal{T}_{new}^{\mathbb{T}}\right)$ and $q\left(\boldsymbol{L}_{0_{new}}^{\mathbb{T}} \mid \left(t_{0_{new}}^{\mathbb{C}}, \boldsymbol{x}_{0_{new}}^{\mathbb{C}}\right)\right)$ has been obtained through the encoder in an amortized fashion, the prior $p(\boldsymbol{L}_{0_{new}}^{\mathbb{T}})$ is isotropic Gaussian. Appendix.~\ref{Sec: loss_derive} derives the ELBO and provides implementation details of the model inference procedure.

\section{Time Delay Constraint Process Bayesian Optimization}\label{Sec:td_opt}

Through maximizing the ELBO for $\theta$, we can obtained SANODEP's predictive distribution $p_{\theta}(\boldsymbol{X}^{\mathbb{T}}\vert \mathbb{C}, \boldsymbol{T}^{\mathbb{T}}, \boldsymbol{x}_0)$ for batch of time $\boldsymbol{T}^{\mathbb{T}}=\{t_1, ..., t_N\}$ at specified initial condition $\boldsymbol{x}_0$, which is sufficient for few-shot learning tasks. Specifically for our practically motivated termination time optimization problem that incorporates one additional time delay constraint, we further propose an optimization framework in this section, leveraging SANODEP for few-shot BO in ODE and benchmark in Section.~\ref{sec: meta_bo_exp}. 


\textbf{Minimum Observation Delay Constraint} When optimizing Eq.~\ref{Eq: multi_obj}, we assume that, while one can observe state values at any chosen time $t$ on a specific initial condition $\boldsymbol{x}_0$, as illustrated in Fig.~\ref{fig:sequential_opt_illu}, the next observation can only commence after a fixed known time duration $\Delta t$. In practice, these delays stem from the need to sequentially conduct separate, smaller experiments for \textit{state value} measurements, each requiring a known period to complete before the next can begin.

The optimization framework consists of the following two steps: 
\newline\textbf{Initial Condition Identification}  identifies the optimal initial conditions necessary for starting experiments. The optimality of the initial conditions is defined as achieving the maximum expected reward after the completion of observations starting at this location.  Inspired by the batch strategy to achieve a similar \textit{non-myopic} objective \citep{gonzalez2016glasses, jiang2020binoculars}, we propose an \textbf{adaptive batch size} based optimization strategy for the initial condition identification:
\begin{equation}
\begin{aligned}
    & \mathrm{maximize}_{\boldsymbol{x}_0 \in \mathcal{X}_0, \{t_1, t_2, ..., t_N \in \tau\}, N \in \mathcal{N}_{opt}}\ \alpha\left(\boldsymbol{x}_0, t_1, ..., t_N,  p(\boldsymbol{X}^{\mathbb{T}}\vert \mathbb{C}, \boldsymbol{T}^{\mathbb{T}}, \boldsymbol{x}_0)\right) \\ 
    & s. t.\  \forall i \in \{1, ..., N\}: t_{i}-t_{i-1} \geq \Delta t 
\end{aligned}
\label{Eq: joint_acq_opt_form}
\end{equation}
\noindent where $\alpha$ is the batch acquisition function to be maximized, the search space of the total number of observation queries $\mathcal{N}_{opt} := \{1, 2, ..., N_{max}\}$, $\boldsymbol{T}^{\mathbb{T}}$  now consists of  $\{t_1, ..., t_N\}$ and the maximum trajectory-wise observations number is $N_{max}:=\lfloor (t_{max} - t_0) / \Delta t \rfloor$, we omit the dependence of $p_{\theta}(\boldsymbol{x}\vert \mathbb{C}, t, \boldsymbol{x}_0)$ in $\alpha(\cdot)$ for brevity thereafter.  
\newline\textbf{Choose the next query time}  Suppose the initial condition $\boldsymbol{x}_0$ has been chosen, and our last observation query made is at time $t_n$, when still have query opportunity (i.e., $t_{max} > t_n + \Delta t$), redefine the maximum remaining trajectory observation $N_{max}$ as  $\lfloor \frac{t_{max} - t_n}{\Delta t} \rfloor$, the batch size search space $\mathcal{N}_{opt}$ and time search space $\tau = [t_n + \Delta t, t_{max}]$, we choose the next query time recurrently via the following optimization problem:   

\begin{equation}
\begin{aligned}
    & \mathrm{maximize}_{\{t_1, t_2, ..., t_{N} \in \tau\}, N \in \mathcal{N}_{opt}}\ \alpha\left(\boldsymbol{x}_0, t_1, ..., t_N \right) \\ 
    & s. t.\  \forall i \in \{1, ..., N\}: t_{i}-t_{i-1} \geq \Delta t 
\end{aligned}
\label{Eq: last_acq_opt_form}
\end{equation}
\newline\textbf{Search Space Reduction} The integer variable $N$'s search space $\mathcal{N}_{opt}$, though one dimensional, can be cumbersome to optimize in practice. However, for the batch expected hypervolume improvement acquisition function (\texttt{qEHVI}) \citep{daulton2020differentiable} that we will use as $\alpha$, the search space can be reduced: 

\begin{prop}
For acquisition functions defined as $\alpha() = \mathbb{E}_{p(\boldsymbol{X}^{\mathbb{T}} \mid \mathbb{C}, \boldsymbol{T}^{\mathbb{T}}, \boldsymbol{x}_0)}\left[\mathrm{HVI}\left(\boldsymbol{X}^{\mathbb{T}}, \boldsymbol{T}^{\mathbb{T}}, \mathcal{F}^* \vert \mathbb{C}\right)\right]$:
\begin{align*}
\mathrm{maximize}_{\substack{\boldsymbol{x}_0 \in \mathcal{X}_0, \\ \{t_1, t_2, \dots, t_N\} \in \tau, \\ N \in \{1, \dots, N_{\text{max}}\}}} \alpha(\boldsymbol{x}_0, t_1, \dots, t_N) &= \mathrm{maximize}_{\substack{\boldsymbol{x}_0 \in \mathcal{X}_0, \\ \{t_1, t_2, \dots, t_N\} \in \tau, \\ N \in \{\lceil N_{\text{max}}/2 \rceil, \dots, N_{\text{max}}\}}} \alpha(\boldsymbol{x}_0, t_1, \dots, t_N).
\end{align*}
\label{thm: search_space_reduction}
\end{prop}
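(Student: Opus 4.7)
The plan is to reduce the search space by an extension argument: every feasible configuration with batch size $N < \lceil N_{\max}/2 \rceil$ can be turned into a feasible configuration with $N' \geq \lceil N_{\max}/2 \rceil$ whose acquisition value is at least as large. Since both optimization problems share the same feasible region for $\boldsymbol{x}_0$ and $\{t_i\}$ and differ only in the admissible values of $N$, this extension immediately implies equality of the two maxima. Two ingredients drive the argument: (i) monotonicity of qEHVI under extension of the query-time batch, and (ii) a combinatorial counting showing that enough additional delay-respecting times can always be inserted into any sparse schedule.

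For ingredient (i), I would verify that for any realization of $\boldsymbol{X}^{\mathbb{T}'}$ whose marginal on $\boldsymbol{T}^{\mathbb{T}} \subseteq \boldsymbol{T}^{\mathbb{T}'}$ coincides with $\boldsymbol{X}^{\mathbb{T}}$, the hypervolume improvement is weakly increasing in the index set, because the union of candidate objective vectors $(g(\boldsymbol{x}(t_i)), -t_i)$ in the superset weakly expands the dominated region relative to the reference front $\mathcal{F}^*$. The SANODEP posterior is consistent under marginalization over time, so taking expectations preserves the inequality; appending extra query times to any batch can only weakly increase $\alpha$.

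For ingredient (ii), given feasible times $t_0 < t_1 < \cdots < t_N \le t_{\max}$ with all consecutive differences at least $\Delta t$, I would partition $[t_0, t_{\max}]$ into $N+1$ gaps of lengths $g_0, \ldots, g_N$ summing to $T := t_{\max} - t_0$. A two-sided gap of length $g$ admits $\lfloor g/\Delta t \rfloor - 1$ insertable times and the trailing one-sided gap admits $\lfloor g_N/\Delta t \rfloor$, giving a total of $\sum_{i=0}^{N} \lfloor g_i/\Delta t \rfloor - N$ insertable times. The sum-of-floors bound $\sum_{i=0}^{N} \lfloor g_i/\Delta t \rfloor \geq \lfloor T/\Delta t \rfloor - N = N_{\max} - N$ (with $N+1$ summands) then implies the extended schedule has at least $N + (N_{\max} - 2N) = N_{\max} - N$ observations, which exceeds $\lceil N_{\max}/2 \rceil$ whenever $N \leq \lfloor N_{\max}/2 \rfloor$, a condition guaranteed by $N < \lceil N_{\max}/2 \rceil$ via the integer identity $\lceil N_{\max}/2 \rceil - 1 \leq \lfloor N_{\max}/2 \rfloor$.

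The main obstacle is the counting step in ingredient (ii): one must correctly treat the initial gap as two-sided because the constraint $t_1 \geq t_0 + \Delta t$ is active, verify the one-sided formula for the trailing gap, and apply the sum-of-floors inequality with the right number of summands, since it is exactly this arithmetic that produces the factor $1/2$ rather than some other constant. Once monotonicity and the counting are combined, any candidate with $N$ below the threshold can be extended to one with $N' \geq \lceil N_{\max}/2 \rceil$ whose acquisition dominates it, yielding the claimed equality.
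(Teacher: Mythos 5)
Your proposal is correct and follows essentially the same route as the paper: show that \texttt{qEHVI}, as the expectation of a set-inclusion-monotone utility under a marginally consistent predictive, is itself monotone when query times are appended, and then extend any delay-feasible schedule with $N < \lceil N_{\max}/2\rceil$ points to a feasible one with at least $\lceil N_{\max}/2\rceil$ points. The only difference is that your gap-by-gap sum-of-floors count (with the one-sided trailing gap treated separately) is more explicit than the paper's one-line pigeonhole, which simply asserts that a gap of length at least $2\Delta t$ exists and that a point can be inserted there.
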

where HVI stands for \textit{hypervolume improvement} based on Pareto frontier $\mathcal{F}^*$, see Definition~2 of \cite{daulton2020differentiable}. Appendix \ref{Sec: Acq_Fn}  provides the proof and shows that the search space reduction property holds for generic acquisition functions that are monotonic w.r.t.\ set inclusions. Consequently, we define  $\mathcal{N}_{opt}$ as $[\lceil\frac{N_{max}}{2}\rceil, N_{max}]$ thereafter. 

\textbf{Optimization Framework} is outlined in Algorithm.~\ref{alg: opt_framework} in Appendix.~\ref{Sec: opt_framework}. We refer to the Appendix.~\ref{Sec: Acq_Fn} for details of how SANODEP is utilized within \texttt{qEHVI}, together with acquisition function optimizer described in Appendix.~\ref{Sec: Acq_Fn}. Finally, 
we highlight that this Bayesian Optimization framework is agnostic to the model choice of $p_{\theta}(\boldsymbol{X}^{\mathbb{T}} \mid \mathbb{C}, \boldsymbol{T}^{\mathbb{T}}, \boldsymbol{x}_0)$, as we will compare in Section.~\ref{Sec: exp}, both NP and GP (as a non-meta-learn model) can be used.

\section{Related Works}\label{Sec:RelatedWork} \label{sec:rel_work}
\textbf{Meta-learning of Dynamical Systems}: Our work is built upon a meta-learned continuous time-based model for state prediction in dynamical systems. \cite{singh2019sequential} extended Neural Processes \citep{garnelo2018neural} by incorporating temporal information from the perspective of the state-space model. However, their approach is limited to discrete-time systems. Following the NODEP framework, \cite{dang2023conditional} extended the model for forecasting purposes in multi-modal scenarios. Recently, \cite{jiang2022sequential} used meta-learning to tackle high-dimensional time series forecasting problems for generic sequential latent variable models, they leverage spatial-temporal convolution to extract the $\boldsymbol{u}_{sys}$, which only works on regularly spaced time steps (e.g., image frames in their use cases). However, SANODEP extracts system dynamic's feature from multiple irregularly sampled multivariate time series using the newly proposed encoder structure.  \cite{song2023towards} explored Hamiltonian representations, which are flexible enough for cross-domain generalizations. Beyond the Bayesian meta-learning paradigm, \cite{li2023metalearning} used classical gradient-based meta-learning \citep{finn2017model} to meta-learn dynamics with shared properties. \cite{auzina2024modulated} investigated the separation and modeling of both dynamic variables that influence state evolution and static variables that correspond to invariant properties, enhancing model performance.

\textbf{Process-Constrained Bayesian Optimization:} \citet{vellanki2017process} address process-constrained batch optimization where each batch shares identical constrained variables within a subspace. 
Our problem similarly treats initial conditions as constrained variables in batch optimization but differs by using a joint expected utility method instead of a greedy strategy. Additionally, our approach involves on-the-fly optimization with system evolution, leading to a novel time-constrained problem addressed by an adaptive batch BO algorithm with search space reduction. \citet{folch2022snake, folch2024transition} also consider optimization under movement (initial condition) constraints.

\textbf{Few-Shot Bayesian Optimization}  A significant amount of recent work has focused on meta-learning stochastic processes, to be used as alternatives to GP for BO. Within the (Conditional) Neural Process framework ((C)NP) \citep{garnelo2018conditional, garnelo2018neural}, Transformer Neural Processes (TNP) \citep{nguyen2022transformer} have demonstrated robust uncertainty quantification capabilities for BO. \cite{dutordoir2023neural} introduced a diffusion model-based Neural Process, providing a novel framework for modeling stochastic processes which enables joint sampling and inverse design functionality. Similar to TNP, \cite{muller2021transformers} approached the meta-learning problem as Bayesian inference, subsequently integrating a BO framework into this model \citep{muller2023pfns4bo}. We note that while a majority of the aforementioned approaches lead to promising empirical performance on BO for regular black-box functions, their backend models (if there are any) are mainly motivated by a drop-in replacement of GP or non-temporal stochastic process. A meta-learned BO method specifically for dynamical systems is, to our best of knowledge, not revealed yet.

\section{Experiments}\label{Sec: exp}
This section conducts experiments on meta-learning and few-shot BO for dynamical systems. All models are implemented using $\texttt{Flax}$ \citep{flax2020github} and are open source, available in:  \url{https://github.com/TsingQAQ/SANODEP}. The optimization framework is based on \texttt{Trieste} \citep{picheny2023trieste}. 
\subsection{Modeling Comparison} \label{Sec:train_model_comp}
We evaluate model performance comparisons on meta-learning diverse ODE system distributions with a twofold purpose. First, we investigate whether leveraging the system information can help modeling. We also investigate whether the inductive bias could bring performance benefits.

\textbf{Baseline}: To see if adding system information helps with the modeling, we compare SANODEP against NODEP under its original testing scenario (interpolation). For the second purpose, we compare SANODEP with Neural Processes (NP) \citep{garnelo2018neural}. As mentioned in Section \ref{sec:rel_work}, we were not able to compare with the meta ODE models of \cite{jiang2022sequential} as it is not been able to be applied on irregular time series. We also provide GP results for reference. For models that do not explicitly incorporate the ODE in their structure, we use the trajectory's initial condition augmented with time as its input. 


\textbf{Training} Following  \cite{norcliffe2021neural},  we treat $\mathcal{F}$ as a parametric  function of a specific \textit{kinetic model} with stochasticity induced by model parameter distributions $P$. We refer to Appendix.~\ref{sec: training_data_describe} for the model formulations, its parameter distribution $P$'s form, and the training epochs. Interpolation-only experiments use SANODEP trained using Eq.~\ref{Eq: multi_scenario_loss_fn}, setting $\lambda=0$, and NODEP trained using Eq.~\ref{Eq: loss_func_of_NODEP}. For the second experimental purpose, we train all models using Eq.~\ref{Eq: multi_scenario_loss_fn} setting $\lambda = 0$ for interpolation only training and $\lambda=0.5$ for bi-scenario training, with 5 different random seeds for model initialization.  

\textbf{Evaluation}: 
We evaluate the model performance on target data generated from systems sampled from the same $\mathcal{F}$, which is known as \textit{in-distribution} generalization evaluation. Excluding GP, each model was evaluated on $10^4$ random systems, each consisting of 100 trajectories to predict in a minibatch fashion. As it is computationally infeasible to evaluate GPs on the same scale, we used a random subset of the test set, 5,000 trajectories. Appendix.~\ref{App: learn_and_pred} provides full data generation settings. Following \cite{norcliffe2021neural}, we assess model performance in terms of Mean-Squared Error by conducting a sequential evaluation with varying numbers of context trajectories, as illustrated in Fig.~\ref{fig:sequential_model_comparison} (and  Fig.~\ref{fig:seq_model_eval_nll_with_gp} in Appendix~\ref{Sec:add_res} for Negative-log-Likelihood). We also provide the model prediction illustration in Appendix~\ref{Sec:model_pred_compare} and the averaged-context-trajectory-size performance results in Table~\ref{tab:mse_all_model_comparison} and Table~\ref{tab:nll_all_model_comparison} in Appendix~\ref{Sec:add_res}.

    \begin{table}[h]
    \captionsetup{font=scriptsize}
    \caption{Model comparison on \textbf{interpolation} tasks (mean-squared-error $\times 10^{-2}$) for a range of dynamical systems.}
    \centering
    \begin{adjustbox}{max width=\textwidth}
    \begin{tabular}{llccccc}
      \toprule
      Models & Lotka-Volterra ($2d$) & Brusselator ($2d$) & Selkov ($2d$) & SIR ($3d$) & Lotka-Volterra ($3d$) & SIRD ($4d$) \\
      \midrule
          NODEP-$\lambda=0$ & $38.9 \pm 0.52$ & $15.5 \pm 0.379$ & $7.12 \pm 0.56$ & $367.5 \pm 58.0$ & $26.8 \pm 1.53$ & $188.1 \pm 21.8$ \\
      SANODEP-$\lambda=0$ & $\bm{32.1} \pm \bm{0.55}$ & $\phantom{6}\boldsymbol{8.7 \pm 1.11}$ & $\boldsymbol{0.52 \pm 0.03}$ & $\boldsymbol{254.3 \pm 60.3}$ & $\boldsymbol{25.1 \pm 1.85}$ & $\boldsymbol{137.2 \pm 12.8}$ \\

      \bottomrule
    \end{tabular}
    \end{adjustbox}
    \label{tab:mse_interp_comparison_nodep_sanodep}
    \end{table}

\textbf{Does System Awareness Improve Predictive Performance?}  Table.~\ref{tab:mse_interp_comparison_nodep_sanodep} demonstrates that SANODEP is better than NODEP in all problems, indicating the benefit of capturing system information. 


\begin{figure}[h]
    \centering
    \includegraphics[width=1.0\linewidth]{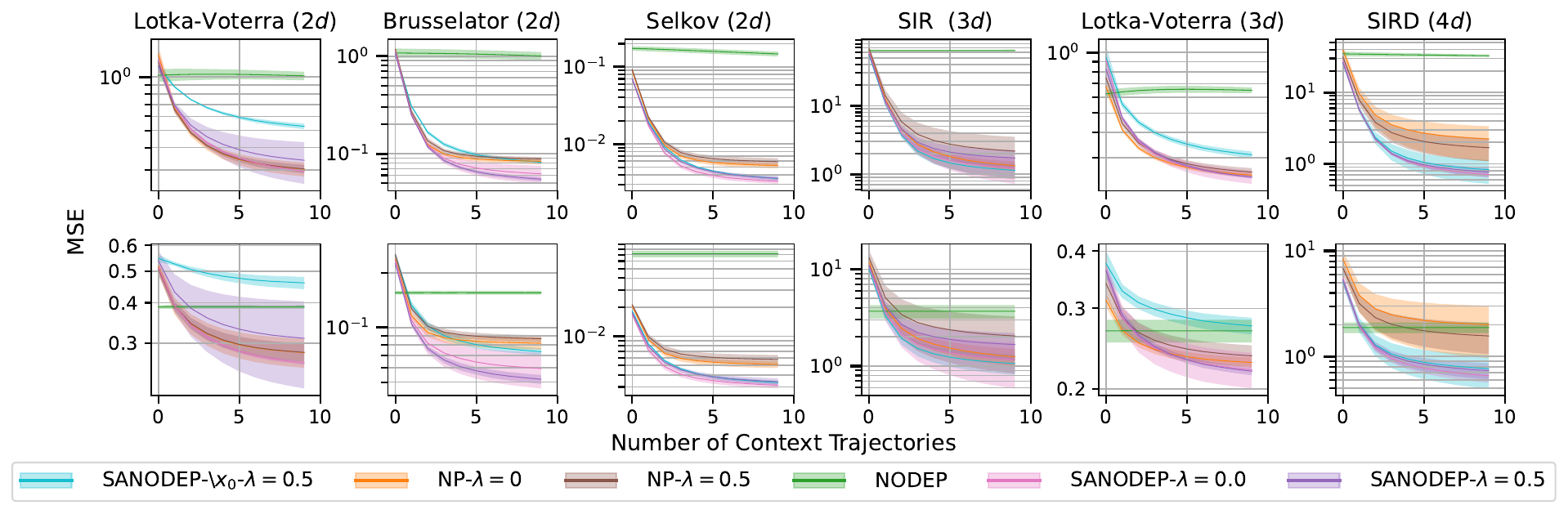}
    \caption{Model evaluation performance comparison on a different number of context trajectories. The first row corresponds to forecasting performance (prediction with only the initial condition known), and the second row represents an interpolating setting. It can be seen that SANODEP is either on-par or marginally better than NP for most problems, and the initial condition augmented encoder-based model variant provides more robust performance across problems. }
    \label{fig:sequential_model_comparison}
\end{figure}

\textbf{Does Incorporating Temporal Information Enhance Model Performance?} Illustrated in Fig.~\ref{fig:sequential_model_comparison}, comparing with NP, except for the Lotka-Voterra problem where the SANODEP trained with mixing probability $\lambda=0.5$ shows larger predictive variance, and Lotka-Voterra ($3d$) where NP demonstrates better performance when context trajectory numbers are small, for the rest of the problems, SANODEP is either on par or noticeably better than NP irrespective with the mixing probability. Indicating that the incorporation of right inductive bias has the potential to help modeling.

\textbf{Does Augmenting Initial Conditions Improve Modeling?} We further evaluate the performance of SANODEP by comparing models with and without initial condition augmentation, as depicted in Fig.~\ref{fig:sequential_model_comparison} (labeled as SANODEP-$\backslash x_0$ for without augmentation). The results intriguingly show that even for first order systems, models augmented with initial conditions exhibit greater robustness across various systems. This demonstrates the empirical benefits of our context aggregation approach.  

\textbf{How Do Meta-Learned Models Compare with Non-Meta-Learned Models?} We provide the sequential model evaluation plot in Fig.~\ref{fig:seq_model_eval_mse_with_gp} including GPs in Appendix~\ref{Sec:add_res}. As expected, the meta-learned models show superior accuracy when the number of trajectories is small (as we have also illustrated in Fig.~\ref{fig:sequential_opt_illu}), aligning with the problem setting of this study. We also remark that GP can quickly overtake when the context trajetcory number is abundant, since the Neural Process typed model is known to suffer from \textit{under fit} when the data is abundant.

\subsection{Few-Shot Bayesian Optimization} \label{sec: meta_bo_exp}
\begin{figure}[h!]
\centering
\includegraphics[width=1.0\textwidth]{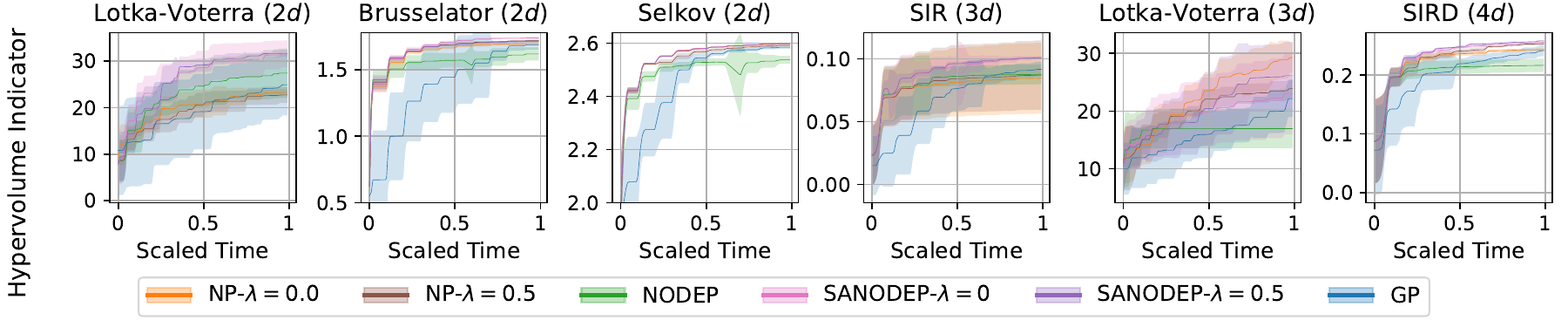}
    \caption{Comparison of few-shot BO performances in terms of mean $\pm$ standard deviation. Except for the Lotka-Voterra ($3d$) problem, SANODEP-based few-shot BO demonstrates competitive performance on all the rest of the problems. }
    \label{fig:meta_bo_comparison}
\end{figure}
\textbf{Baselines}: We conduct few-shot BO using the meta-learned models (SANODEP, NODEP, NP) trained in the previous section using our optimization framework (Algorithm \ref{alg: opt_framework}). For reference, we provide GP-based BO (GP-BO) using the same framework. 

\textbf{Optimization}: We optimize dynamical system realizations sampled from $\mathcal{F}$. The parameter settings, the objective function definitions and performance indicator calculation are provided in Table \ref{tab:optimization_problem_formulations}. For simplicity, in each optimization problem, we assume the maximum number of observations per trajectory is $\Delta t = \frac{t_{max} - t_0}{10}$. We start with one randomly sampled trajectory with uniformly spaced observations and have an additional budget to query 10 trajectories. Each optimization is repeated 10 times with different random seeds.
\newline\textbf{Results} are reported in terms of the scaled experimental time versus the \textit{hypervolume indicator}, with the reference point also provided in Table \ref{tab:optimization_problem_formulations} (we also report the results of PI-SANODEP that will be introduced in next section.). 

\textbf{Does Meta-learning Helps Few-Shot Optimization in Dynamical Systems} We highlight the merits of few-shot BO, as it consistently demonstrates significant convergence speed improvements during the early stages of BO. 

\textbf{Does Dynamical System Informed Model Behaves Better than Standard Model for Optimization} Exept for Lotka-Voterra $(3d)$, SANODEP demonstrate better performance compared with Neural Processes. Again validate the potential of a deliberate consideration of temperal information. 

\textbf{Does Weight Training Objective help Optimization}: For SANODEP, except in the case of the Brusselator, combining training with bi-scenario losses shows either comparable or slightly better performance than training solely in the interpolation setting. This suggests that careful design of the loss function for optimization purposes may offer modest benefits.

Finally, we provide the time profile of different models in the Appendix.~\ref{App: complexity_analysis}, in general, as the involvement of simulation in inference process, SANODEP and NODEP takes more time than NP and GP when perform optimization.

\section{On the strength of the prior information, $\mathcal{F}$}\label{Sec: PriorInfo}

Previously, we assumed that the prior information, the kinetic model, was known to formulate the task distribution. In practice, the level of prior information can vary across problems. In order to better understand how the different levels of prior information can impact the model performance in meta-learning of dynamical systems. We conduct some preliminary investigation on both strengthened and weakened priors in this section. 

\subsection{Strong Prior information: Physics-Informed SANODEP}
When $\boldsymbol{f}$ is a realization of $\mathcal{F}$ and we know the parametric form of the kinetic model, we can benefit from explicitly incorporating the dynamical system within the model, instead of relying on a latent ODE model structure that is agnostic to the dynamical system, we can easily integrate this kinetic form in SANODEP's model structure. We call this model variant Physical Informed (PI)-SANODEP.

As a concrete example, consider $\boldsymbol{f}$ as a realization of the Lotka-Volterra (LV) problem with unknown parameters. Under the SANODEP framework, we can replace $\boldsymbol{f}_{nn}$ in Eq.~\ref{Eq: nodep_solve} with the following physics-informed form:
\begin{equation}
\frac{d}{dt} \begin{bmatrix} 
x_1 \\ 
x_2 
\end{bmatrix} 
= \begin{bmatrix} 
\alpha_{\theta} x_1 - \beta_{\theta} x_1 x_2 \\ 
\delta_{\theta} x_1 x_2 - \gamma_{\theta} x_2 
\end{bmatrix}
\label{Eq: LV_param_form}
\end{equation}
In this form, $\boldsymbol{u}_{sys} = [\alpha_{\theta}, \beta_{\theta}, \delta_{\theta}, \gamma_{\theta}]$, and $\boldsymbol{l}(t_0) = \boldsymbol{x}_0$ and $\boldsymbol{l}(t)=\boldsymbol{x}(t)$\footnote{In PI-SANODEP, since the ODE is explicitly defined as the kinetic form (e.g., Eq. \ref{Eq: LV_param_form}) the decoder is only used to predict variance.}. Since the parameters of the LV system are strictly positive.  We use log-normal distributions for the conditional prior and variational posterior for $\boldsymbol{u}_{sys}$ in the encoder. By additionally incorporating a likelihood regularization term in Eq.~\ref{Eq: ELBO_form} (Appendix.~\ref{App: detailes_of_pi_sanodep}), as we have spotlighted in 
Fig.~\ref{fig:pi_sanodep} and exhaustively demonstrated in Fig.~\ref{fig:pi_sanodep_all_rest}, we found PI-SANODEP can significantly enhance the model performance.


\begin{figure}[h]
    \centering
\includegraphics[width=\textwidth]{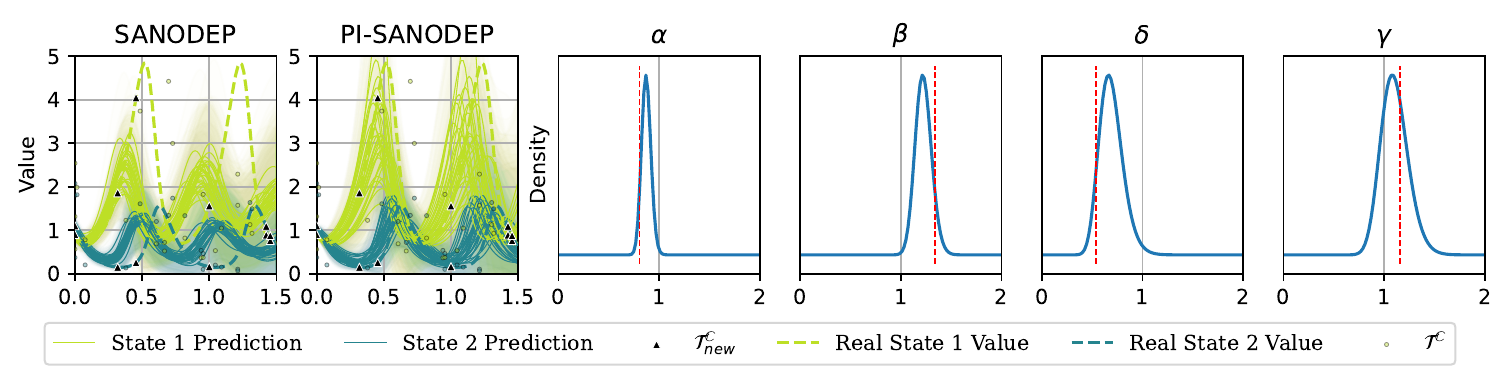}
    \caption{Model Comparison of SANODEP vs Physics Informed (PI) SANODEP, PI-SANODEP provides remarkable better prediction accuracy. In addition, PI-SANODEP provides a reasonable estimation (\textcolor{NavyBlue}{curve}) on system parameters (\textcolor{red}{dashed line}).}
    \label{fig:pi_sanodep}
\end{figure}


\textbf{Meta-learn for Concurrent Prediction and Parameter Estimation}: 
Benefit from the amortized variational inference through the encoder, as we have additionally outlined in Fig.~\ref{fig:pi_sanodep}, we note that PI-SANODEP also demonstrates the potential of concurrent few-shot prediction together with parameter estimation of dynamical systems. Compared with Bayesian inference-based parameter estimation strategies (e.g. \cite{wu2024data}), which may take minutes to perform estimation, PI-SANODEP shares the same advantage as \textit{simulation-based inference} \citep{cranmer2020frontier} and can estimate almost instantly since it only requires a forward pass of the encoder. We report the parameter estimation on the rest of the problems in Fig.~\ref{fig:pi_sanodep_all_rest} in Appendix.~\ref{App: detailes_of_pi_sanodep} and leave an extensive investigation of meta-learning-based parameter estimation as future work. 

\subsection{Weaker Prior of $\mathcal{F}$: Cross-domain generalization possibility} 
What if we know nothing about $\mathcal{F}$? From meta-learning perspective, to still frame the problem as in-distribution evaluation, this demands an extremely flexible prior for $\mathcal{F}$ to be utilizable in the target dynamical system. To our best of knowledge, meta-learning dynamical systems to be leveraged in cross-domain scenarios is still an open direction. Certain approaches have resorted to incorporating additional dynamical system properties  (e.g., energy conserving system \cite{song2023towards}) which limit the scope of system types. We propose and leverage a vector valued GP prior for $\mathcal{F}$ as a flexible task distribution. We show in Appendix.~\ref{app: vec_gp_prior} that it is able generate diverse dynamical systems. We also demonstrate in Fig.~\ref{fig:gp_ode},\ref{fig:cross_domain_generalization}  that SANODEP can also be trained on this task distribution albeit at detriment to model fitting capability.



\section{Discussion}
Bayesian Optimization (BO) of unknown dynamical systems is an underexplored area, with existing approaches often relying on unsuitable surrogate models. Addressing this gap, while minimising the number of costly optimization steps, we have developed a novel few-shot BO framework. Our approach extends the Neural ODE Processes (NODEP) into the System-Aware Neural ODE Processes (SANODEP), which meta-learns the system's prior information to enhance the optimization process. Through extensive benchmark experiments, we demonstrated SANODEP's  potential over NODEP. Our results show that SANODEP, equipped with an optimization-driven loss function, offers competitive performance compared to other non-dynamic-aware meta-learning models, and can offer additional possibilties including parameter estimation , which can't be said with other non-dynamic-aware meta-learning models. 
\newline\textbf{Limitations}: A key limitation of SANODEP, inherited from Neural Processes, is the issue of underfitting. This necessitates a trade-off between few-shot functionality and model fitting capability. Additionally, while strong prior information enhances performance in scenarios with well-defined priors, the model performance is limited by a lack of prior information about the dynamical system. Although our proposed task distribution shows promise for cross-domain generalization, SANODEP still faces a trade-off between model fitting and prior flexibility, indicating that it cannot be used out of the box as a generic probabilistic model to optimize arbitrary dynamical system without knowing its kinetic form.  
\newline\textbf{Future investigations}: Future work will aim to develop more adaptable conditional distributions to better capture a wider range of dynamical systems. This will address the trade-off between model performance and the flexibility needed to handle diverse and complex dynamical systems without relying heavily on strong priors.


\bibliographystyle{plainnat}
\bibliography{references}   

\begin{thebibliography}{55}
\providecommand{\natexlab}[1]{#1}
\providecommand{\url}[1]{\texttt{#1}}
\expandafter\ifx\csname urlstyle\endcsname\relax
  \providecommand{\doi}[1]{doi: #1}\else
  \providecommand{\doi}{doi: \begingroup \urlstyle{rm}\Url}\fi

\bibitem[Auzina et~al.(2024)Auzina, Y{\i}ld{\i}z, Magliacane, Bethge, and Gavves]{auzina2024modulated}
Ilze~Amanda Auzina, {\c{C}}a{\u{g}}atay Y{\i}ld{\i}z, Sara Magliacane, Matthias Bethge, and Efstratios Gavves.
\newblock Modulated neural {ODE}s.
\newblock \emph{NeurIPS}, 36, 2024.

\bibitem[Chen et~al.(2018)Chen, Rubanova, Bettencourt, and Duvenaud]{chen2018neural}
Ricky~TQ Chen, Yulia Rubanova, Jesse Bettencourt, and David~K Duvenaud.
\newblock Neural ordinary differential equations.
\newblock In \emph{NeurIPS}, volume~31, 2018.

\bibitem[Chen et~al.(2024)Chen, Ren, Wang, Fang, Sun, and Li]{chen2024contiformer}
Yuqi Chen, Kan Ren, Yansen Wang, Yuchen Fang, Weiwei Sun, and Dongsheng Li.
\newblock {ContiFormer}: Continuous-time transformer for irregular time series modeling.
\newblock In \emph{NeurIPS}, volume~36, 2024.

\bibitem[Cranmer et~al.(2020)Cranmer, Brehmer, and Louppe]{cranmer2020frontier}
Kyle Cranmer, Johann Brehmer, and Gilles Louppe.
\newblock The frontier of simulation-based inference.
\newblock \emph{Proceedings of the National Academy of Sciences}, 117\penalty0 (48):\penalty0 30055--30062, 2020.

\bibitem[Dandekar et~al.(2020)Dandekar, Chung, Dixit, Tarek, Garcia-Valadez, Vemula, and Rackauckas]{dandekar2020bayesian}
Raj Dandekar, Karen Chung, Vaibhav Dixit, Mohamed Tarek, Aslan Garcia-Valadez, Krishna~Vishal Vemula, and Chris Rackauckas.
\newblock Bayesian neural ordinary differential equations.
\newblock \emph{arXiv preprint arXiv:2012.07244}, 2020.

\bibitem[Dang et~al.(2023)Dang, Han, Xia, Bondareva, Siegele-Brown, Chauhan, Grammenos, Spathis, Cicuta, and Mascolo]{dang2023conditional}
Ting Dang, Jing Han, Tong Xia, Erika Bondareva, Chlo{\"e} Siegele-Brown, Jagmohan Chauhan, Andreas Grammenos, Dimitris Spathis, Pietro Cicuta, and Cecilia Mascolo.
\newblock Conditional neural {ODE} processes for individual disease progression forecasting: A case study on {COVID}-19.
\newblock In \emph{29th ACM SIGKDD Conference on Knowledge Discovery and Data Mining}, pages 3914--3925, 2023.

\bibitem[Daulton et~al.(2020)Daulton, Balandat, and Bakshy]{daulton2020differentiable}
Samuel Daulton, Maximilian Balandat, and Eytan Bakshy.
\newblock Differentiable expected hypervolume improvement for parallel multi-objective {B}ayesian optimization.
\newblock In \emph{NeurIPS}, volume~33, pages 9851--9864, 2020.

\bibitem[Dupont et~al.(2019)Dupont, Doucet, and Teh]{dupont2019augmented}
Emilien Dupont, Arnaud Doucet, and Yee~Whye Teh.
\newblock Augmented neural {ODE}s.
\newblock In \emph{NeurIPS}, volume~32, 2019.

\bibitem[Dutordoir et~al.(2023)Dutordoir, Saul, Ghahramani, and Simpson]{dutordoir2023neural}
Vincent Dutordoir, Alan Saul, Zoubin Ghahramani, and Fergus Simpson.
\newblock Neural diffusion processes.
\newblock In \emph{ICML}, pages 8990--9012, 2023.

\bibitem[Ensinger et~al.(2024)Ensinger, Tagliapietra, Ziesche, and Trimpe]{ensinger2024exact}
Katharina Ensinger, Nicholas Tagliapietra, Sebastian Ziesche, and Sebastian Trimpe.
\newblock Exact inference for continuous-time {G}aussian process dynamics.
\newblock In \emph{AAAI}, volume~38, pages 11883--11891, 2024.

\bibitem[Finn et~al.(2017)Finn, Abbeel, and Levine]{finn2017model}
Chelsea Finn, Pieter Abbeel, and Sergey Levine.
\newblock Model-agnostic meta-learning for fast adaptation of deep networks.
\newblock In \emph{ICML}, pages 1126--1135, 2017.

\bibitem[Folch et~al.(2022)Folch, Zhang, Lee, Shafei, Walz, Tsay, van~der Wilk, and Misener]{folch2022snake}
Jose~Pablo Folch, Shiqiang Zhang, Robert Lee, Behrang Shafei, David Walz, Calvin Tsay, Mark van~der Wilk, and Ruth Misener.
\newblock {SnAKe}: Bayesian optimization with pathwise exploration.
\newblock In \emph{NeurIPS}, volume~35, pages 35226--35239, 2022.

\bibitem[Folch et~al.(2024)Folch, Tsay, Lee, Shafei, Ormaniec, Krause, van~der Wilk, Misener, and Mutn{\`y}]{folch2024transition}
Jose~Pablo Folch, Calvin Tsay, Robert~M Lee, Behrang Shafei, Weronika Ormaniec, Andreas Krause, Mark van~der Wilk, Ruth Misener, and Mojm{\'\i}r Mutn{\`y}.
\newblock Transition constrained {B}ayesian optimization via {M}arkov decision processes.
\newblock In \emph{NeurIPS}, volume~37, 2024.

\bibitem[Foong et~al.(2020)Foong, Bruinsma, Gordon, Dubois, Requeima, and Turner]{foong2020meta}
Andrew Foong, Wessel Bruinsma, Jonathan Gordon, Yann Dubois, James Requeima, and Richard Turner.
\newblock Meta-learning stationary stochastic process prediction with convolutional neural processes.
\newblock In \emph{NeurIPS}, volume~33, pages 8284--8295, 2020.

\bibitem[Frazier(2018)]{frazier2018tutorial}
Peter~I Frazier.
\newblock A tutorial on bayesian optimization.
\newblock \emph{arXiv preprint arXiv:1807.02811}, 2018.

\bibitem[Garnelo et~al.(2018{\natexlab{a}})Garnelo, Rosenbaum, Maddison, Ramalho, Saxton, Shanahan, Teh, Rezende, and Eslami]{garnelo2018conditional}
Marta Garnelo, Dan Rosenbaum, Christopher Maddison, Tiago Ramalho, David Saxton, Murray Shanahan, Yee~Whye Teh, Danilo Rezende, and SM~Ali Eslami.
\newblock Conditional neural processes.
\newblock In \emph{ICML}, pages 1704--1713, 2018{\natexlab{a}}.

\bibitem[Garnelo et~al.(2018{\natexlab{b}})Garnelo, Schwarz, Rosenbaum, Viola, Rezende, Eslami, and Teh]{garnelo2018neural}
Marta Garnelo, Jonathan Schwarz, Dan Rosenbaum, Fabio Viola, Danilo~J Rezende, SM~Eslami, and Yee~Whye Teh.
\newblock Neural processes.
\newblock \emph{arXiv preprint arXiv:1807.01622}, 2018{\natexlab{b}}.

\bibitem[Garnett(2023)]{garnett2023bayesian}
Roman Garnett.
\newblock \emph{Bayesian optimization}.
\newblock Cambridge University Press, 2023.

\bibitem[Gonz{\'a}lez et~al.(2016)Gonz{\'a}lez, Osborne, and Lawrence]{gonzalez2016glasses}
Javier Gonz{\'a}lez, Michael Osborne, and Neil Lawrence.
\newblock {GLASSES}: Relieving the myopia of {B}ayesian optimisation.
\newblock In \emph{AISTATS}, pages 790--799, 2016.

\bibitem[Heek et~al.(2023)Heek, Levskaya, Oliver, Ritter, Rondepierre, Steiner, and van {Z}ee]{flax2020github}
Jonathan Heek, Anselm Levskaya, Avital Oliver, Marvin Ritter, Bertrand Rondepierre, Andreas Steiner, and Marc van {Z}ee.
\newblock {F}lax: A neural network library and ecosystem for {JAX}, 2023.
\newblock URL \url{http://github.com/google/flax}.

\bibitem[Hegde et~al.(2022)Hegde, Y{\i}ld{\i}z, L{\"a}hdesm{\"a}ki, Kaski, and Heinonen]{hegde2022variational}
Pashupati Hegde, {\c{C}}a{\u{g}}atay Y{\i}ld{\i}z, Harri L{\"a}hdesm{\"a}ki, Samuel Kaski, and Markus Heinonen.
\newblock Variational multiple shooting for {B}ayesian {ODE}s with {G}aussian processes.
\newblock In \emph{UAI}, pages 790--799, 2022.

\bibitem[Heinonen et~al.(2018)Heinonen, Yildiz, Mannerstr{\"o}m, Intosalmi, and L{\"a}hdesm{\"a}ki]{heinonen2018learning}
Markus Heinonen, Cagatay Yildiz, Henrik Mannerstr{\"o}m, Jukka Intosalmi, and Harri L{\"a}hdesm{\"a}ki.
\newblock Learning unknown {ODE} models with {G}aussian processes.
\newblock In \emph{ICML}, pages 1959--1968, 2018.

\bibitem[Jiang et~al.(2020)Jiang, Chai, Gonzalez, and Garnett]{jiang2020binoculars}
Shali Jiang, Henry Chai, Javier Gonzalez, and Roman Garnett.
\newblock Binoculars for efficient, nonmyopic sequential experimental design.
\newblock In \emph{ICML}, pages 4794--4803, 2020.

\bibitem[Jiang et~al.(2023)Jiang, Missel, Li, and Wang]{jiang2022sequential}
Xiajun Jiang, Ryan Missel, Zhiyuan Li, and Linwei Wang.
\newblock Sequential latent variable models for few-shot high-dimensional time-series forecasting.
\newblock In \emph{ICLR}, volume~11, 2023.

\bibitem[Kidger(2022)]{kidger2022neural}
Patrick Kidger.
\newblock On neural differential equations.
\newblock \emph{arXiv preprint arXiv:2202.02435}, 2022.

\bibitem[Kim et~al.(2019)Kim, Mnih, Schwarz, Garnelo, Eslami, Rosenbaum, Vinyals, and Teh]{kim2019attentive}
Hyunjik Kim, Andriy Mnih, Jonathan Schwarz, Marta Garnelo, Ali Eslami, Dan Rosenbaum, Oriol Vinyals, and Yee~Whye Teh.
\newblock Attentive neural processes.
\newblock In \emph{ICLR}, 2019.

\bibitem[Li et~al.(2023)Li, Wang, Roychowdhury, and Jawed]{li2023metalearning}
Qiaofeng Li, Tianyi Wang, Vwani Roychowdhury, and Mohammad~Khalid Jawed.
\newblock Metalearning generalizable dynamics from trajectories.
\newblock \emph{Physical Review Letters}, 131\penalty0 (6):\penalty0 067301, 2023.

\bibitem[Lindel{\"o}f(1894)]{lindelof1894application}
Ernest Lindel{\"o}f.
\newblock Sur l’application de la m{\'e}thode des approximations successives aux {\'e}quations diff{\'e}rentielles ordinaires du premier ordre.
\newblock \emph{Comptes rendus hebdomadaires des s{\'e}ances de l’Acad{\'e}mie des sciences}, 116\penalty0 (3):\penalty0 454--457, 1894.

\bibitem[M{\"u}ller et~al.(2021)M{\"u}ller, Hollmann, Arango, Grabocka, and Hutter]{muller2021transformers}
Samuel M{\"u}ller, Noah Hollmann, Sebastian~Pineda Arango, Josif Grabocka, and Frank Hutter.
\newblock Transformers can do {B}ayesian inference.
\newblock In \emph{ICLR}, 2021.

\bibitem[M{\"u}ller et~al.(2023)M{\"u}ller, Feurer, Hollmann, and Hutter]{muller2023pfns4bo}
Samuel M{\"u}ller, Matthias Feurer, Noah Hollmann, and Frank Hutter.
\newblock {PFNs4BO}: In-context learning for {B}ayesian optimization.
\newblock In \emph{ICML}, pages 25444--25470, 2023.

\bibitem[Nguyen and Grover(2022)]{nguyen2022transformer}
Tung Nguyen and Aditya Grover.
\newblock Transformer neural processes: Uncertainty-aware meta learning via sequence modeling.
\newblock In \emph{ICML}, pages 16569--16594, 2022.

\bibitem[Norcliffe et~al.(2020)Norcliffe, Bodnar, Day, Simidjievski, and Li{\`o}]{norcliffe2020second}
Alexander Norcliffe, Cristian Bodnar, Ben Day, Nikola Simidjievski, and Pietro Li{\`o}.
\newblock On second order behaviour in augmented neural {ODE}s.
\newblock In \emph{NeurIPS}, volume~33, pages 5911--5921, 2020.

\bibitem[Norcliffe et~al.(2021)Norcliffe, Bodnar, Day, Moss, and Li{\`o}]{norcliffe2021neural}
Alexander Norcliffe, Cristian Bodnar, Ben Day, Jacob Moss, and Pietro Li{\`o}.
\newblock Neural {ODE} processes.
\newblock In \emph{ICLR}, 2021.

\bibitem[Onken and Ruthotto(2020)]{onken2020discretize}
Derek Onken and Lars Ruthotto.
\newblock Discretize-optimize vs. optimize-discretize for time-series regression and continuous normalizing flows.
\newblock \emph{arXiv preprint arXiv:2005.13420}, 2020.

\bibitem[Ott et~al.(2023)Ott, Tiemann, and Hennig]{ott2023uncertainty}
Katharina Ott, Michael Tiemann, and Philipp Hennig.
\newblock Uncertainty and structure in neural ordinary differential equations.
\newblock \emph{arXiv preprint arXiv:2305.13290}, 2023.

\bibitem[Paulson and Tsay(2024)]{paulson2024bayesian}
Joel~A Paulson and Calvin Tsay.
\newblock Bayesian optimization as a flexible and efficient design framework for sustainable process systems.
\newblock \emph{arXiv preprint arXiv:2401.16373}, 2024.

\bibitem[Picheny et~al.(2023)Picheny, Berkeley, Moss, Stojic, Granta, Ober, Artemev, Ghani, Goodall, Paleyes, et~al.]{picheny2023trieste}
Victor Picheny, Joel Berkeley, Henry~B Moss, Hrvoje Stojic, Uri Granta, Sebastian~W Ober, Artem Artemev, Khurram Ghani, Alexander Goodall, Andrei Paleyes, et~al.
\newblock Trieste: Efficiently exploring the depths of black-box functions with {TensorFlow}.
\newblock \emph{arXiv preprint arXiv:2302.08436}, 2023.

\bibitem[Pinder and Dodd(2022)]{Pinder2022}
Thomas Pinder and Daniel Dodd.
\newblock {GPJax}: A {G}aussian process framework in {JAX}.
\newblock \emph{Journal of Open Source Software}, 7\penalty0 (75):\penalty0 4455, 2022.
\newblock \doi{10.21105/joss.04455}.
\newblock URL \url{https://doi.org/10.21105/joss.04455}.

\bibitem[Prigogine and Lefever(1968)]{prigogine1968symmetry}
Ilya Prigogine and Ren{\'e} Lefever.
\newblock Symmetry breaking instabilities in dissipative systems. ii.
\newblock \emph{The Journal of Chemical Physics}, 48\penalty0 (4):\penalty0 1695--1700, 1968.

\bibitem[Qing et~al.(2022)Qing, Dhaene, and Couckuyt]{qing2022spectral}
Jixiang Qing, Tom Dhaene, and Ivo Couckuyt.
\newblock Spectral representation of robustness measures for optimization under input uncertainty.
\newblock In \emph{ICML}, volume~38, 2022.

\bibitem[Qing et~al.(2023)Qing, Moss, Dhaene, and Couckuyt]{qing2023pf}
Jixiang Qing, Henry~B Moss, Tom Dhaene, and Ivo Couckuyt.
\newblock $\{$PF$\}$ $^2$es: Parallel feasible pareto frontier entropy search for multi-objective {B}ayesian optimization.
\newblock In \emph{AISTATS}, 2023.

\bibitem[Rahimi and Recht(2007)]{rahimi2007random}
Ali Rahimi and Benjamin Recht.
\newblock Random features for large-scale kernel machines.
\newblock \emph{NIPS}, 20, 2007.

\bibitem[Schilter et~al.(2024)Schilter, Gutierrez, Folkmann, Castrogiovanni, Garc{\'\i}a-Dur{\'a}n, Zipoli, Roch, and Laino]{schilter2024combining}
Oliver Schilter, Daniel~Pacheco Gutierrez, Linnea~M Folkmann, Alessandro Castrogiovanni, Alberto Garc{\'\i}a-Dur{\'a}n, Federico Zipoli, Lo{\"\i}c~M Roch, and Teodoro Laino.
\newblock Combining {B}ayesian optimization and automation to simultaneously optimize reaction conditions and routes.
\newblock \emph{Chemical Science}, 2024.

\bibitem[Schirmer et~al.(2022)Schirmer, Eltayeb, Lessmann, and Rudolph]{schirmer2022modeling}
Mona Schirmer, Mazin Eltayeb, Stefan Lessmann, and Maja Rudolph.
\newblock Modeling irregular time series with continuous recurrent units.
\newblock In \emph{ICML}, pages 19388--19405, 2022.

\bibitem[Schoepfer et~al.(2024)Schoepfer, Weinreich, Laplaza, Waser, and Corminboeuf]{schoepfer2024cost}
Alexandre Schoepfer, Jan Weinreich, Ruben Laplaza, Jerome Waser, and Clemence Corminboeuf.
\newblock Cost-informed {B}ayesian reaction optimization.
\newblock 2024.
\newblock \doi{10.26434/chemrxiv-2024-44ft2}.

\bibitem[Shukla and Marlin(2021)]{shukla2021multi}
Satya~Narayan Shukla and Benjamin~M Marlin.
\newblock Multi-time attention networks for irregularly sampled time series.
\newblock In \emph{ICLR}, 2021.

\bibitem[Singh et~al.(2019)Singh, Yoon, Son, and Ahn]{singh2019sequential}
Gautam Singh, Jaesik Yoon, Youngsung Son, and Sungjin Ahn.
\newblock Sequential neural processes.
\newblock In \emph{NeurIPS}, volume~32, 2019.

\bibitem[Song and Jeong(2023)]{song2023towards}
Yeongwoo Song and Hawoong Jeong.
\newblock Towards cross domain generalization of {H}amiltonian representation via meta learning.
\newblock In \emph{ICLR}, volume~12, 2023.

\bibitem[Taylor et~al.(2023)Taylor, Pomberger, Felton, Grainger, Barecka, Chamberlain, Bourne, Johnson, and Lapkin]{taylor2023brief}
Connor~J Taylor, Alexander Pomberger, Kobi~C Felton, Rachel Grainger, Magda Barecka, Thomas~W Chamberlain, Richard~A Bourne, Christopher~N Johnson, and Alexei~A Lapkin.
\newblock A brief introduction to chemical reaction optimization.
\newblock \emph{Chemical Reviews}, 123\penalty0 (6):\penalty0 3089--3126, 2023.

\bibitem[Thebelt et~al.(2022)Thebelt, Wiebe, Kronqvist, Tsay, and Misener]{thebelt2022maximizing}
Alexander Thebelt, Johannes Wiebe, Jan Kronqvist, Calvin Tsay, and Ruth Misener.
\newblock Maximizing information from chemical engineering data sets: Applications to machine learning.
\newblock \emph{Chemical Engineering Science}, 252:\penalty0 117469, 2022.

\bibitem[Vellanki et~al.(2017)Vellanki, Rana, Gupta, Rubin, Sutti, Dorin, Height, Sanders, and Venkatesh]{vellanki2017process}
Pratibha Vellanki, Santu Rana, Sunil Gupta, David Rubin, Alessandra Sutti, Thomas Dorin, Murray Height, Paul Sanders, and Svetha Venkatesh.
\newblock Process-constrained batch {B}ayesian optimisation.
\newblock \emph{NeurIPS}, 30, 2017.

\bibitem[Vinyals et~al.(2016)Vinyals, Blundell, Lillicrap, Wierstra, et~al.]{vinyals2016matching}
Oriol Vinyals, Charles Blundell, Timothy Lillicrap, Daan Wierstra, et~al.
\newblock Matching networks for one shot learning.
\newblock In \emph{NeurIPS}, volume~29, 2016.

\bibitem[Williams and Rasmussen(2006)]{williams2006gaussian}
Christopher~KI Williams and Carl~Edward Rasmussen.
\newblock \emph{Gaussian processes for machine learning}, volume~2.
\newblock MIT press Cambridge, MA, 2006.

\bibitem[Wu and Lysy(2024)]{wu2024data}
Mohan Wu and Martin Lysy.
\newblock Data-adaptive probabilistic likelihood approximation for ordinary differential equations.
\newblock In \emph{AISTATS}, pages 1018--1026, 2024.

\bibitem[Zaheer et~al.(2017)Zaheer, Kottur, Ravanbakhsh, Poczos, Salakhutdinov, and Smola]{zaheer2017deep}
Manzil Zaheer, Satwik Kottur, Siamak Ravanbakhsh, Barnabas Poczos, Russ~R Salakhutdinov, and Alexander~J Smola.
\newblock Deep sets.
\newblock In \emph{NeurIPS}, volume~30, 2017.

\end{thebibliography}
\newpage
\appendix

\part{Appendix} 
\renewcommand{\contentsname}{Appendix Contents} 
\renewcommand\contentsname{whatever}
\parttoc 

\section{Problem Setting and Model Structures} \label{app: pb_illu_nomenclature}
\subsection{Problem Setting Illustrations}
\begin{figure}[h]
    \centering
    \begin{tikzpicture}
        \draw[thick,->] (0,0) -- (6,0) node[anchor=north west] {$t$};
        \draw[thick,->] (0,0) -- (0,4) node[anchor=south east] {$x_0$};
        
        \draw (0.5 cm,1pt) -- (0.5 cm,-3pt) node[anchor=north, font=\scriptsize] {${t_{new}}_{1}^{\mathbb{T}}$};
        \draw (1.5 cm,1pt) -- (1.5 cm,-3pt) node[anchor=north, font=\scriptsize] {${t_{new}}_{2}^{\mathbb{T}}$};
        \draw (2.5 cm,1pt) -- (2.5 cm,-3pt) node[anchor=north, font=\scriptsize] {${t_{new}}_{3}^{\mathbb{T}}$};
        \draw (5.0 cm,1pt) -- (5.0 cm,-3pt) node[anchor=north, font=\scriptsize] {${t_{new}}_{N}^{\mathbb{T}}$};
        \node[align=left] at (3.5 cm, -0.5) {...};
        
        \foreach \x in {0, 1, 2, 3.5, 5} {
            \filldraw[black] (\x, 0.7) circle (1pt);
        }
        
        \foreach \y in {1.1, 1.4, 1.7} {
            \filldraw[black] (2.5, \y) circle (0.5pt);
        }
        
        \foreach \x in {0, 1.1, 2.1, 3.3, 4.8} {
            \filldraw[black] (\x, 2) circle (1pt);
        }
        
        \filldraw[black] (0, 2.7) circle (1pt);
        
        \foreach \x in {0.5, 1.5, 2.5, 5} {
            \node[draw, star, star points=5, star point ratio=2.25, fill=black, inner sep=0.8pt] at (\x, 2.7) {};
        }
        
        \foreach \x in {0, 1.1, 2.1, 3.3, 4.8} {
            \filldraw[black] (\x, 3.5) circle (1pt);
        }
        
        \node[align=left] at (-0.5, 3.5) {$\mathcal{T}_1^\mathbb{C}$};
        \node[align=left] at (-0.5, 2) {$\mathcal{T}_2^\mathbb{C}$};
        \node[align=left] at (-0.5, 2.7) {$\mathcal{T}_{new}^\mathbb{C}$};
        \node[align=left] at (-0.5, 0.7) {$\mathcal{T}_M^\mathbb{C}$};
    \end{tikzpicture}
\caption{Illustration of the initial condition optimization problem within one state variable ($d=1$) ODE system. The optimization involves a forecasting scenario: having observed the context set $\mathbb{C} := \left(\cup_{i=1}^M \mathcal{T}_{i}^{\mathbb{C}} \right) \cup \mathcal{T}_{new}^\mathbb{C}$ (including the \textit{new} trajectory's initial condition, illustrated as \scalebox{0.4}{\textbf{\ding{108}}}), one tries to predict the target state values $\boldsymbol{X}_{new}^{\mathbb{T}} = [{\boldsymbol{x}_{new}}_{1}^{\mathbb{T}}, \ldots, {\boldsymbol{x}_{new}}_{N}^{\mathbb{T}}]$ at target times $\boldsymbol{T}_{new}^\mathbb{T} = [{t_{new}}_1^\mathbb{T}, \ldots, {t_{new}}_N^\mathbb{T}]$ (illustrated as \scalebox{0.75}{\textbf{\ding{72}}}).}
    \label{fig:interpolating_scenario_illustration}
\end{figure}

\subsection{Model Structure}\label{App: model_structure}
The model structure and hyperparameters of SANODEP are summarized as follows:  

\begin{itemize}
    \item \textbf{Initial State Encoder}: $\boldsymbol{r}_{init} = \phi_{init}([t_0, \boldsymbol{x}_0]), \phi_{init}: \mathbb{R}^{d + 1} \rightarrow \mathbb{R}^r$ consists of three dense layers and activation functions in between.  
    \item \textbf{Augmented State Encoder}: $\boldsymbol{r}_i= \phi_r([t_i, \boldsymbol{x}_0, \boldsymbol{x}_i])$, $\phi_r := \mathbb{R}^{2d+1}\rightarrow \mathbb{R}^r $ consists of three dense layers and activation functions in between. 
    \item \textbf{System Context Aggregation}: $\boldsymbol{r}_{sys} = \frac{1}{N}\sum_{i=1}^N \boldsymbol{r}_i$.
    \item \textbf{Context to Hidden Representation }: $\boldsymbol{h}_{sys} = \phi_{sys}(\boldsymbol{r}_{sys})$, : $\boldsymbol{h}_{init} = \phi_{init}(\boldsymbol{r}_{init})$, $\phi_{sys}(\cdot)$ and $\phi_{init}(\cdot)$ are one dense layer followed by a context encoder activation function, $\boldsymbol{h}_{sys} \in \mathbb{R}^h, \boldsymbol{h}_{init} \in \mathbb{R}^h$. 
    \item \textbf{Hidden to Variational posterior of $\boldsymbol{u}_{sys}$}: $q(\boldsymbol{u}_{sys} | \cdot) = \mathcal{N}\left({\phi_{\mu}}_{sys}(\boldsymbol{h}_{sys}), diag\left({\phi_{\sigma}}_{sys}(\boldsymbol{h}_{sys})^2\right) \right)$, ${\phi_{\mu}}_{sys}: \mathbb{R}^h \rightarrow \mathbb{R}^{d_{sys}}$ is a dense layer, ${\phi_{\sigma}}_{sys} :=  \mathbb{R}^h \rightarrow \mathbb{R}^{d_{sys}}$ 
 is defined as $\sigma_{lb} + 0.9 * \text{softplus}(\text{Dense}(\cdot))$ , ${\sigma_{lb}}$ is a hyperparameter.
    \item \textbf{Hidden to Variational posterior of $\boldsymbol{L}_{0}$}: $q({\boldsymbol{L}_0}_{new}^\mathbb{T} | \cdot) = \mathcal{N}\left({\phi_{\mu}}_{init}(\boldsymbol{h}_{init}), {\phi_{\sigma}}_{init}(\boldsymbol{h}_{init})^2\right)$. ${\phi_{\mu}}_{init} := \mathbb{R}^h \rightarrow \mathbb{R}^l$ and ${\phi_{\sigma}}_{init}:\mathbb{R}^h \rightarrow \mathbb{R}^l$ has the same structure as above, but with independent weights.
    \item \textbf{ODE block}: $\phi_{ODE}[\boldsymbol{l}, \boldsymbol{u}_{sys}, t] \rightarrow \boldsymbol{l'}$, $\phi_{ODE} := \mathbb{R}^{d_{sys} + l + 1} \rightarrow \mathbb{R}^l$, $\phi_{ODE}$ is consisted of three dense layers with nonlinear activation functions in between. 
    \item \textbf{Decoder}: $\phi_{dec}[\boldsymbol{l}(t), \boldsymbol{u}_{sys}, t] \rightarrow \boldsymbol{x}(t)$, $\boldsymbol{l} = \mathcal{N}\left({\phi_{\mu}}_{dec}[\boldsymbol{l}(t), \boldsymbol{u}_{sys}, t],\ {\phi_{\sigma}}_{dec}^2[\boldsymbol{l}(t), \boldsymbol{u}_{sys}, t] \right)$, where ${\phi_{\mu}}_{dec}$ and ${\phi_{\sigma}}_{dec}$ has the same structure as variational posterior, but with independent weights.
\end{itemize}

\textbf{Model Hyperparameters}
\begin{itemize}
        \item Encoder $\phi_r$ output dimension $r$: 50
        \item Context encoder ($\phi_r, \phi_{sys}, \phi_{init}$)  activation function: SiLU
        \item Encoder hidden dimension $h$:  50
        \item ODE layer ($\phi_{ODE}$) activation function: Tanh
        \item ODE layer hidden dimension: 50
        \item Decoder hidden dimension: 50
        \item Latent ODE state dimension $l$: 10
        \item Context to latent dynamics (${\phi_{\mu}}_{sys}, {\phi_{\sigma}}_{sys}$) activation function: SiLU
        \item Context to latent initial (${\phi_{\mu}}_{init}, {\phi_{\sigma}}_{init}$) condition activation function: SiLU
        \item Latent dynamics dimension $d_{sys}$: 45
        \item Variational posterior $q({\boldsymbol{L}_0}_{new}^\mathbb{T}| \cdot)$ variance lower bound: $\sigma_{lb} = 0.1$
        \item Variational posterior $q(\boldsymbol{u}_{sys} | \cdot)$ variance lower bound: $\sigma_{lb} = 0.1$
        \item Decoder ($\phi_{dec}, \phi_{{\mu}_{dec}}, \phi_{{\sigma}_{dec}}$) activation function: SiLU
        \item ODE solver: $\code{Dopri5}$ with $\code{rtol}=1e-5$ and $\code{atol}=1e-5$
\end{itemize}

We use SiLU instead of ReLU as in the original NODEP is to enforce the differentiability, which we elaborate on in the following subsection. 
\subsection{On the Differentiability of the Encoder and Decoder} \label{App: differentiability}
Since the optimization takes the initial condition together with the time as decision variables, we need the model output to be differentiable w.r.t.\ these quantities. NODEP overlooked this part and utilized ReLu activation functions originally, we elaborate that the differentiability requirement practically affects our choice of activation functions. 


\textbf{Time derivative} Without loss of generality, we assume we optimize a function $g$ that takes the output of the decoder ${\phi_{\mu}}_{dec}[\boldsymbol{l}(t), \boldsymbol{u}_{sys}, t]$ (defined in Appendix.~\ref{App: model_structure}) as its input \footnote{we note that this is sufficient to show that the requirements of time differentiability irrespective to the consideration of ${\phi_{\sigma}}_{dec}[\boldsymbol{l}(t), \boldsymbol{u}_{sys}, t]$}. The derivative of $g$ w.r.t $t$ is:

\begin{equation}
\begin{aligned}
& \frac{d g\left({\phi_{\mu}}_{dec}[\boldsymbol{l}(t), \boldsymbol{u}_{sys}, t], t\right)}{dt}  = \nabla_{{\phi_{\mu}}_{dec}}g \left(\frac{\partial {\phi_{\mu}}_{dec}}{\partial \boldsymbol{l}(t)} \cdot \boldsymbol{\phi}_{ODEs}[\boldsymbol{l}(t), \boldsymbol{u}_{sys}, t] +\frac{\partial {\phi_{\mu}}_{dec}}{\partial t}\right) + \frac{\partial g}{\partial t}
\end{aligned}
\label{Eq: first_order_derivative}
\end{equation}

\noindent where $\boldsymbol{\phi}_{ODEs}=\cdot$ represents the neural network structure modeling the latent ODEs. It is clear that the differentiability of model output w.r.t time $t$ needs to be enforced through $\frac{\partial {\phi_{\mu}}_{dec}}{\partial t}$, as long as the time $t$ is taken into account in the decoder part.   

\textbf{Initial Condition Derivative}  The derivative of the model output w.r.t the initial condition $\boldsymbol{x}_0$ can be represented as: 

\begin{equation}
\begin{aligned}
 & \frac{d g\left({\phi_{\mu}}_{dec}[\boldsymbol{l}(t), \boldsymbol{u}_{sys}, t], t\right)}{d\boldsymbol{x}(t_0)}  \\ & = \nabla_{{\phi_{\mu}}_{dec}}g \left(\frac{\partial {\phi_{\mu}}_{dec}}{\partial \boldsymbol{l}(t)} \cdot \frac{\partial \boldsymbol{l}(t)}{\partial \boldsymbol{x}(t_0)} + \frac{\partial {\phi_{\mu}}_{dec}}{\partial \boldsymbol{u}_{sys}} \cdot \frac{\partial \boldsymbol{u}_{sys}}{\partial \boldsymbol{x}(t_0)}\right) \\ & = \nabla_{{\phi_{\mu}}_{dec}}g \left[\frac{\partial {\phi_{\mu}}_{dec}}{\partial \boldsymbol{l}(t)} \cdot \left(\frac{\partial \boldsymbol{l}(t_0)}{\partial \boldsymbol{x}(t_0)} + \cdot \right) + \frac{\partial {\phi_{\mu}}_{dec}}{\partial \boldsymbol{u}_{sys}} \cdot \frac{\partial \left({\phi_{\mu}}_{sys}(\boldsymbol{h}_{sys}) + diag\left({\phi_{\sigma}}_{sys}(\boldsymbol{h}_{sys})\epsilon\right)\right)}{\partial \boldsymbol{x}(t_0)} \cdot \right. \\ & \left. \quad ... \cdot \frac{\partial \boldsymbol{h}_{sys}}{\partial \boldsymbol{r}_{sys}}\cdot \frac{\partial \boldsymbol{r}_{sys}}{\partial \boldsymbol{x}_0}\right]
\end{aligned}
\label{Eq: first_order_derivative_init_cond}
\end{equation}

\noindent where $\boldsymbol{r}_{sys}$ and $\boldsymbol{h}_{sys}$ are defined in Appendix.~\ref{App: model_structure}, ${\phi_{\mu}}_{sys}(\boldsymbol{h}_{sys}) + diag\left({\phi_{\sigma}}_{sys}(\boldsymbol{h}_{sys})\epsilon\right)$ represent the reparameterization based sampling of the latent variable $\boldsymbol{u}_{sys}$, where $\epsilon$ is a random sample from Gaussian prior. Note that the derivative of the state value w.r.t to initial condition $\frac{\partial \boldsymbol{l}(t)}{\partial \boldsymbol{x}(t_0)}$ can be obtained through the adjoint method, a derivation has been provided in Appendix C of \cite{chen2018neural}, without a full specification of such cumbersome terms, it is sufficient to indicate the sensibility of having differentiable activation functions in context encoding $\left(\frac{\partial \boldsymbol{r}_{sys}}{\partial \boldsymbol{x}_0}\right)$, context to latent representations $\left(\frac{\partial \boldsymbol{h}_{sys}}{\partial \boldsymbol{r}_{sys}}\right)$, as well as the decoder $\left(\frac{\partial {\phi_{\mu}}_{dec}}{\partial \boldsymbol{l}(t)}\right)$. For a practical calculation of the gradient of the model w.r.t decision variables in both training and optimization, we simply leverage the standard backpropagation through the ODE solver (also known as the "discretize-and-optimize" (e.g. \cite{onken2020discretize, kidger2022neural})) to obtain the gradient due to its speed advantage and minimum implementation effort.

In Figure. \ref{fig:grad_comparison}, we also empirically show the necessity of these differentiable choices by inspecting the time and initial condition derivative with respect to $g(\cdot)$. 

\begin{figure}[h!]
    \centering
    \begin{subfigure}[b]{0.22\textwidth}
        \centering
        \includegraphics[width=\textwidth]{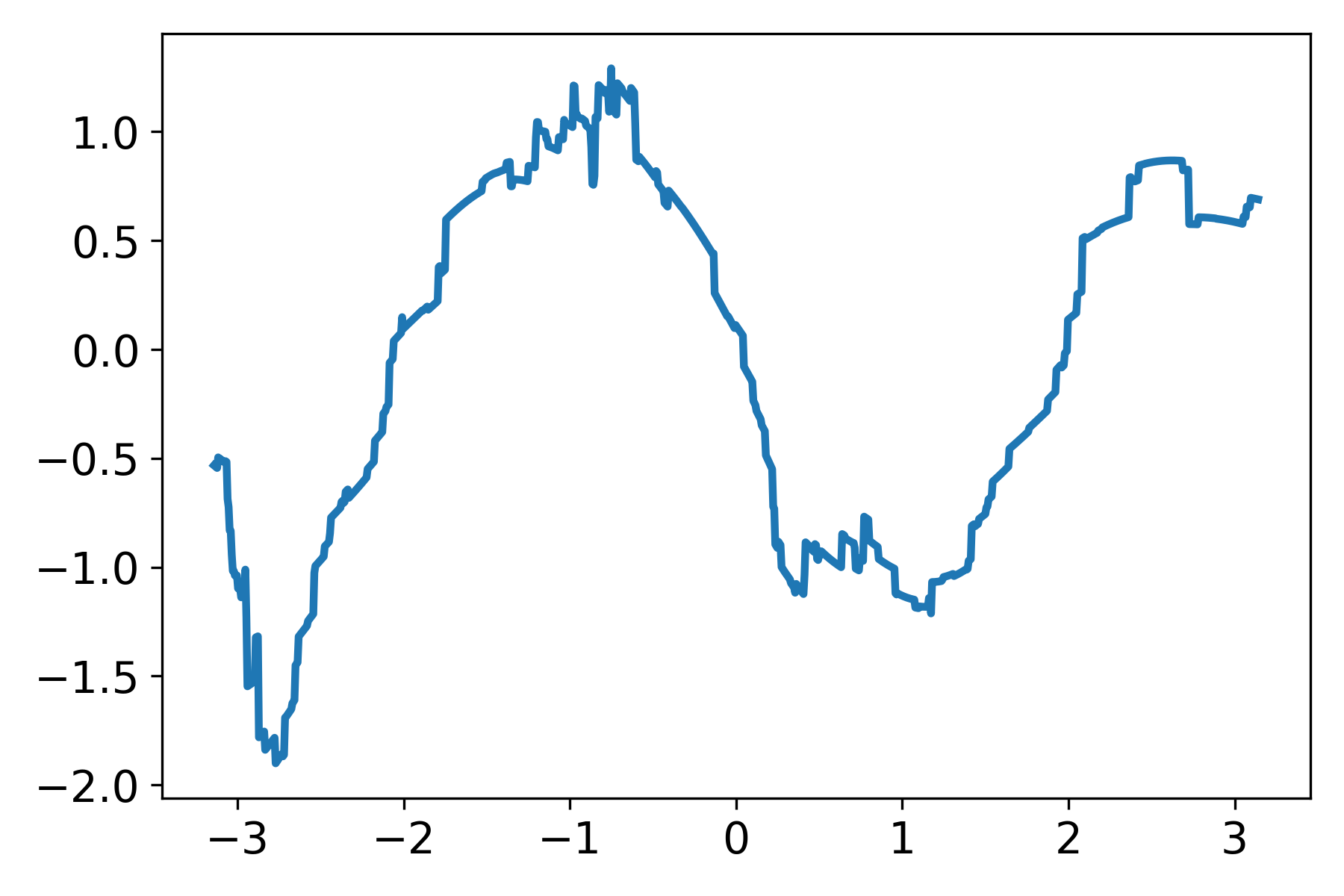}
        \caption{$\frac{\partial x(t)}{\partial t}$ using ReLU.}
    \end{subfigure}
    \hfill
    \begin{subfigure}[b]{0.22\textwidth}
        \centering
        \includegraphics[width=\textwidth]{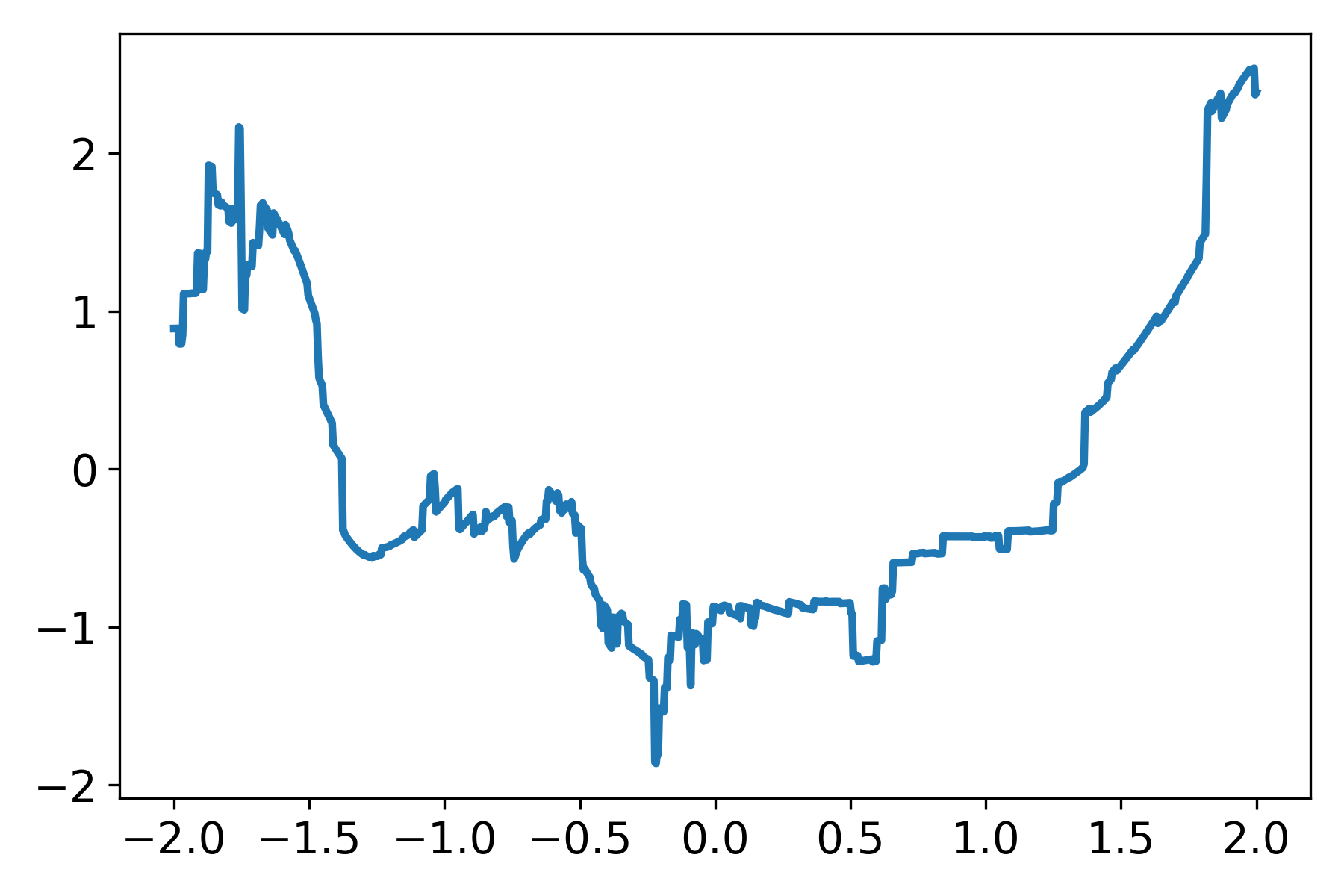}
        \caption{$\frac{\partial x(t)}{\partial x(t_0)}$ using ReLU.}
    \end{subfigure}
    \hfill
    \begin{subfigure}[b]{0.22\textwidth}
        \centering
        \includegraphics[width=\textwidth]{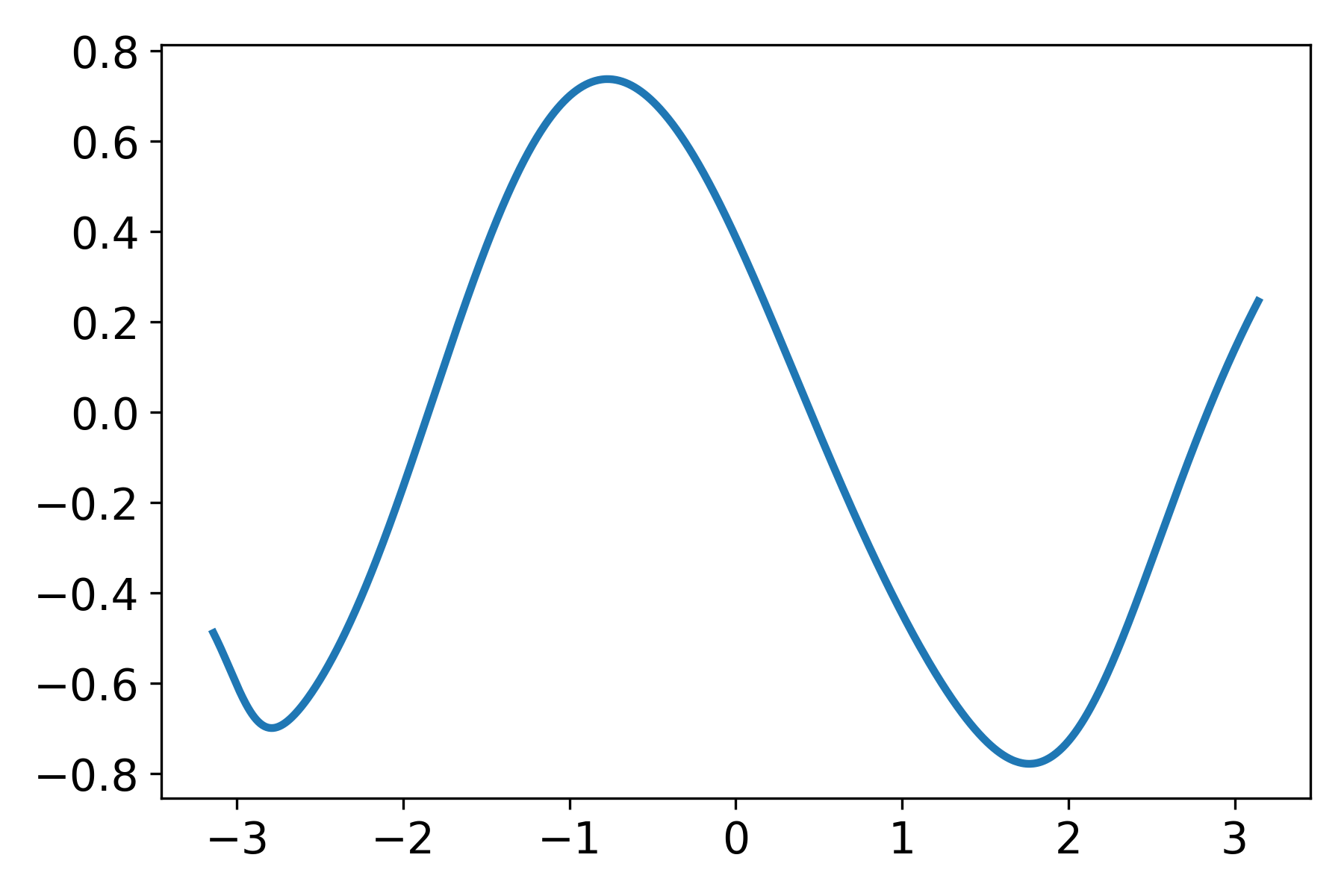}
        \caption{$\frac{\partial x(t)}{\partial t}$ using SiLU.}
    \end{subfigure}
    \hfill
    \begin{subfigure}[b]{0.22\textwidth}
        \centering
        \includegraphics[width=\textwidth]{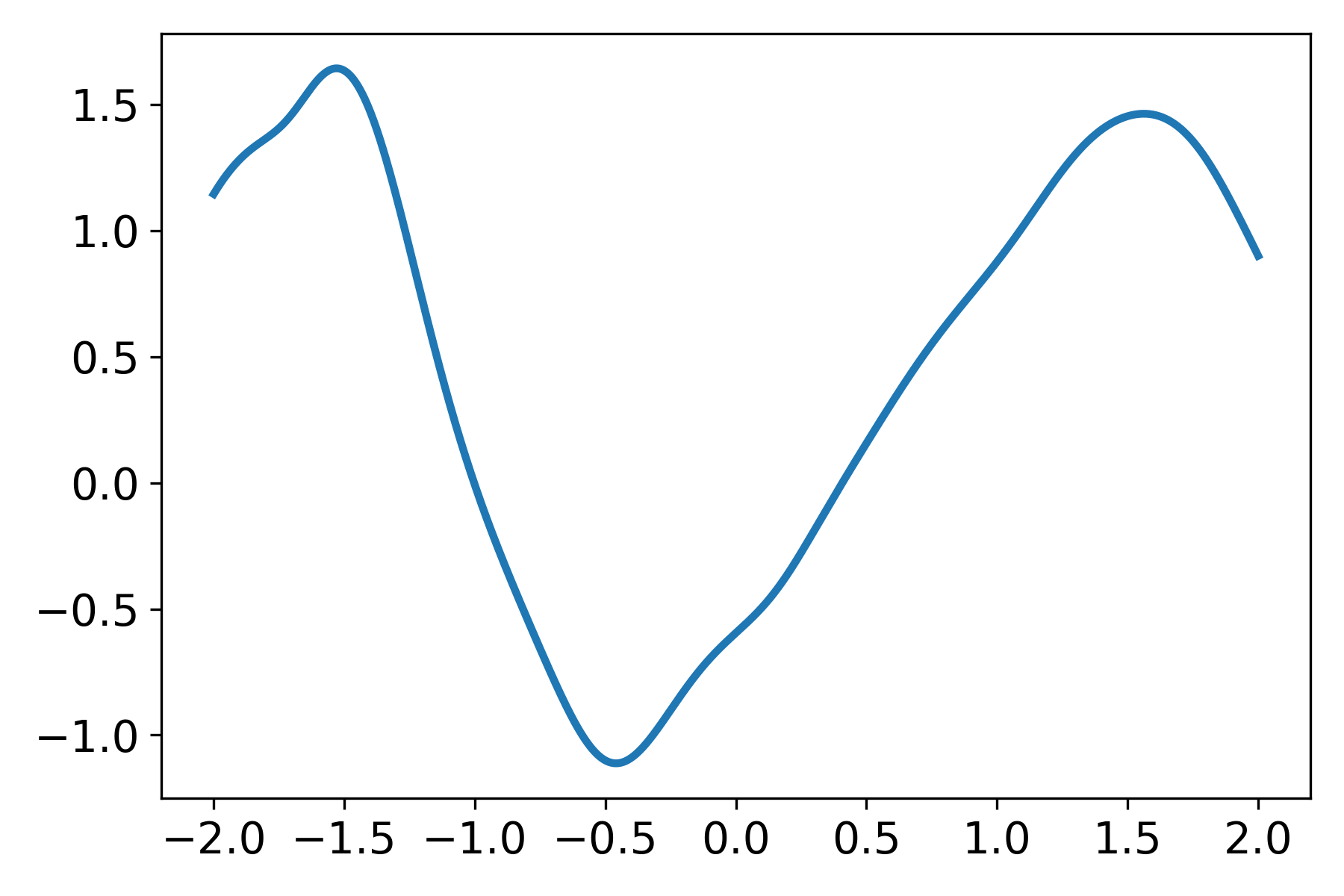}
        \caption{$\frac{\partial x(t)}{\partial x(t_0)}$ using SiLU.}
    \end{subfigure}
    \caption{Initial condition gradient and time derivative w.r.t different activation function choices, the comparison is conducted with the same model structure but different activation functions. }
    \label{fig:grad_comparison}
\end{figure}

\section{ELBO Derivation and Training Details}\label{Sec: loss_derive}
\subsection{SANODEP ELBO derivation}

Note that since the forecasting scenario is a special case of interpolating scenario, below we derive through a unified representation:

\begin{equation}
\begin{aligned}
     & \text{log}\ p\left(\boldsymbol{X}_{new}^{\mathbb{T}} | \mathcal{T}^\mathbb{C}\cup \mathcal{T}_{new}^\mathbb{C}, \boldsymbol{T}_{new}^{\mathbb{T}} \right) \\ & = \text{log}\  \int p\left(\boldsymbol{X}_{new}^{\mathbb{T}} |  \boldsymbol{T}_{new}^{\mathbb{T}}, \boldsymbol{u}_{sys}, \boldsymbol{L}_{0_{new}}^{\mathbb{T}}\right) p\left(\boldsymbol{u}_{sys}| \mathcal{T}^\mathbb{C}\cup \mathcal{T}_{new}^\mathbb{C}\right) p\left(\boldsymbol{L}_{0_{new}}^{\mathbb{T}}\right) d\{\boldsymbol{u}_{sys}, \boldsymbol{L}_{0_{new}}^{\mathbb{T}}\} \\ &  =   \left(\text{log}\int p\left(\boldsymbol{u}_{sys}| \mathcal{T}^\mathbb{C}\cup \mathcal{T}_{new}^\mathbb{C} \right)p\left(\boldsymbol{L}_{0_{new}}^{\mathbb{T}}\right)\frac{q\left( \boldsymbol{u}_{sys}| \mathcal{T}^\mathbb{C}\cup \mathcal{T}_{new}^\mathbb{C}\cup \mathcal{T}_{new}^{\mathbb{T}}\right)}{q\left( \boldsymbol{u}_{sys}| \mathcal{T}^\mathbb{C}\cup \mathcal{T}_{new}^\mathbb{C}\cup \mathcal{T}_{new}^{\mathbb{T}}\right)} \right. \\ & \left. \quad \frac{q\left( \boldsymbol{L}_{0_{new}}^{\mathbb{T}} | \left({t_0^{\mathbb{C}}, \boldsymbol{x_{0_{new}}^{\mathbb{C}}}}\right)\right)}{q\left( \boldsymbol{L}_{0_{new}}^{\mathbb{T}} | \left({t_0^{\mathbb{C}}, \boldsymbol{x_{0_{new}}^{\mathbb{C}}}}\right)\right)}\cdot p\left(\boldsymbol{X}_{new}^{\mathbb{T}} |  \boldsymbol{T}_{new}^{\mathbb{T}}, \boldsymbol{u}_{sys}, \boldsymbol{L}_{0_{new}}^{\mathbb{T}}\right) d\{\boldsymbol{u}_{sys}, \boldsymbol{L}_{0_{new}}^{\mathbb{T}}\}\right) \\ &  \geq   \mathbb{E}_{q\left(\boldsymbol{u}_{sys}| \mathcal{T}^\mathbb{C}\cup \mathcal{T}_{new}^\mathbb{C}\cup \mathcal{T}_{new}^{\mathbb{T}}\right)q\left(\boldsymbol{L}_{0_{new}}^{\mathbb{T}}|  \left({t_0^{\mathbb{C}}, \boldsymbol{x_{0_{new}}^{\mathbb{C}}}}\right)\right)}\text{log}  \left(\frac{p\left(\boldsymbol{u}_{sys}| \mathcal{T}^\mathbb{C}\cup \mathcal{T}_{new}^\mathbb{C}\right)}{q\left(\boldsymbol{u}_{sys}| \mathcal{T}^\mathbb{C}\cup \mathcal{T}_{new}^\mathbb{C}\cup \mathcal{T}_{new}^{\mathbb{T}}\right)}  \right. \\ & \left. \cdot \frac{p\left(\boldsymbol{L}_{0_{new}}^{\mathbb{T}}\right)}{q\left(\boldsymbol{L}_{0_{new}}^{\mathbb{T}}|  \left({t_0^{\mathbb{C}}, \boldsymbol{x_{0_{new}}^{\mathbb{C}}}}\right)\right)} \cdot p\left(\boldsymbol{X}_{new}^{\mathbb{T}} |  \boldsymbol{T}_{new}^{\mathbb{T}}, \boldsymbol{u}_{sys}, \boldsymbol{L}_{0_{new}}^{\mathbb{T}}\right) \right)  \\ & \approx  \mathbb{E}_{q\left(\boldsymbol{u}_{sys}| \mathcal{T}^\mathbb{C}\cup \mathcal{T}_{new}^\mathbb{C} \cup \mathcal{T}_{new}^{\mathbb{T}}\right)q\left(\boldsymbol{L}_{0_{new}}^{\mathbb{T}}|  \left({t_0^{\mathbb{C}}, \boldsymbol{x_{0_{new}}^{\mathbb{C}}}}\right)\right)} \text{log}\  p\left(\boldsymbol{X}_{new}^{\mathbb{T}} |  \boldsymbol{T}_{new}^{\mathbb{T}}, \boldsymbol{u}_{sys}, \boldsymbol{L}_{0_{new}}^{\mathbb{T}}\right) \\ & \quad - \text{KL}\left[q\left(\boldsymbol{u}_{sys}| \mathcal{T}^\mathbb{C}\cup \mathcal{T}_{new}^\mathbb{C} \cup \mathcal{T}_{new}^{\mathbb{T}}\right)| | q\left(\boldsymbol{u}_{sys}| \mathcal{T}^\mathbb{C}\cup \mathcal{T}_{new}^\mathbb{C}\right)\right] -\text{KL}\left[q\left(\boldsymbol{L}_{0_{new}}^{\mathbb{T}} | \left({t_0^{\mathbb{C}}, \boldsymbol{x_{0_{new}}^{\mathbb{C}}}}\right)\right) | | p\left(\boldsymbol{L}_{0_{new}}^{\mathbb{T}}\right)\right]
\end{aligned}
    \label{Eq: ELBO_SANODEP}
\end{equation}

We note the variational posterior $q(\boldsymbol{u}_{sys} | \mathcal{T}^\mathbb{C}\cup \mathcal{T}_{new}^\mathbb{C})$, $q\left( \boldsymbol{L}_{0_{new}}^{\mathbb{T}} | \left({t_0^{\mathbb{C}}, {\boldsymbol{x}_{0_{new}}^{\mathbb{C}}}}\right)\right)$ has been obtained through the encoder in an amortized approach similar in \cite{garnelo2018neural}. 


\begin{algorithm}[h!]


\caption{Learning and Inference in System Aware Neural ODE Processes (SANODEP)}
\label{alg:neural_ode}

\begin{algorithmic}[1]
\REQUIRE ODE system inducing distributions $P$, known trajectory range $[M_{min}, M_{max}]$, batch size of Monte Carlo approximation of initial condition sample $N_{\boldsymbol{x}_0}$, batch size of dynamical systems sample $N_{sys}$, prespecified time 
 grid $\boldsymbol{T}_{grid}:=\texttt{linspace}(t_0, t_{max}, N_{grid})$, minimum and maximum context points within a trajectory $m_{min}, m_{max}$. minimum and maximum extra target points $n_{min}, n_{max}$, initial condition space $\mathcal{X}_0$. 
\STATE Initialize SANODEP model parameters $\theta$ with random seeds.  

\FOR{\code{step} in $\code{training\_steps}$}
    \STATE \textcolor{ForestGreen}{\# \code{Training data generation}}
    \FOR{$j = 1$ to $N_{sys}$}
    \STATE Sample ODE system $\boldsymbol{f} \sim \mathcal{F}$ and  $N_{\boldsymbol{x}_0}$ different initial conditions from $\mathcal{X}_0$, solve $N_{\boldsymbol{x}_0}$ odes at time grids $\boldsymbol{T}_{grid}$ to obtain the dataset $\{\mathcal{T}_1, ..., \mathcal{T}_{N_{\boldsymbol{x}_0}}\}$.
    \STATE Sample known trajectory number $M$ from $\code{Uniform}(M_{min}, M_{max})$.
    \FOR{$l = 1$ to $M$} 
    \STATE \textcolor{ForestGreen}{\# \code{Sample context and target elements within each trajectory}} \\ 
    \STATE Sample $m_l$ from $\code{Uniform}(m_{min}, m_{max})$, sample $n_l$ from $\code{Uniform}(n_{min},n_{max})$.
    \STATE Randomly subsample from $\mathcal{T}_{l}$ to extract the context dataset ${\mathcal{T}_{l}}^{\mathbb{C}}$ and target dataset ${\mathcal{T}_{l}}^{\mathbb{T}}$, where $| {\mathcal{T}_{l}}^{\mathbb{C}} | = m_l$ and $| {\mathcal{T}_{l}}^{\mathbb{T}} | = m_l + n_l$ and ${\mathcal{T}_{l}}^{\mathbb{C}} \subseteq {\mathcal{T}_{l}}^{\mathbb{T}}  $.
    \ENDFOR
    \STATE Concatenate $M$ trajectories context set $\mathcal{T}^{\mathbb{C}} = \{{\mathcal{T}_1^{\mathbb{C}}}, ..., {\mathcal{T}_M^{\mathbb{C}}}\}$ and target set  $\mathcal{T}^{\mathbb{T}} = \{{\mathcal{T}_1^{\mathbb{T}}}, ..., {\mathcal{T}_M^{\mathbb{T}}}\}$.   
    \ENDFOR
    \STATE \textcolor{ForestGreen}{\# \code{Model Prediction}}
    \FOR{$j = 1$ to $N_{sys}$}
    \FOR{$k = 1$ to $N_{\boldsymbol{x}_0 }$}
    \STATE sample $\mathbbm{1}_{forecast}$ from $\code{Bernoulli}(\lambda)$ \ \textcolor{ForestGreen}{\# \code{determine forecasting or interpolating}}
    \IF {$\mathbbm{1}_{forecast} = 1$}  
        \STATE ${\mathcal{T}_{new}^{\mathbb{C}}} = \{({t_0}_{k}, {{\boldsymbol{x}_0}_{k}})\}$ where $({t_0}_{k}, {{\boldsymbol{x}_0}_{k}}) \in \mathcal{T}_k$
    \ELSIF{$\mathbbm{1}_{forecast} = 0$ and $k > M$}
        \STATE Randomly subsample from $\mathcal{T}_{k}$ to extract the context dataset ${\mathcal{T}_{k}}^{\mathbb{C}}$ and target dataset ${\mathcal{T}_{k}}^{\mathbb{T}}$, where $| {\mathcal{T}_{k}}^{\mathbb{C}} | = m$, $| {\mathcal{T}_{k}}^{\mathbb{T}} | = m + n$ and ${\mathcal{T}_{k}}^{\mathbb{C}} \subseteq {\mathcal{T}_{k}}^{\mathbb{T}}  $.
    \ELSE
        \STATE ${\mathcal{T}_{new}^{\mathbb{C}}} = {\mathcal{T}_{k}}^{\mathbb{C}}, {\mathcal{T}}^{\mathbb{C}} = {\mathcal{T}}^{\mathbb{C}} \backslash {\mathcal{T}_{k}}^{\mathbb{C}}$.
    \ENDIF
    \STATE Augmented the context trajectories and target trajectories with the new trajectory $\mathcal{T}^{\mathbb{C}} = \mathcal{T}^{\mathbb{C}} \cup {\mathcal{T}_{new}^{\mathbb{C}}} $, $\mathcal{T}^{\mathbb{T}} = \mathcal{T}^{\mathbb{T}} \cup {\mathcal{T}_{new}^{\mathbb{T}}} $.
    \STATE Compute variational posterior $q(\boldsymbol{u}_{sys} | \mathcal{T}^{\mathbb{T}})$, $q({{\boldsymbol{L}_0}_{new}^\mathbb{T}} | \mathcal{T}^{\mathbb{T}})$, $q(\boldsymbol{u}_{sys} | \mathcal{T}^{\mathbb{C}})$, $q({{\boldsymbol{L}_0}_{new}^\mathbb{T}} | \mathcal{T}^{\mathbb{C}})$ through the encoder block.
    \STATE Sample $\boldsymbol{l}(t_0), \boldsymbol{u}_{sys}$ from $q(\boldsymbol{l}(t_0)|\mathcal{T}^\mathbb{T})$, $q(\boldsymbol{u}_{sys} | \mathcal{T}^\mathbb{T})$. 
    \STATE Solve latent odes as in Eq.~\ref{Eq: nodep_solve} for all times and decode to obtain the model prediction. 
    \STATE Calculate the trajectory wise loss ${\mathcal{L}_{\text{ELBO}}}_k$ based on Eq.~\ref{Eq: ELBO_form}.
    \ENDFOR
    \ENDFOR
    \STATE Average the trajectory wise loss $\mathcal{L}_{\text{ELBO}} = \frac{1}{N_{\boldsymbol{x}_0}}\sum_{o=1}^{N_{\boldsymbol{x}_0}}{\mathcal{L}_{ELBO}}_o$ 
    \STATE Update the model parameter through optimizer $\theta \gets \theta - \eta \nabla_\theta \mathcal{L}_{\text{ELBO}}$
\ENDFOR
\end{algorithmic}
\label{Alg: SANODEP_training}
\end{algorithm}

\subsection{Learning and Prediction Process of SANOEP}\label{App: learn_and_pred}
The practical learning process of SANODEP is summarized in Algorithm.~\ref{Alg: SANODEP_training}, with parameters listed in section.~\ref{Sec:notation} as well. In all of our subsequent experiments, we use $M_{min}=0$, $M_{max}=10$, $N_{\boldsymbol{x}_0} = 100$, $N_{sys}=20$, $N_{grid}=100$, $m_{min}=1$, $m_{max}=10$, $n_{min}=0$, $n_{max}=45$. We note that  all the loops have been practically implemented in batch, and all the subsampling approach has been implemented practically through \texttt{masking} in \texttt{Jax}. 


\section{Time Delay Constraint Optimization}

\subsection{Acquisition Functions} \label{Sec: Acq_Fn}
We advocate that the framework is agnostic to the acquisition function forms as long as it can be written as expected utility form and satisfy the property mentioned in Lemma.~\ref{lm: acq_monotonic}, we however mention the acquisition function form we specifically utilized for performing few-shot BO in Section \ref{sec: meta_bo_exp} as:

\begin{equation}
\begin{aligned}
    & \alpha\left(\boldsymbol{x}_0, t_1, ..., t_N,  p_{\theta}(\boldsymbol{X}^{\mathbb{T}}\vert \mathbb{C}, \boldsymbol{T}^{\mathbb{T}}, \boldsymbol{x}_0)\right) \\ & \approx \frac{1}{N_{MC}}\sum_{i=1}^{N_{MC}}\mathrm{HVI}\left({\phi_{\mu}}_{dec}[\boldsymbol{l}_{i}(\boldsymbol{T}^{\mathbb{T}}), {\boldsymbol{u}_{sys}}_{i}, \boldsymbol{T}^{\mathbb{T}}], \boldsymbol{T}^{\mathbb{T}}, \mathcal{F}^* \right)
\end{aligned}
\end{equation}
In practice, we infer the current Pareto frontier $\mathcal{F}^*$ from SANODEP as well from the $M$ context trajectories $\mathcal{T}^{\mathbb{C}}$. We use $N_{MC}=32$ as the MC sample size to approximate the expected hypervolume improvement in all our experiments.

\textbf{Acquisition Function Optimization} 
We note the acquisition optimizer of both initial condition (Eq.~\ref{Eq: joint_acq_opt_form}), as well as optimal time scheduling  (Eq.~\ref{Eq: last_acq_opt_form}) involves constraint optimization. We utilize trust region-based constraint optimization available in \texttt{Scipy}\footnote{\url{https://docs.scipy.org/doc/scipy/reference/optimize.minimize-trustconstr.html}} to optimize acquisition function in both optimization problems starting with a uniform time scheduling (i.e., \code{linspace($t_0$, $t_1$, $N$)}) . Note that for the initial stage optimization, we additionally leverage a (10 instances) multi-start procedure on the initial condition decision variable only to boost the optimization performance. 

\subsection{Search Space Reduction}
\textbf{Proof of the search space reduction}
We prove that the search space can be reduced, without eliminating the global maximum in acquisition function optimization processes (Eq.~\ref{Eq: joint_acq_opt_form}). 
\begin{lemma}
    For batch acquisition function $\alpha$ defined as expected utility $\alpha = \mathbb{E}_{p(\boldsymbol{x})}(u(\boldsymbol{x}))$ where the utility function $u()$ is monotonic w.r.t set inclusion ($S \subseteq T \Rightarrow u(T) \geq u(S)$), the aquisition function is also monotonic w.r.t set inclusion: $\alpha(T) \geq \alpha(S)$. 
\label{lm: acq_monotonic}
\end{lemma}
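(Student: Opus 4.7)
The plan is to derive the inequality directly from pointwise monotonicity of the utility and the fact that expectation preserves pointwise inequalities. First I would fix any two batches (sets of candidate queries) $S$ and $T$ with $S \subseteq T$, and regard $u(S)$ and $u(T)$ as random variables over the same underlying probability space, namely the joint distribution $p(\boldsymbol{x})$ governing the predictive model's outputs at the query locations (so that $u(S)$ depends on the sub-vector of $\boldsymbol{x}$ indexed by $S$, and $u(T)$ depends on the sub-vector indexed by $T$; since $S\subseteq T$, both are measurable with respect to the same $p(\boldsymbol{x})$).

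Next I would invoke the hypothesis: for every realization $\boldsymbol{x}$, the assumed set-inclusion monotonicity of $u$ gives $u(T)(\boldsymbol{x}) \geq u(S)(\boldsymbol{x})$. This is a pointwise inequality between two integrable random variables. Taking expectations under $p(\boldsymbol{x})$ on both sides and using monotonicity of the Lebesgue integral yields
\begin{equation*}
\alpha(T) \;=\; \mathbb{E}_{p(\boldsymbol{x})}\!\left[u(T)(\boldsymbol{x})\right] \;\geq\; \mathbb{E}_{p(\boldsymbol{x})}\!\left[u(S)(\boldsymbol{x})\right] \;=\; \alpha(S),
\end{equation*}
which is the desired conclusion.

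There is really no hard step here; the only thing worth being careful about is consistency of the probability space over which the expectation is taken as $S$ and $T$ vary. In the SANODEP/qEHVI application $p(\boldsymbol{x})$ is the posterior predictive $p_{\theta}(\boldsymbol{X}^{\mathbb{T}}\vert \mathbb{C}, \boldsymbol{T}^{\mathbb{T}}, \boldsymbol{x}_0)$ with $\boldsymbol{T}^{\mathbb{T}}$ taken as the union of all candidate query times, so marginalizing to $S$ or $T$ is well defined and the above expectation argument applies verbatim. Once this lemma is in hand, Proposition~\ref{thm: search_space_reduction} follows by noting that HVI is itself monotonic under set inclusion (adding points to a batch can only expand the dominated region), so any maximizer with $N < \lceil N_{\max}/2 \rceil$ can be strictly improved by padding with extra time points (which remain feasible because the time horizon accommodates up to $N_{\max}$ spaced queries), reducing the effective search to $N \in \{\lceil N_{\max}/2\rceil,\ldots,N_{\max}\}$.
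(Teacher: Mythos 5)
\noindent Your proposal is correct and follows essentially the same route as the paper's proof: both arguments place $\alpha(S)$ and $\alpha(T)$ as expectations over a common probability space (the paper writes this explicitly via the tower decomposition $\mathbb{E}_{p(S)}\mathbb{E}_{p(T\setminus S\mid S)}$, you via direct measurability of both utilities with respect to the joint $p(\boldsymbol{x})$) and then conclude from the pointwise inequality $u(T)\geq u(S)$ together with monotonicity of expectation. Your remark on consistency of the underlying probability space as $S$ and $T$ vary is exactly the point the paper's conditional-expectation factorization is implicitly handling, so the two write-ups are interchangeable.
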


\begin{proof}
    Let $T = S \cup T / S $, then $\alpha(T) - \alpha(S) = \mathbb{E}_{p(S)}\mathbb{E}_{p(T / S \vert S)}(u(T)) - \mathbb{E}_{p(S)}\mathbb{E}_{p(T / S \vert S)}(u(S)) = \mathbb{E}_{p(S)}\mathbb{E}_{p(T / S \vert S)}\left(u(T) - u(S) \right) \geq 0 $, hence prove complete. 
\end{proof}

\begin{corollary}
Assume that the utility function \( u(\cdot) \) is monotonic with respect to the inclusion of the set in each set separately: Let \( \boldsymbol{X} = \{X_1, X_2, \ldots, X_n\} \) and \( \boldsymbol{Y} = \{Y_1, Y_2, \ldots, Y_n\} \) be two collections of sets, if \( X_i \subseteq Y_i \) $\forall i \in \{1..., n\}$ then $u(\boldsymbol{X}) \leq u(\boldsymbol{Y})$. Then, the batch acquisition function defined as \( \alpha = \mathbb{E}_{p(\boldsymbol{X})}(u(\boldsymbol{X})) \) satisfies \( \alpha(\boldsymbol{Y}) \geq \alpha(\boldsymbol{X}) \).
\label{coro: batch}
\end{corollary}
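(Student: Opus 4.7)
The plan is to reduce the batch statement to $n$ applications of Lemma~\ref{lm: acq_monotonic} via a telescoping chain in which only one coordinate changes at a time. Concretely, define the intermediate collections
\begin{equation*}
\boldsymbol{Z}^{(k)} := \{Y_1, \dots, Y_k, X_{k+1}, \dots, X_n\}, \qquad k = 0, 1, \dots, n,
\end{equation*}
so that $\boldsymbol{Z}^{(0)} = \boldsymbol{X}$ and $\boldsymbol{Z}^{(n)} = \boldsymbol{Y}$. Consecutive members of the chain differ only in their $k$-th slot, where $X_k \subseteq Y_k$, and the componentwise monotonicity hypothesis on $u$ ensures $u(\boldsymbol{Z}^{(k-1)}) \leq u(\boldsymbol{Z}^{(k)})$ as deterministic functions of their arguments.

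Next I would lift each single-coordinate comparison to the acquisition level. Fixing the remaining coordinates and decomposing $Y_k = X_k \cup (Y_k \setminus X_k)$, one writes $p(\boldsymbol{Z}^{(k)})$ via the product rule as $p(\boldsymbol{Z}^{(k-1)}) \, p(Y_k \setminus X_k \mid \boldsymbol{Z}^{(k-1)})$, so that
\begin{equation*}
\alpha(\boldsymbol{Z}^{(k)}) - \alpha(\boldsymbol{Z}^{(k-1)}) = \mathbb{E}_{p(\boldsymbol{Z}^{(k-1)})} \mathbb{E}_{p(Y_k \setminus X_k \mid \boldsymbol{Z}^{(k-1)})}\!\left[ u(\boldsymbol{Z}^{(k)}) - u(\boldsymbol{Z}^{(k-1)}) \right] \geq 0,
\end{equation*}
which is precisely the tower-property argument used in Lemma~\ref{lm: acq_monotonic} applied in the $k$-th slot. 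Chaining these $n$ nonnegative increments yields $\alpha(\boldsymbol{X}) = \alpha(\boldsymbol{Z}^{(0)}) \leq \alpha(\boldsymbol{Z}^{(1)}) \leq \cdots \leq \alpha(\boldsymbol{Z}^{(n)}) = \alpha(\boldsymbol{Y})$, which is the desired inequality.

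The main obstacle I anticipate is purely a bookkeeping one: the statement of the corollary is slightly ambiguous about whether $p$ denotes a joint distribution over all components or an independent product, and one must confirm that the conditional factorization used above is well-defined in either case. Once this is clarified (either by independence across coordinates, or by invoking regular conditional probabilities for the joint), the argument reduces to $n$ invocations of Lemma~\ref{lm: acq_monotonic} and carries no further technical difficulty.
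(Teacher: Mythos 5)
Your proposal is correct, but it is worth noting that the paper gives no explicit proof of this corollary at all: it is stated as an immediate consequence of the same coupling argument used for Lemma~\ref{lm: acq_monotonic}, and the implied proof is a single-shot version of what you do $n$ times. Concretely, the componentwise-monotonicity hypothesis already delivers the full pointwise inequality $u(\boldsymbol{X}) \leq u(\boldsymbol{Y})$ in one step (take $X_i \subseteq Y_i$ for all $i$ simultaneously), so one can write $p(\boldsymbol{Y}) = p(\boldsymbol{X})\, p(\boldsymbol{Y}\setminus\boldsymbol{X} \mid \boldsymbol{X})$ once, observe that $u(\boldsymbol{X})$ does not depend on the extra coordinates, and conclude $\alpha(\boldsymbol{Y}) - \alpha(\boldsymbol{X}) = \mathbb{E}_{p(\boldsymbol{X})}\mathbb{E}_{p(\boldsymbol{Y}\setminus\boldsymbol{X}\mid\boldsymbol{X})}\left[u(\boldsymbol{Y}) - u(\boldsymbol{X})\right] \geq 0$ directly. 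Your telescoping chain $\boldsymbol{Z}^{(0)}, \dots, \boldsymbol{Z}^{(n)}$ is valid --- each consecutive pair satisfies the hypothesis with the unchanged slots included trivially --- but it buys nothing here, since the hypothesis is not merely single-slot monotonicity that needs to be chained; it is already the joint statement. Your final remark about whether $p$ factorizes across coordinates is a fair observation about the paper's loose notation; the one-shot argument sidesteps it entirely because it only ever conditions once, on the whole of $\boldsymbol{X}$, which requires nothing beyond the existence of the conditional $p(\boldsymbol{Y}\setminus\boldsymbol{X}\mid\boldsymbol{X})$ --- exactly the same requirement as in the Lemma you are extending.
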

\begin{lemma}
    \textup{\texttt{qEHVI}} is a monotonic acquisition function with respect to set inclusion
\label{lm： qEHVI_monotonicity}
\end{lemma}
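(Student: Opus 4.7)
The plan is to reduce this monotonicity claim for \texttt{qEHVI} to the monotonicity of the underlying utility (the pointwise hypervolume improvement) via Corollary~\ref{coro: batch}. The chain of reasoning is (i) identify the utility function inside the expectation, (ii) prove that utility is monotonic with respect to set inclusion in its input batch, and (iii) invoke the already-established corollary to lift this to the expected acquisition.

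First, I would recall from Definition~2 of \citet{daulton2020differentiable} that given a reference point $r$, an existing Pareto front $\mathcal{F}^*$, and a candidate batch of objective-space points $A$, the hypervolume improvement is $\mathrm{HVI}(A,\mathcal{F}^*) = \mathrm{HV}(\mathcal{F}^* \cup A) - \mathrm{HV}(\mathcal{F}^*)$, where $\mathrm{HV}(S)$ denotes the Lebesgue measure of the region dominated by $S$ and bounded by $r$. I would then argue that if $A \subseteq B$, then $\mathcal{F}^* \cup A \subseteq \mathcal{F}^* \cup B$, so the corresponding dominated regions satisfy the same inclusion; by monotonicity of the Lebesgue measure, $\mathrm{HV}(\mathcal{F}^* \cup A) \le \mathrm{HV}(\mathcal{F}^* \cup B)$, and subtracting the common term $\mathrm{HV}(\mathcal{F}^*)$ preserves the inequality, yielding $\mathrm{HVI}(A, \mathcal{F}^*) \le \mathrm{HVI}(B, \mathcal{F}^*)$. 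Thus $\mathrm{HVI}(\cdot,\mathcal{F}^*)$ is a monotone set function.

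Next, I need to connect monotonicity in the batch of times $\{t_1,\dots,t_N\}$ (the actual decision variable) to monotonicity of the objective-space set $A = \{x(t_1),\dots,x(t_N)\}$ that feeds into $\mathrm{HVI}$. For a fixed realization of the stochastic predictive model, each additional time index $t_{N+1}$ adds exactly one point $x(t_{N+1})$ to $A$, so enlarging the batch of times enlarges $A$ in the set-inclusion sense. Hence, pointwise on every sample path of $p(\boldsymbol{X}^{\mathbb{T}} \mid \mathbb{C}, \boldsymbol{T}^{\mathbb{T}}, \boldsymbol{x}_0)$, the utility is monotone in the batch. Applying Corollary~\ref{coro: batch} with the utility $u = \mathrm{HVI}(\cdot,\mathcal{F}^*)$ then transports this monotonicity through the expectation, giving that \texttt{qEHVI} itself is monotone with respect to batch inclusion.

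The main subtlety I foresee is precisely the bookkeeping in the last step: ensuring that the coupling between ``adding a new time $t_{N+1}$'' and ``adding a new random component $x(t_{N+1})$'' is handled under a common probability space so that the pointwise inequality is genuine (rather than merely in distribution). The cleanest way to resolve this is to write both acquisition values under the same joint law of $\boldsymbol{X}^{\mathbb{T}}$ evaluated at the larger time grid, treating the smaller batch as the marginal obtained by discarding one coordinate; then the inequality $\mathrm{HVI}(A_N,\mathcal{F}^*) \le \mathrm{HVI}(A_{N+1},\mathcal{F}^*)$ holds almost surely and Corollary~\ref{coro: batch} applies directly. No additional structural assumption on the predictive model is needed.
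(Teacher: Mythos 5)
Your proposal is correct and follows essentially the same route as the paper: reduce \texttt{qEHVI}'s monotonicity to the monotonicity of the hypervolume-improvement utility and lift it through the expectation via Corollary~\ref{coro: batch}. The paper dispatches both ingredients as "well-acknowledged" and "straightforward," whereas you usefully spell out the Lebesgue-measure argument for $\mathrm{HVI}$ and the common-probability-space coupling that the paper's Lemma~\ref{lm: acq_monotonic} handles implicitly via iterated conditional expectation.
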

\begin{proof}
Using Corollary.~\ref{coro: batch}, Lemma \ref{lm： qEHVI_monotonicity} and the well-acknowledged fact that the hypervolume indicator is monotonic with respect to set inclusion on each output dimensionality, the proof is straightforward. We note that the above property is asymptotically hold when \texttt{qEHVI} is monte carlo approximated. 
\end{proof} 

Finally, we start the proof of Theorem. \ref{thm: search_space_reduction}: 
\begin{proof}
    when $N \in \left[1, \lceil\frac{N_{max}}{2}\rceil\right]$, $\exists i \in \left[1, \lceil \frac{N_{max}}{2} \rceil\right]\ s.t.\ t_i - t_{i-1} \geq 2\Delta t$ meaning that one can insert a new batch point in between, forming an augmented set $T$ as input, with Lemma \ref{lm： qEHVI_monotonicity} and Corollary.~\ref{coro: batch}, the proof hence complete. 
\end{proof}
We finally remark that, with Lemma. \ref{lm: acq_monotonic}, a large series of batch improvement-based acquisition functions (e.g., parallel Expected Improvement, parallel Probability of Improvement, and parallel information-theoretic acquisition functions that can be written as expected utility form and also preserve the set inclusion monotonicity property (e.g., \cite{qing2023pf})) also embrace the same search space reduction benefit.

 \subsection{Optimization Framework}\label{Sec: opt_framework}
\begin{algorithm}[h!]
   \caption{Model Assisted Ordinary Differential Equation Optimization Framework}
   \label{alg: adj_grad_acq_opt}
\begin{algorithmic}[1]
   \STATE {\bfseries Input:} maximum number of experiment trajectories $N_{exp}$, minimum spacing $\Delta t$, maximum observations per trajectory $N_{max}$, initial evaluated context datasets $\mathbb{C}$, acquisition function $\alpha$, evaluated trajectory  number $n=1$, initial time lower bound $t_{lb} = t_0$, time range $\tau=[t_{lb}, t_{max}]$, observer $\boldsymbol{f}_{evolve}$  for measurement.
   \WHILE{$n \leq N_{exp}$}
   \STATE $\mathcal{T} = \phi$
   \STATE Construct the probabilistic model for $p(\boldsymbol{x}| t, \mathbb{C}, \boldsymbol{x}_0)$
   \STATE \textcolor{ForestGreen}{\# \code{Initial Condition Optimization}}
   \STATE $\boldsymbol{x}_0^*, N^* =$ arg $\max_{\boldsymbol{x}_0 \in \mathcal{X}, \{t_1=t_0, t_2, ..., t_N \in \tau\}, N \in [\lceil\frac{N_{max}}{2}\rceil, N_{max}]}\ \alpha\left(\boldsymbol{x}_0, t_1, ..., t_N \right)$ \\
   \quad \quad\quad\quad $\text{ s. t.\  } \forall i \in \{1, ..., N\}: t_{i}-t_{i-1} \geq \Delta t$\\
   $t_{lb} = t_0 + \Delta t, \tau:=[t_{lb}, t_{max}], \mathcal{T} = \mathcal{T}\cup\{t_0, \boldsymbol{x}_0^*\}, \mathbb{C}=\mathbb{C} \cup \mathcal{T}$
   \STATE $N_{traj} = N_{max} - 1$ 
   \STATE \textcolor{ForestGreen}{\# \code{Within trajectory Next Measurement Time Scheduling }}
   \WHILE{$N_{traj} >= 1$}  
         \STATE $N^*, t_1^* =$ arg $\max_{\{t_1, t_2, ..., t_N \in \tau\}, N \in [\lceil\frac{N_{traj}}{2}\rceil, N_{traj}]}\ \alpha\left(t_1, ..., t_N \right)$\\
    $\text{ s. t.\  } \forall i \in \{1, ..., N\}: t_{i}-t_{i-1} \geq \Delta t$
      \STATE $\boldsymbol{x}_1^* = \boldsymbol{f}_{evolve}(\boldsymbol{x}_0^*, t_1^*)$ 
      \STATE $t_{lb} = t_1^* +\Delta t$, $\tau := [t_{lb}, t_{ub}]$, $\mathbb{C}=(\mathbb{C} \backslash \mathcal{T}) \cup (\mathcal{T} \cup \{t_1, \boldsymbol{x}_1^*\}), \mathcal{T} = \mathcal{T}\cup\{t_1, \boldsymbol{x}_1^*\}$
      \STATE $N_{traj} = \lfloor \frac{t_{max} - t_{lb}}{\Delta t} \rfloor$
   \ENDWHILE
   \STATE $n = n + 1$
   \ENDWHILE
\end{algorithmic}
\label{alg: opt_framework}
\end{algorithm} 

\section{Experimental Details and Additional Results}

\subsection{Meta Training Data Definition} \label{sec: training_data_describe}

\textbf{Lotka-Voterra}  

\textbf{2D Cases}

\begin{equation}
    \frac{d\boldsymbol{x}}{dt} = \begin{bmatrix}
        \alpha x_1 - \beta x_1 x_2 \\
        \delta x_1 x_2 - \gamma x_2
    \end{bmatrix}
\label{Eq: LV_form}
\end{equation}

The initial condition is sample from $\boldsymbol{x}(t_0) \sim U(0.1, 3)^2$, the dynamics $\mathcal{F}$ are sampled from $\alpha \sim U(\frac{1}{3}, 1)$, $\beta \sim U(1, 2)$, $\delta \sim U(0.5, 1.5)$, $\gamma \sim U(0.5, 1.5)$, $t_{max}=15$. We run the meta-learn on this system distribution with 300 epochs.

\textbf{3D cases} 
\begin{equation}
\frac{d\mathbf{x}}{dt} = \begin{bmatrix}
\alpha x_1  - \beta x_1 x_2 - \epsilon x_1 x_3 \\
\delta x_2 x_1 - \gamma x_2  - \zeta x_2 x_3 \\
\eta x_3 x_2 - \theta x_3 
\end{bmatrix}
\end{equation}
The initial condition is sampled from $\boldsymbol{x}(t_0) \sim U(0.1, 3)^3$. The dynamics $\mathcal{F}$ are sampled from $\alpha \sim U\left(\frac{1}{3}, 1\right)$, $\beta \sim U(1, 2)$, $\delta \sim U(0.5, 1.5)$, $\gamma \sim U(0.5, 1.5)$, $\epsilon \sim U(0.5, 1.5)$, $\zeta \sim U(0.5, 1.5)$, $\eta \sim U(0.5, 1.5)$, and $\theta \sim U(0.5, 1.5)$, with $t_{max} = 15$. We run the meta-learning algorithm on this system distribution for 300 epochs.

\textbf{Brusselator} \cite{prigogine1968symmetry}
\begin{equation}
    \frac{d\boldsymbol{x}}{dt} = \begin{bmatrix}
        A + x_1^2 x_2 - (B+1)x_1 \\
        Bx_1 - x_1^2 x_2
    \end{bmatrix}
\end{equation}
we leverage $A \sim U(0, 1)$, $B \sim U(0.1, 3)$, $\boldsymbol{x}_0 \sim U(0.1, 2.0)^2$, $t_{max} = 15$. We run the meta-learning algorithm on this system distribution for 300 epochs.

\textbf{SIR model}

\begin{equation}
    \frac{dS}{dt} = -\beta S I,\ \frac{dI}{dt} = \beta S I - \gamma I,\ \frac{dR}{dt} = \gamma I
\end{equation}

The initial condition is sampled from $S(t_0) \sim U(1.0, 3.0)$, $I(t_0) = 0.01$, $R(t_0) = 0$, and $\beta \sim U(0.1, 2)$, $\gamma \sim U(0.1, 10)$. $t_{max}=1$. We run the meta-learning algorithm on this system distribution for 300 epochs.

\textbf{SIRD model} 
\begin{equation}
    \begin{aligned}
\frac{dS}{dt} = -\beta S I,\ 
\frac{dI}{dt} = \beta S I - \gamma I - \mu I,\ 
\frac{dR}{dt} = \gamma I,\ 
\frac{dD}{dt} = \mu I
\end{aligned}
\end{equation}
\noindent where $S(t_0) \sim U(10, 30)$, $I(t_0) =0.01$, $R(t_0)=D(t_0)=0$. $t_{max}=1$, $\beta \in [0.5, 2.0]$, $\gamma \in [0.1, 10]$, and $\mu \in [0.1, 5.0]$. We run the meta-learn on this system distribution with 300 epochs. 

\textbf{GP Vector Field}
\begin{equation}
    \frac{d \boldsymbol{x}}{dt} = \boldsymbol{f}(\boldsymbol{x})
\end{equation}
We assume a vector-valued GP as the non-parametric prior for the vector field $\boldsymbol{f} \sim \mathcal{GP}(\boldsymbol{0}, K(\boldsymbol{x}, \boldsymbol{x}'))$, with no correlations between each output as the most generic cases. To sample the vector field, we leverage the parametric approximation of GPs through the random Fourier feature approximation of the RBF kernel \cite{rahimi2007random} as a Bayesian linear model, which is a common approach (e.g. \cite{qing2022spectral}) to obtain differentiable GP samples, the vector fields in practice are generated from kernel lengthscale 0.8 and signal variance 1. The implementation also utilizes the parametric sampling approach of the \texttt{GPJax} \citep{Pinder2022} library.

\subsection{Additional Experimental Results}\label{Sec:add_res}
We report all model's performance on interpolating tasks and forecasting tasks, averaged over known trajectory ranges $M$, in Table.~\ref{tab:mse_all_model_comparison} and Table.~\ref{tab:nll_all_model_comparison}.

    \begin{table}[h!]
    \captionsetup{font=scriptsize}
    \caption{Model comparison (mean-squared-error $\times 10^{-2}$) for a range of dynamical systems.}
    \centering
    \begin{adjustbox}{max width=\textwidth}
    \begin{tabular}{lcccccc}
      \toprule
      Models & Lotka-Volterra (2d) & Brusselator (2d) & Selkov (2d) & SIR (3d) & Lotka-Volterra (3d) & SIRD (4d) \\
      \midrule
    
        \multicolumn{7}{c}{\textbf{Forecasting}} \\
        \midrule
              GP & $87.2 \pm 31.9$ & $39.3 \pm 61.0$ & $31.8 \pm 6.72$ & $1769.93 \pm 980.6$ & $41.4 \pm 15.8$ & $1322.845 \pm 489.4$ \\
      NODEP-$\lambda=0$ & $102.9 \pm 6.85$ & $105.2 \pm 10.8$ & $16.1 \pm 1.14$ & $6367.784 \pm 161.8$ & $64.8 \pm 2.56$ & $3368.685 \pm 189.5$ \\
      NP-$\lambda=0.0$ & $48.6 \pm 3.1$ & $21.6 \pm 0.56$ & $1.6 \pm 0.0916$ & $952.3 \pm 73.6$ & $\boldsymbol{32.8 \pm 0.85}$ & $724.2 \pm 168.8$ \\
      NP-$\lambda=0.5$ & $\boldsymbol{47.3 \pm 1.34}$ & $22.4 \pm 1.47$ & $1.7 \pm 0.0882$ & $1057.981 \pm 207.2$ & $34.8 \pm 1.53$ & $573.0 \pm 88.3$ \\
      SANODEP-$\lambda=0$ & $48.4 \pm 2.09$ & $20.1 \pm 0.816$ & $\boldsymbol{1.22 \pm 0.0376}$ & $\boldsymbol{861.6 \pm 89.0}$ & $36.2 \pm 1.17$ & $426.4 \pm 44.0$ \\
      SANODEP-$\lambda=0.5$ & $50.6 \pm 6.4$ & $\boldsymbol{18.8 \pm 0.467}$ & $1.28 \pm 0.0459$ & $873.1 \pm 82.4$ & $35.5 \pm 0.824$ & $\boldsymbol{410.5 \pm 27.3}$ \\
      \midrule
        \multicolumn{7}{c}{\textbf{Interpolating}} \\
        \midrule
              GP & $52.2 \pm 20.2$ & $9.62 \pm 19.0$ & $11.7 \pm 2.76$ & $\boldsymbol{173.7 \pm 296.8}$ & $\boldsymbol{16.8 \pm 13.7}$ & $280.7 \pm 137.3$ \\
      NODEP-$\lambda=0$ & $38.9 \pm 0.517$ & $15.6 \pm 0.379$ & $7.12 \pm 0.559$ & $367.5 \pm 58.0$ & $26.8 \pm 1.53$ & $188.1 \pm 21.8$ \\
      NP-$\lambda=0.0$ & $33.2 \pm 2.59$ & $10.4 \pm 0.523$ & $0.742 \pm 0.0436$ & $286.2 \pm 29.7$ & $24.7 \pm 0.557$ & $302.0 \pm 110.9$ \\
      NP-$\lambda=0.5$ & $33.2 \pm 1.48$ & $11.0 \pm 0.477$ & $0.807 \pm 0.0638$ & $378.3 \pm 143.0$ & $25.9 \pm 1.39$ & $244.6 \pm 54.3$ \\
      SANODEP-$\lambda=0$ & $\boldsymbol{32.1 \pm 0.556}$ & $8.74 \pm 1.09$ & $\boldsymbol{0.53 \pm 0.03}$ & $257.5 \pm 60.3$ & $25.2 \pm 1.84$ & $\boldsymbol{139.5 \pm 13.5}$ \\
      SANODEP-$\lambda=0.5$ & $36.6 \pm 7.73$ & $\boldsymbol{8.06 \pm 0.265}$ & $0.573 \pm 0.0247$ & $307.8 \pm 53.5$ & $25.2 \pm 0.464$ & $143.8 \pm 14.0$ \\

      \bottomrule
    \end{tabular}
    \end{adjustbox}
    \label{tab:mse_all_model_comparison}
    \end{table}

    \begin{table}[h!]
    \captionsetup{font=scriptsize}
    \caption{Model comparison (negative log-likelihood $\times 10^{2}$) for a range of dynamical systems.}
    \centering
    \begin{adjustbox}{max width=\textwidth}
    \begin{tabular}{lcccccc}
      \toprule
      Models & Lotka-Volterra (2d) & Brusselator (2d) & Selkov (2d) & SIR (3d) & Lotka-Volterra (3d) & SIRD (4d) \\
      \midrule
    
        \multicolumn{7}{c}{\textbf{Forecasting}} \\
        \midrule
              GP & $18.2 \pm 7.71$ & $30.2 \pm 25.7$ & $17.1 \pm 4.95$ & $11143.352 \pm 31169.575$ & $\boldsymbol{19.3 \pm 5.75}$ & $27321.033 \pm 57173.003$ \\
      NODEP-$\lambda=0$ & $119.3 \pm 21.7$ & $79.1 \pm 13.9$ & $136.4 \pm 32.1$ & $1086.616 \pm 331.3$ & $625.5 \pm 174.2$ & $447.7 \pm 86.0$ \\
      NP-$\lambda=0.0$ & $98.9 \pm 52.2$ & $\boldsymbol{17.7 \pm 4.41}$ & $\boldsymbol{4.28 \pm 1.34}$ & $\boldsymbol{162.6 \pm 78.6}$ & $491.2 \pm 122.1$ & $197.5 \pm 151.7$ \\
      NP-$\lambda=0.5$ & $200.9 \pm 88.0$ & $37.7 \pm 18.5$ & $5.93 \pm 0.836$ & $279.7 \pm 198.2$ & $961.1 \pm 454.1$ & $75.7 \pm 42.8$ \\
      SANODEP-$\lambda=0$ & $\boldsymbol{15.5 \pm 6.9}$ & $22.6 \pm 2.98$ & $7.98 \pm 1.39$ & $206.9 \pm 79.4$ & $277.5 \pm 77.9$ & $48.7 \pm 17.4$ \\
      SANODEP-$\lambda=0.5$ & $27.0 \pm 13.0$ & $23.7 \pm 4.55$ & $8.55 \pm 1.42$ & $219.4 \pm 139.7$ & $329.0 \pm 103.8$ & $\boldsymbol{48.3 \pm 11.0}$ \\
        \midrule
        \multicolumn{7}{c}{\textbf{Interpolating}} \\
        \midrule
              GP & $3.27 \pm 1.24$ & $1.88 \pm 2.0$ & $5.27 \pm 1.87$ & $718.0 \pm 2426.002$ & $\boldsymbol{2.69 \pm 1.61}$ & $1045.453 \pm 3397.646$ \\
      NODEP-$\lambda=0$ & $32.6 \pm 6.73$ & $12.7 \pm 2.89$ & $13.9 \pm 2.94$ & $66.4 \pm 18.2$ & $67.1 \pm 22.9$ & $22.9 \pm 8.44$ \\
      NP-$\lambda=0.0$ & $18.7 \pm 7.46$ & $\boldsymbol{0.254 \pm 0.61}$ & $\boldsymbol{-2.29 \pm 0.215}$ & $\boldsymbol{22.4 \pm 10.1}$ & $65.8 \pm 11.2$ & $30.1 \pm 23.1$ \\
      NP-$\lambda=0.5$ & $34.8 \pm 14.0$ & $2.88 \pm 2.01$ & $-1.97 \pm 0.147$ & $40.1 \pm 28.4$ & $122.1 \pm 49.6$ & $12.5 \pm 6.12$ \\
      SANODEP-$\lambda=0$ & $\boldsymbol{2.88 \pm 1.59}$ & $1.24 \pm 0.568$ & $-1.74 \pm 0.308$ & $28.1 \pm 10.5$ & $41.3 \pm 17.7$ & $\boldsymbol{7.17 \pm 2.39}$ \\
      SANODEP-$\lambda=0.5$ & $6.08 \pm 3.63$ & $1.13 \pm 0.596$ & $-1.5 \pm 0.284$ & $31.3 \pm 19.3$ & $47.3 \pm 14.6$ & $7.4 \pm 1.97$ \\

      \bottomrule
    \end{tabular}
    \end{adjustbox}
    \label{tab:nll_all_model_comparison}
    \end{table}

\begin{figure}[h!]
    \centering
    \includegraphics[width=1.0\linewidth]{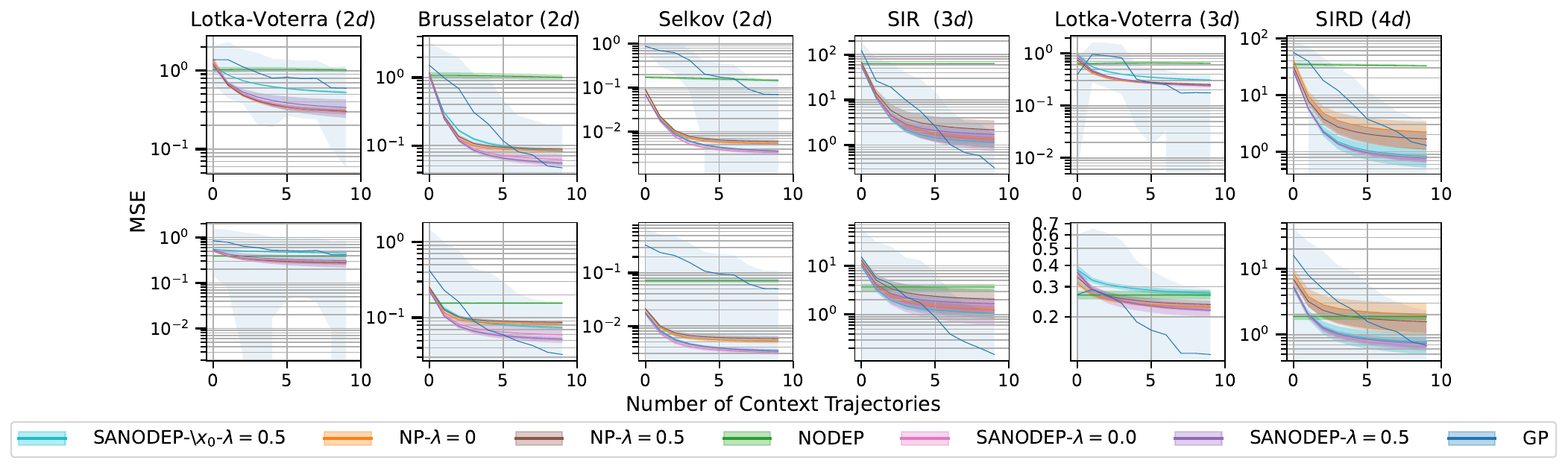}
    \caption{Sequential model evaluation performance including GPs. The first row corresponds to forecasting performance (prediction with only the initial condition known), and the second row represents an interpolating setting. }
    \label{fig:seq_model_eval_mse_with_gp}
\end{figure}

\begin{figure}[h!]
    \centering
    \includegraphics[width=1.0\linewidth]{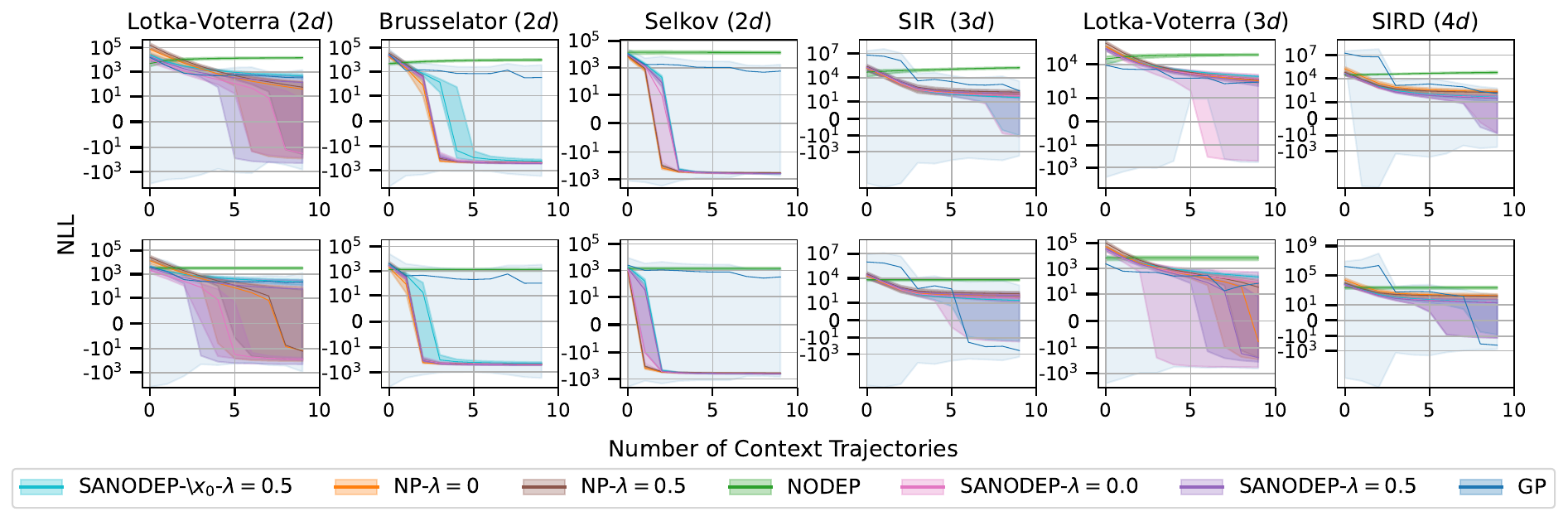}
    \caption{Sequential model evaluation performance in terms of Negative-Log-Likelihood (NLL) including GPs. The first row corresponds to forecasting performance (prediction with only the initial condition known), and the second row represents an interpolating setting. }
    \label{fig:seq_model_eval_nll_with_gp}
\end{figure}

\subsection{Model Prediction Visual Comparison} \label{Sec:model_pred_compare}

We provide the prediction comparison of different models on a specific realization of ODE systems in Fig. \ref{fig:visual_prediction_comparison}, \ref{fig:nodep}, \ref{fig:sanodep_visual}, \ref{fig:pi_sanodep_visual}, \ref{fig:gp}.

\begin{figure}[h!]
    \centering
    \includegraphics[width=\textwidth]{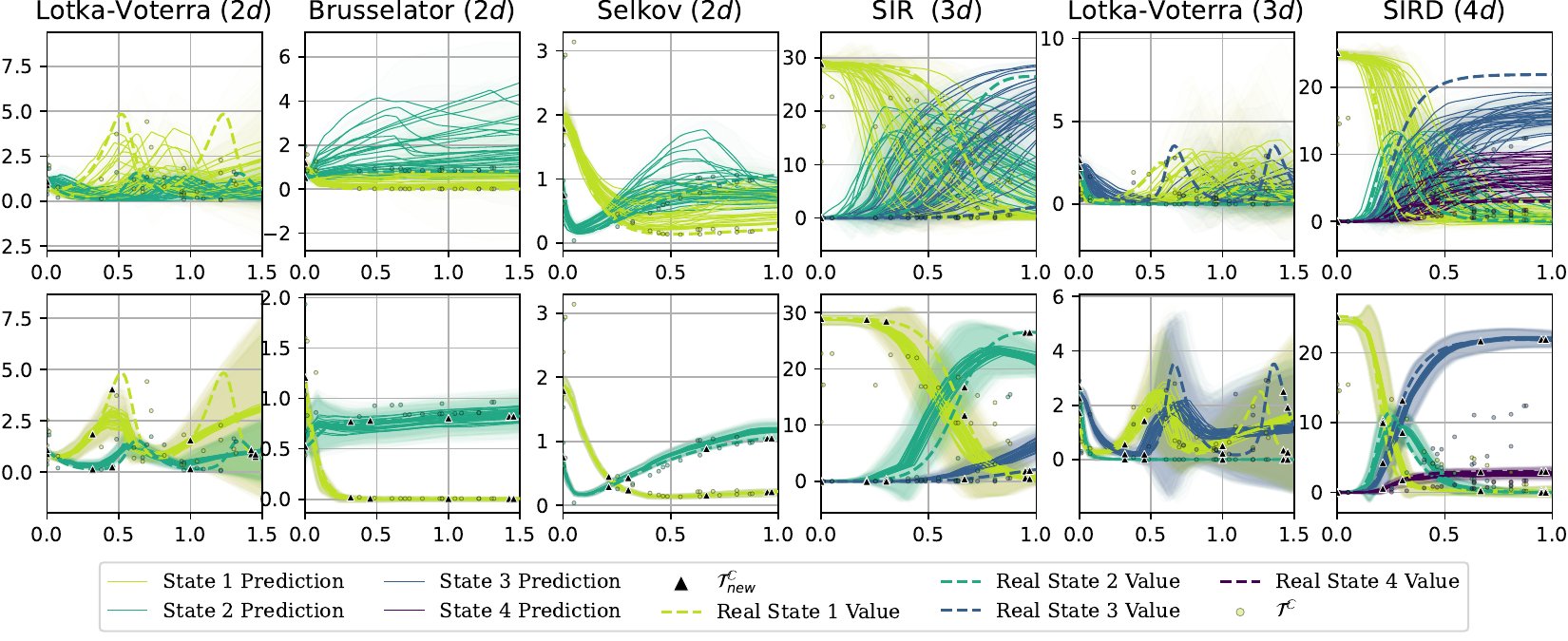}
    \caption{Neural Processes ($\lambda = 0.5$) podel merformance on test system in different meta-learning ODE problems.}
    \label{fig:visual_prediction_comparison}
\end{figure}

\begin{figure}[h!]
    \centering
    \includegraphics[width=\textwidth]{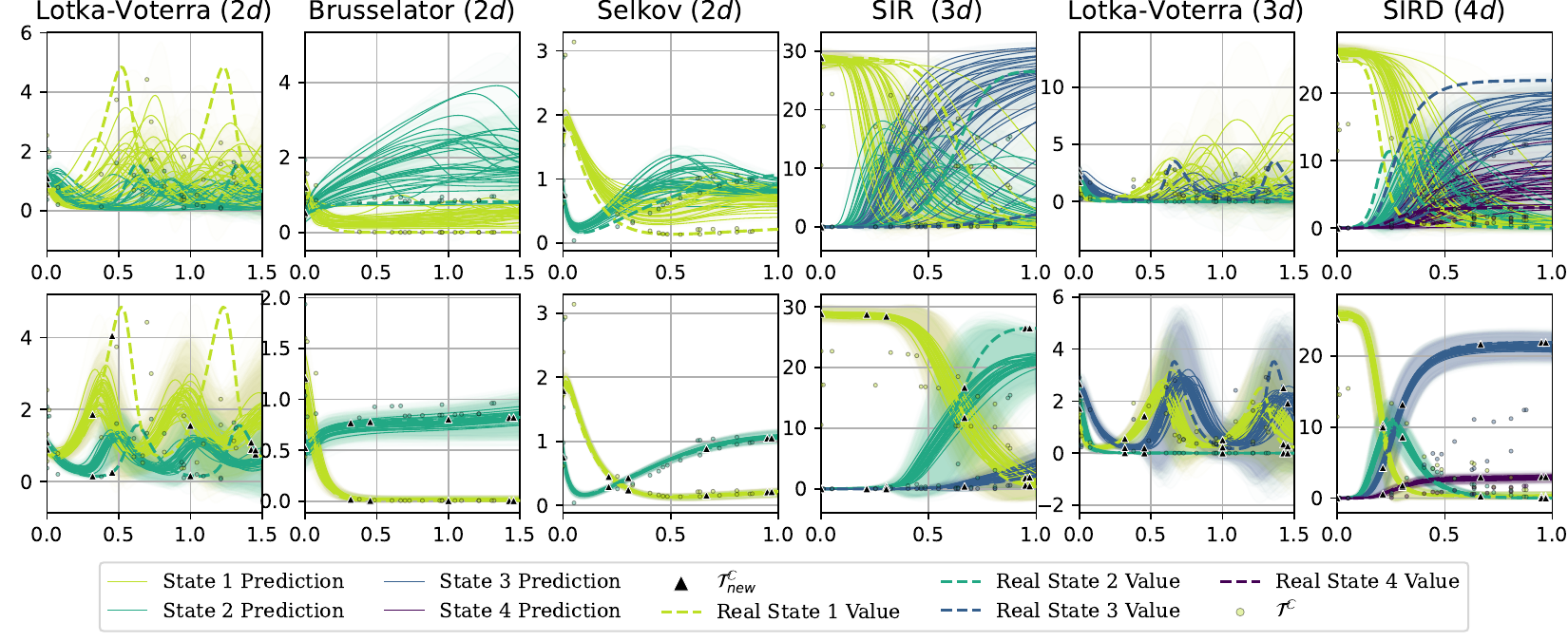}
    \caption{SANODEP-($\lambda = 0.5$) model performance on test system in different meta-learning ODE problems.}
    \label{fig:sanodep_visual}
\end{figure}

\begin{figure}[h!]
    \centering
    \includegraphics[width=\textwidth]{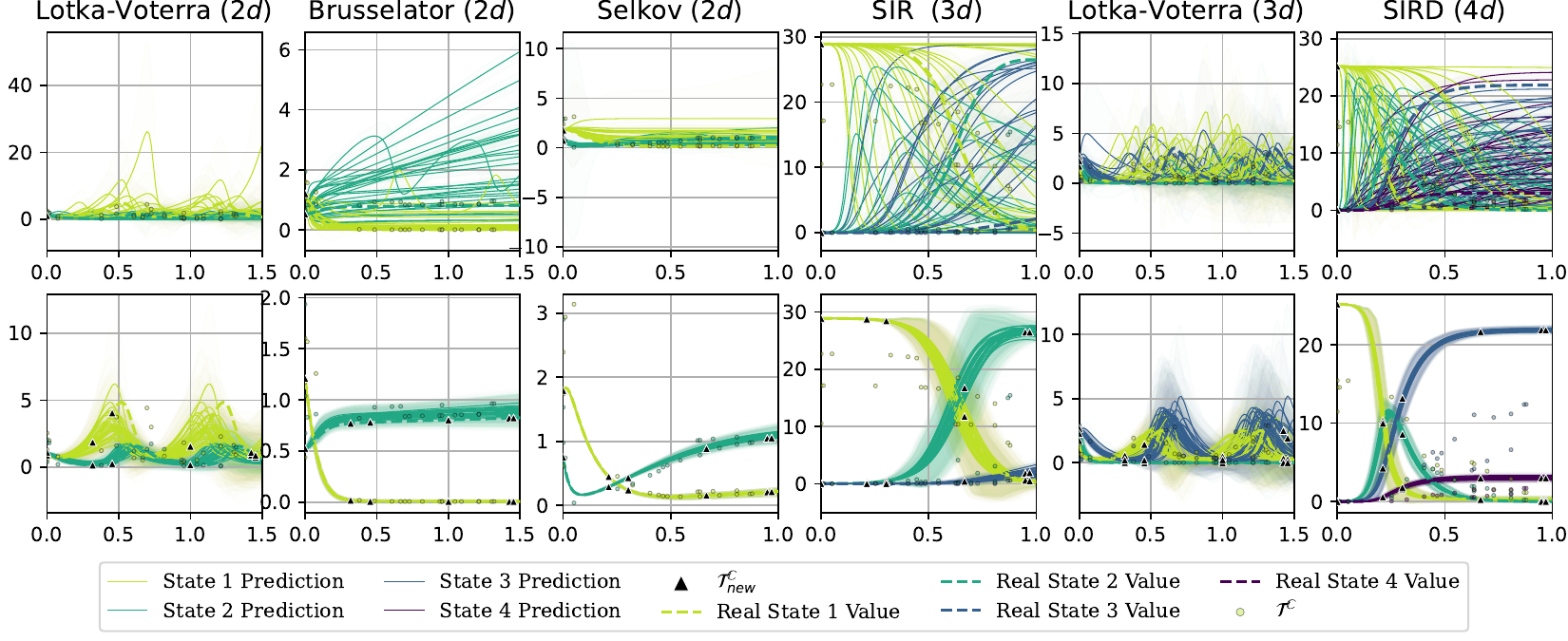}
    \caption{PI-SANODEP-($\lambda = 0.5$) model performance on test system in different meta-learning ODE problems.}
    \label{fig:pi_sanodep_visual}
\end{figure}

\begin{figure}[h!]
    \centering
    \includegraphics[width=\textwidth]{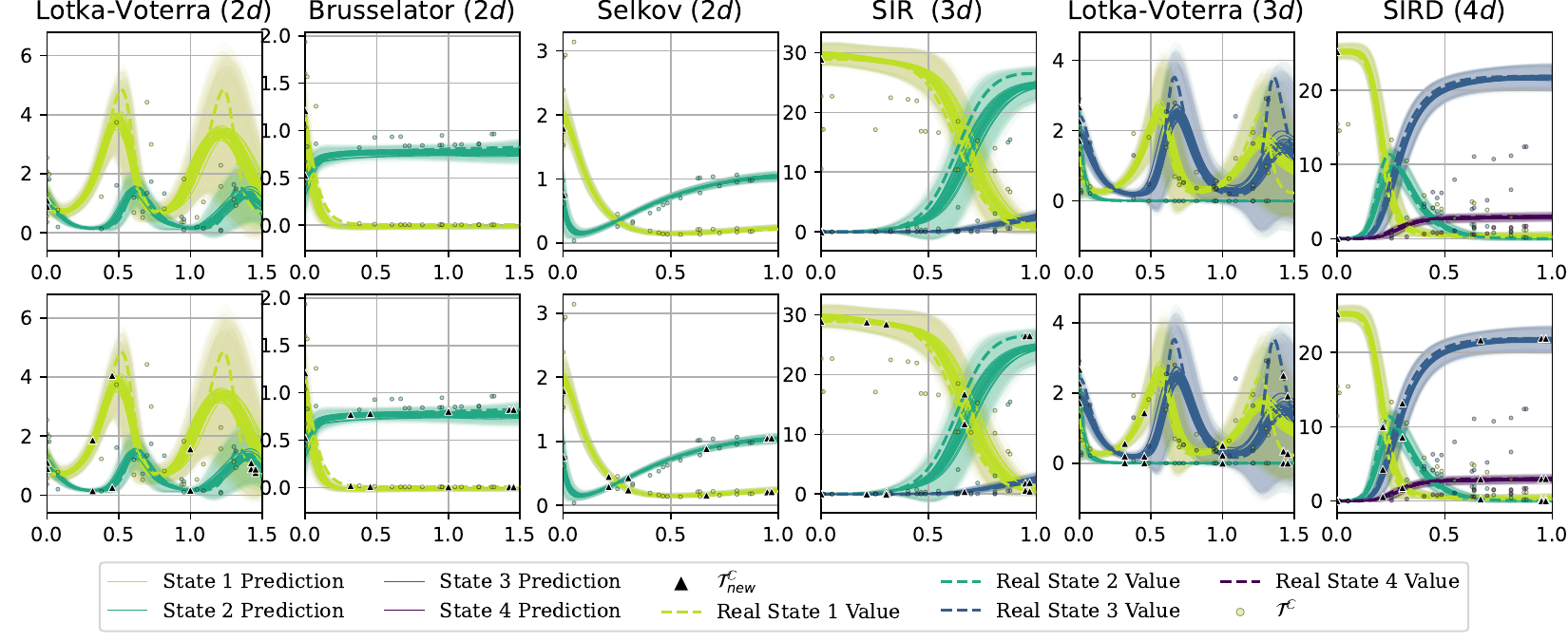}
    \caption{NODEP ($\lambda=0$) model performance on the test system in different meta-learning ODE problems.}
    \label{fig:nodep}
\end{figure}

\begin{figure}[h!]
    \centering
        \includegraphics[width=\textwidth]{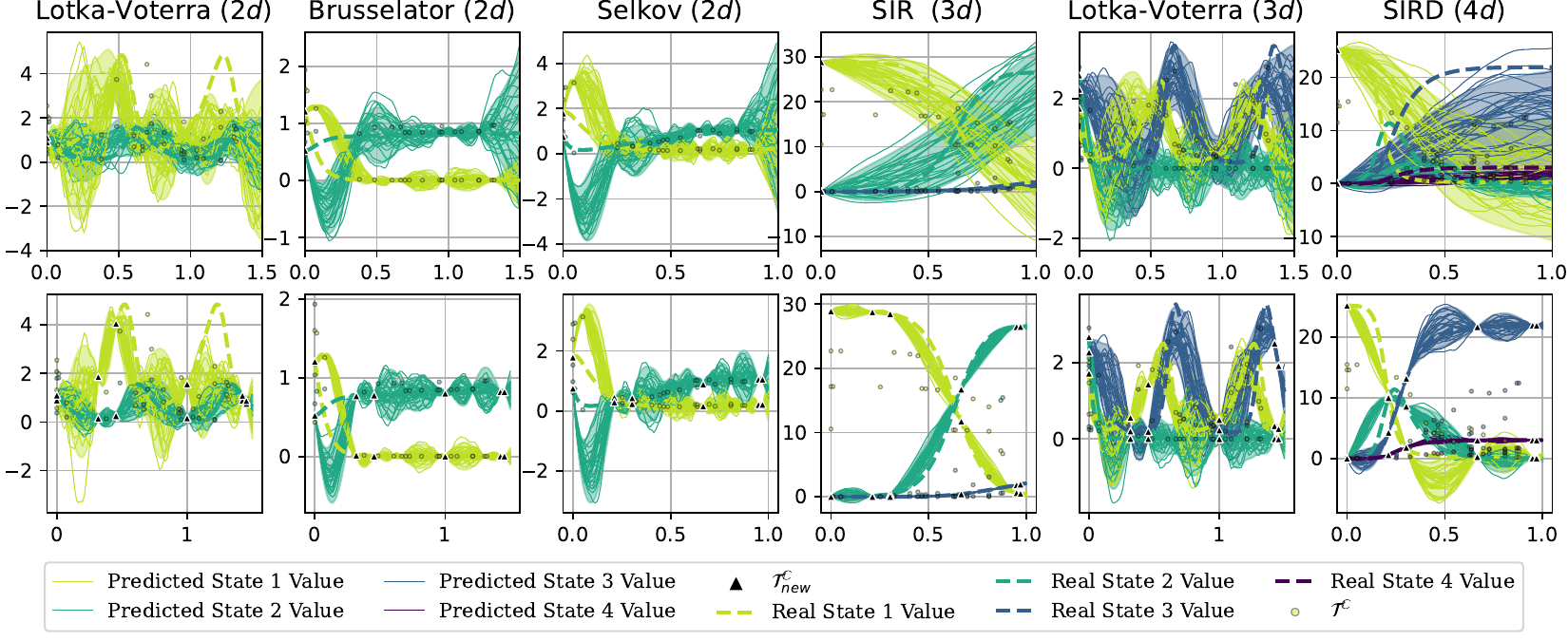}
    \caption{Performance of the GP model on the test system in different meta-learning ODE problems.}
    \label{fig:gp}
\end{figure}


\subsection{Optimization Problem Definition}
The definitions of the optimization problem are provided in Table.~\ref{tab:optimization_problem_formulations}.

\begin{table}[h!]
\centering
\caption{Optimization problem formulations}
\begin{adjustbox}{width=\textwidth,center}
\begin{tabular}{lcccccc}
    \toprule
    Problem & Lotka-Volterra ($2d$) & Brusselator ($2d$) & Selkov ($2d$) & SIR model ($3d$) & Lotka-Volterra ($3d$) & SIRD ($4d$) \\
    \midrule
    $g$ formulation &  $x_1$ & $x_1$ & $x_2$ & $x_1 / \sum_{i=1}^3x_i - 0.05 \sum_{i=1}^3x_i $ & $x_1$ &  $x_1 / \sum_{i=1}^3x_i - 0.05 \sum_{i=1}^3x_i $ \\
    Design space & $\makecell{\tau \in [0, 15], \\ \boldsymbol{x}_{dec}\in [0.1, 2.0]^2 \\ \boldsymbol{x}_{0} = \boldsymbol{x}_{dec}} $ &  $\makecell{\tau \in [0, 15], \\ \boldsymbol{x}_{dec}\in  [0.1, 2]^2 \\ \boldsymbol{x}_0 = \boldsymbol{x}_{dec}}$ &  $\makecell{\tau \in [0, 10], \\ \boldsymbol{x}_{dec}\in [0.1, 0.5]^2 \\ \boldsymbol{x}_{0} = \boldsymbol{x}_{dec}}$ & \makecell{$\tau \in [0, 1]$, \\ $x_{dec}\in [10, 30]$ \\ $\boldsymbol{x}_{0} = [x_{dec}, 0.1 \times  x_{dec}, 0]$} & \makecell{$\tau \in [0, 15]$, \\ $\boldsymbol{x}_{dec}\in [0.0, 2]^3 $ \\ $\boldsymbol{x}_{0} = \boldsymbol{x}_{dec}$} & \makecell{$\tau \in [0, 1]$\\ $x_{dec}\in [10, 30]$ \\ $\boldsymbol{x}_{0} = [x_{dec}, 0.1 \times  x_{dec}, 0, 0]$} \\
    \midrule 
    $\Delta t$ & $1.5$ & 1.5 & 1 & 0.1 & 1.5 & 0.1 \\
    \makecell{Optimization \\ ODE \\ definition} & \makecell{$\alpha=0.5$, \\ $\beta=1.2$, \\ $\delta = 1.0$, \\ $\gamma = 1.5$} & \makecell{$A=0.8$, \\ $B=1.5$} & \makecell{$a = 0.25$, \\ $b = 0.45$} & \makecell{$\beta = 1.5$ \\ $\gamma=5$} & \makecell{
    $ \alpha = 0.5 $ \\
    $ \beta = 1.2 $ \\
    $ \delta = 1.0 $ \\
    $ \gamma = 1.5 $ \\
    $ \epsilon = 0.5 $ \\
    $ \zeta = 1.2 $ \\
    $ \eta = 1.0 $ \\
    $ \theta = 1.5 $
}& \makecell{$\beta=1$\\$\gamma=0.5$\\$\mu=1$} \\
    \midrule
    Reference point & $[-1.771, 12.686]$ & $[-1.467, 3.887]$ & $[-0.474, 5.440]$ & $[0.51151, 0.79646]$ & $[-1.7557, 13.1687]$ & $[0.52198, 1.04]$ \\ 
    \bottomrule
\end{tabular}
\end{adjustbox}
\label{tab:optimization_problem_formulations}
\end{table}

\subsection{Inference Time} \label{App: complexity_analysis}

We provide the parameter counts, training time and prediction time comparison of SANODEP vs. baselines in Table.~\ref{tab: memory_comparison}, we note that all the models have the same latent dimensionality for a fair performance comparison, which is regarded as the key bottleneck of the latent variable typed meta-learning model \citep{kim2019attentive}. The parameter increase of SANODEP compared to NODEP is the additional encoder $\phi_r$ to handle the augmented state variable.

\begin{table}[h!]
    \centering
    \caption{Comparison of the number of parameters and prediction time of different models. We measure run time on the Lotka-Voterra ($d=2$) problem using an NVIDIA A40 GPU, We note training time and prediction time is per mini-batch, the optimization query time is per querying one observations.}
    \begin{adjustbox}{max width=\textwidth}
        \begin{tabular}{lcccc}
            \toprule
            Model & Number of Parameters & Training Time (\texttt{s}) & Prediction Time (\texttt{s}) & Optimization Query Time (\texttt{s})\\
            \midrule
            GP & NA & NA & NA & $23.81 \pm 43.94$\\
            NP & 21814 & $0.3016 \pm 0.0791$ & $0.2551 \pm 0.2337$ & $18.04 \pm 18.36$ \\
            NODEP & 30286 & $0.3792 \pm 0.1892$ & $0.3421 \pm 0.2412$ & $34.71 \pm 20.62$ \\
            SANODEP & 35514 & $0.3813 \pm 0.2423$ & $0.3482 \pm 0.2731$ & $39.47 \pm 37.30$ \\
            \bottomrule
        \end{tabular}
    \end{adjustbox}
    \label{tab: memory_comparison}
\end{table}

\section{Investigation of Prior Strength on Model Performance}
\subsection{Parameter estimation via PI-SANODEP} \label{App: detailes_of_pi_sanodep}

One advantage of SANODEP as a meta-learned dynamical model is its ability to estimate parameters. When the parametric form of the dynamical system is known, SANODEP can directly integrate this form into its governing ODE (Eq.~\ref{Eq: nodep_solve}), yielding a variant we call Physics-Informed SANODEP (PI-SANODEP).


For PI-SANODEP’s training loss function $\mathcal{L'}_{\theta}$, we denote the system parameters for each meta-training task as $\boldsymbol{u}_{\boldsymbol{f}}$, since these parameters are known during meta-training, we augment our loss (Eq.~\ref{Eq: ELBO_form}) by incorporating a likelihood term for these parameters, leading to the following expression:

\begin{equation}
\begin{aligned}
     & \text{log}\ p\left(\boldsymbol{X}_{new}^{\mathbb{T}}, \boldsymbol{u}_{\boldsymbol{f}} | \mathcal{T}^\mathbb{C}\cup \mathcal{T}_{new}^\mathbb{C}, \boldsymbol{T}_{new}^{\mathbb{T}} \right)  \\ & = \text{log}\ p\left(\boldsymbol{X}_{new}^{\mathbb{T}} | \boldsymbol{u}_{\boldsymbol{f}}, \mathcal{T}^\mathbb{C}\cup \mathcal{T}_{new}^\mathbb{C},\boldsymbol{T}_{new}^{\mathbb{T}} \right) + \text{log}\  p\left( \boldsymbol{u}_{\boldsymbol{f}} | \mathcal{T}^\mathbb{C}\cup \mathcal{T}_{new}^\mathbb{C}, \boldsymbol{T}_{new}^{\mathbb{T}} \right)
\end{aligned}
    \label{Eq: PI_ELBO_SANODEP}
\end{equation}
\noindent Since $\boldsymbol{u}_{\boldsymbol{f}}$ is known during meta-training, we use the encoder (described in Appendix.~\ref{App: model_structure}) as an inference network to calculate its likelihood $p\left( \boldsymbol{u}_{\boldsymbol{f}} | \mathcal{T}^\mathbb{C}\cup \mathcal{T}_{new}^\mathbb{C}, \boldsymbol{T}_{new}^{\mathbb{T}} \right)=\mathcal{N}\left(\boldsymbol{u}_{\boldsymbol{f}}; {\phi_{\mu}}_{sys}(\boldsymbol{h}_{sys}), diag\left({\phi_{\sigma}}_{sys}(\boldsymbol{h}_{sys})^2\right) \right)$. Regarding $p\left(\boldsymbol{X}_{new}^{\mathbb{T}} | \boldsymbol{u}_{\boldsymbol{f}}, \mathcal{T}^\mathbb{C}\cup \mathcal{T}_{new}^\mathbb{C},\boldsymbol{T}_{new}^{\mathbb{T}} \right)$, we however omit the dependence of $\boldsymbol{u}_{\boldsymbol{f}}$ and still make use of Eq.~\ref{Eq: ELBO_form} to calculate.  Thus, the final loss function for PI-SANODEP extends the original SANODEP loss with an added likelihood term $\text{log}\  p\left( \boldsymbol{u}_{\boldsymbol{f}} | \mathcal{T}^\mathbb{C}\cup \mathcal{T}_{new}^\mathbb{C}, \boldsymbol{T}_{new}^{\mathbb{T}}\right)$, capturing both the dynamics and parameter estimation.
For the model structure of PI-SANODEP, since $\boldsymbol{l}(t)= \boldsymbol{x}(t)$, hence ${\phi_{\mu}}_{dec}[\boldsymbol{x}(t), \boldsymbol{u}_{sys}, t]=\boldsymbol{x}(t)$.  the rest of the model structure is the same as SANODEP only except that the latent ODE $\boldsymbol{f}_{nn}$ becomes explicit.

\begin{figure}[h!]
    \centering
\includegraphics[width=\textwidth]{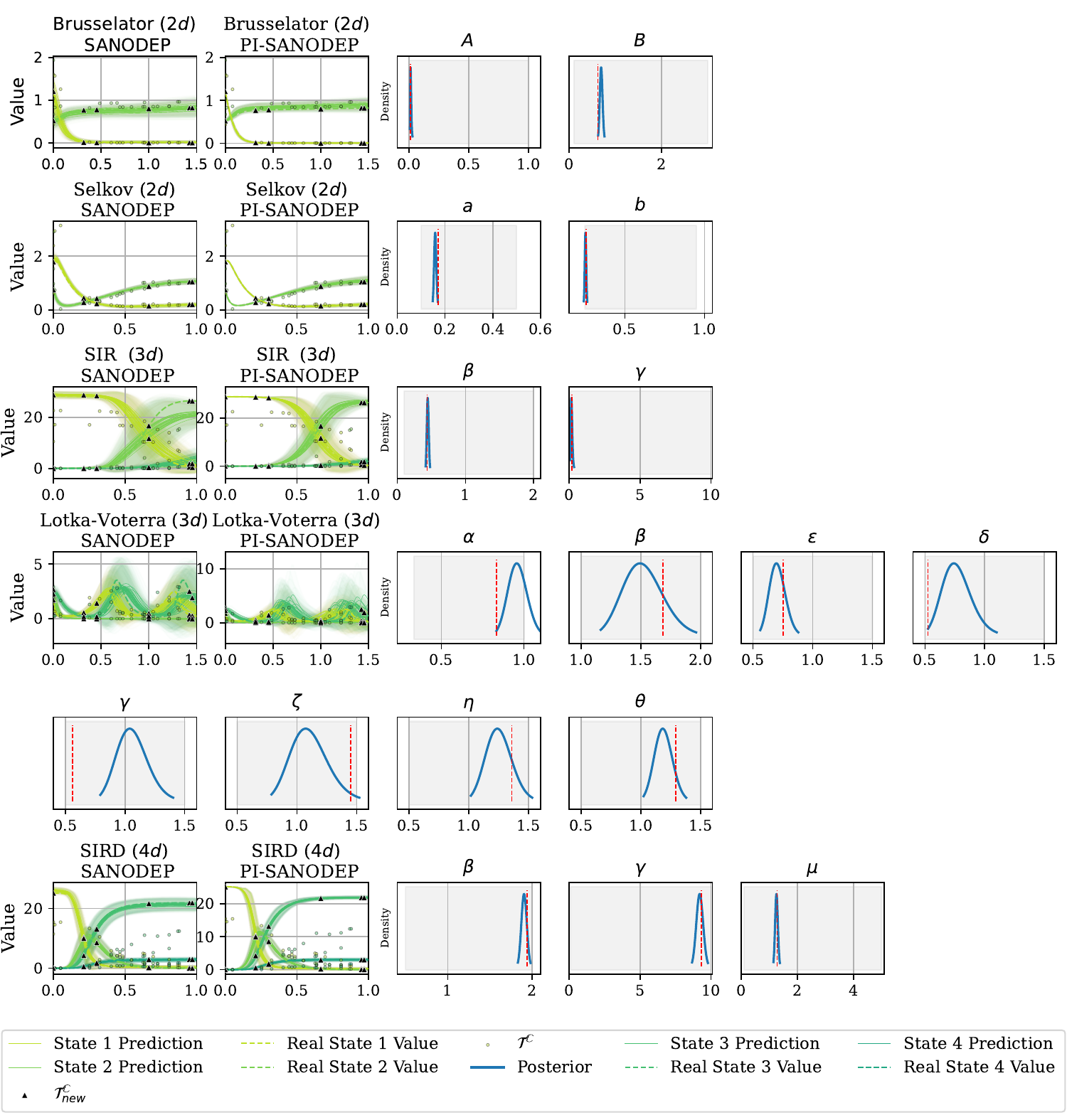}
    \caption{Additional experiments for meta-parameter estimation based on PI-SANODEP, the shaded area represents the meta-training parameter support $\mathcal{S}=\text{supp}(P)$. It can be seen that PI-SANODEP not only improves upon SANODEP over prediction (e.g., in the SIR example) but also provides a reasonable parameter estimation of the test system. 
    Parameter estimation tends to become difficult when the number of parameters is high as can been seen from the $3$-dimensional Lotka-Voterra system. }
    \label{fig:pi_sanodep_all_rest}
\end{figure}

We note that, while PI-SANODEP can improve itself toward the right estimation of parameters along the training process, as a non-local parameter estimation strategy, we empirically noticed that at the initial training stage of some dynamical systems, the inferred parameter distribution can be far from the correct ones and hence either trigger an exhaustive discretization step or have divergent trajectories. To mitigate such \textit{non-stable initialization} which causes numerical issues, given that we know the (interval) support $\mathcal{S}$ of our training system parameters $P$, we clip the sample parameter from amortized variation posterior $q\left(\boldsymbol{u}_{sys}| \mathcal{T}^\mathbb{C}\cup \mathcal{T}_{new}^{\mathbb{C}}(\mathbbm{1}_{forecast}) \cup \mathcal{T}_{new}^{\mathbb{T}}\right)$ by the support bound (provided in the Appendix.\ref{sec: training_data_describe}), which significantly improves the stability of PI-SANODEP training.

We report the PI-SANODEP model performance in comparison with SANODEP, together with its parameter estimation on the rest of the problem in Fig.~\ref{fig:pi_sanodep_all_rest}. For consistency, we use log-normal for all parameters in different problems. We conclude by remarking that the parameter estimation functionality of PI-SANODEP is standalone besides the optimization application.

\subsection{Vector Valued GP Prior as Task Distribution}\label{app: vec_gp_prior}
\begin{figure}[h!]
    \centering
\includegraphics[width=\textwidth]{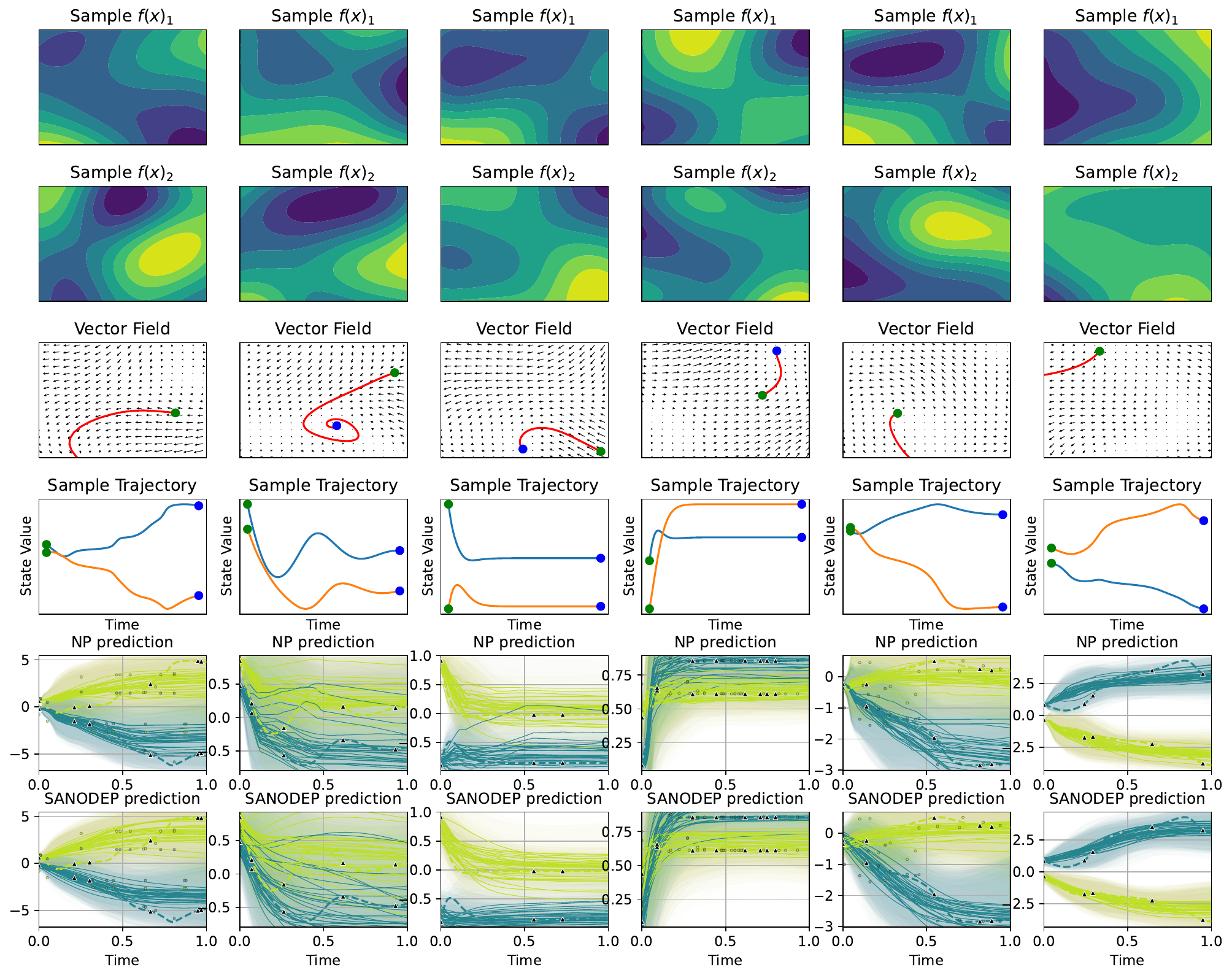}
    \caption{Task distribution generated by vector-valued Gaussian Process (GP) priors. The first two rows display samples from parametric GP priors. The third and fourth rows show the sampled vector fields and the resulting trajectories starting from sampled initial conditions. The last two rows depict the meta-model predictions on these trajectories. It can be seen that while the GP-based vector field has introduced a very flexible task distribution, both meta-learn models tend to underfit on these trajectories.}    \label{fig:gp_ode}
\end{figure}

\begin{figure}[h!]
    \centering
\includegraphics[width=\textwidth]{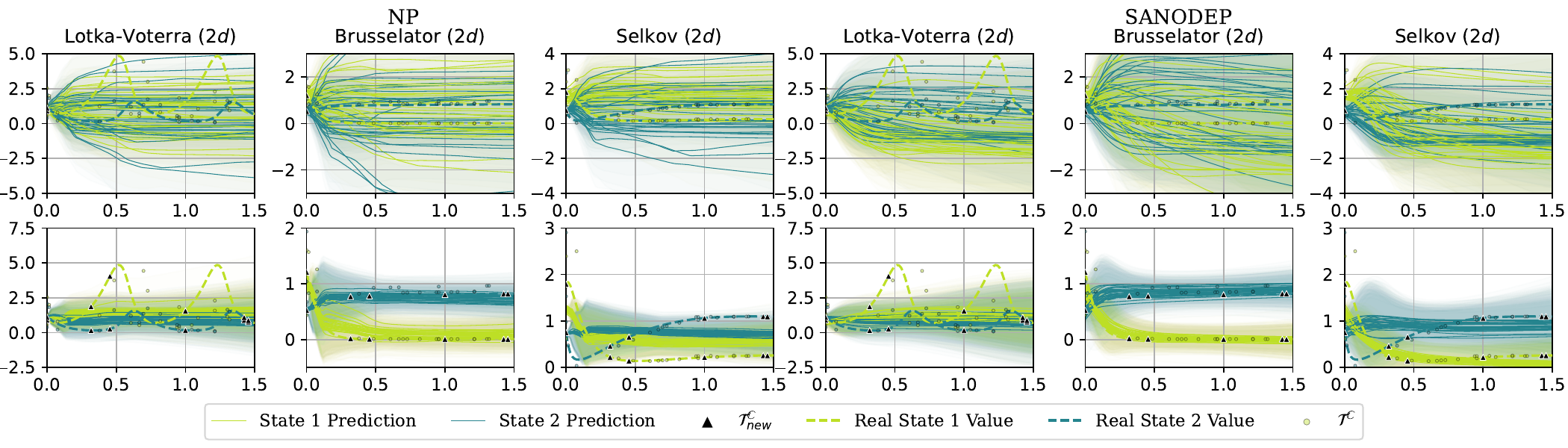}
    \caption{Cross domain generalization exploration: SANODEP and NP meta trained on GP-ODE seems to be able to capture the trajectories from Bruseelator system.}
    \label{fig:cross_domain_generalization}
\end{figure}
As an initial investigation with SANODEP, inspired by the recent approach of using Gaussian Processes (GPs) to model vector fields (e.g., \cite{heinonen2018learning}), we propose to use vector-valued GP as the non-parametric prior for the vector field, denoted by $\boldsymbol{f} \sim \mathcal{GP}(\boldsymbol{0}, K(\boldsymbol{x}, \boldsymbol{x}'))$, as task distributions for meta-learning for dynamical systems. Further details on the data used for meta-training are provided in Appendix~\ref{sec: training_data_describe}. The dynamical systems sampled from this vector-valued GP prior, as illustrated in Fig.~\ref{fig:gp_ode}, demonstrate its ability to represent a broad distribution of dynamical systems. Additionally, we offer a visual comparison of SANODEP and NP on the sampled trajectories. This comparison reveals a trade-off between fitting accuracy and trajectory flexibility for both models. Specifically, both models tend to significantly underfit trajectories that involve oscillations, indicating the noticable complicity of this meta-learning problem. 

We also explore whether this flexible task distribution facilitates cross-domain generalization. To this end, we employ the trained model on different dynamical systems that share the same state variable dimensionality, as shown in Fig.~\ref{fig:cross_domain_generalization}. Both meta-learned models demonstrate some potential for cross-domain generalization particularly from the Brusselator system. However, generalization to systems involving oscillatory behaviors proves to be especially challenging.

\section{Notations}\label{Sec:notation}
\begin{longtable}{ll}
\caption{Nomenclature Table} \label{tab:numemclature_table} \\ 
\toprule
Notation & Meaning \\
\midrule
\endfirsthead

\caption[]{Nomenclature Table (Continued)} \\
\toprule
Notation & Meaning \\
\midrule
\endhead

\bottomrule
\endfoot

\bottomrule
\endlastfoot

$\boldsymbol{x}$ & State variable vector \\
$d_{\boldsymbol{x}}$ & Dimension of the state space \\
$t_0$ & Initial time for the dynamical system \\
$t_i$ & Trajectory time samples, sampled irregularly \\
$t$ & Termination time for a trajectory \\
$\mathbb{C}$ & Context Sets \\
$\boldsymbol{f}$ & Vector field governing the dynamics \\
$g(\cdot)$ & Aggregation function used in optimization \\
$L_0$ & Latent initial condition \\
$L_D$ & Dynamics of the system \\
$\phi_r$ & Context encoding \\
$\boldsymbol{r}$ & Context representation vector \\
$\boldsymbol{l}_0$ & Realizations of latent initial conditions \\
$\boldsymbol{l}(t)$ & Latent states at time $t$ \\
$\boldsymbol{u}$ & Realizations of the control term representing latent dynamics \\
$\boldsymbol{f}_{nn}$ & Neural network parameterized vector field \\
$\theta_{ode}$ & Parameterisation of the ODE system \\
$\boldsymbol{f}_{evolve}$ & Numerical solver implementation of the vector field \\
$h$ & Instantenous time variable in continuous dynamics \\
$d_{\boldsymbol{l}}$ & Dimension of the latent state space \\
$\mathbb{T}$ & Target Sets \\
$\boldsymbol{T}^{\mathbb{T}}$ & Target times for predictions \\
$\boldsymbol{X}^{\mathbb{T}}$ & Target state values \\
$\mathcal{F}$ & Distribution of dynamical systems \\
$P$ & Parametric Task distribution, often stochastic \\
$\mathcal{X}_{\boldsymbol{x}}$ & State variable space \\
$\mathcal{X}_0$ & Initial condition space \\
$\tau$ & Time domain for the system evolution \\
$M$ & Number of context trajectories \\
$N_l$ & Number of context elements in trajectory $l$ \\
$\mathcal{T}^{\mathbb{C}}$ & Context observations from $M$ trajectories \\
$\mathcal{T}_l^{\mathbb{C}}$ & Context observations from the $l$th trajectory \\
$\mathcal{T}_{new}$ & A new trajectory which augments the context set $\mathbb{C}$ \\
$\mathcal{T}_{new}^\mathbb{C}$ & Context observations from the new trajectory \\
$\mathcal{T}_{new}^\mathbb{T}$ & Target observations from the new trajectory \\
$D_{sys}$ & Latent variable conditioned on $M+1$ trajectories \\
$\boldsymbol{u}_{sys}$ & Realization of system-aware latent dynamics \\
$\boldsymbol{r}_{sys}$ & System context representation vector \\
${\phi_r}_{sys}$ & Context encoding augmented with the initial state \\
${\boldsymbol{X}_{new}^{\mathbb{T}}}$ & Target state values for the new trajectory \\
${\boldsymbol{T}_{new}^{\mathbb{T}}}$ & Target times for the new trajectory \\
${{\boldsymbol{x}_{new}}_{i}^{\mathbb{C}}}$ & State vector samples from the new trajectory \\
${t_{new}}_i^{\mathbb{C}}$ & Time samples from the new trajectory \\
$K$ & Number of observations in the new trajectory \\
$\mathcal{L}_{\theta}$ & SANODEP's multi scenario loss function parametrised by $\theta$ \\
$p_{\theta}$ & SANODEP prediction parametrised by $\theta$ \\
$\mathbbm{1}_{forecast}$ & Bernoulli Indicator \\
$\lambda$ & Parameter for Bernoulli distribution in decision-making process \\
$\mathcal{N}_{opt}$ & Search space of the number of observations in the trajectory \\
$\Delta t$ & Minimum delay between observations \\
$t_{max}$ & Maximum time observation \\
$N_{max}$ & Maximum number of observations per trajectory \\
$\alpha(\cdot)$ & Batch acquisition function \\
\texttt{HVI} & Hypervolume improvement based on the Pareto frontier \\
$\mathcal{F}*$ & Pareto frontier \\
$M_{min}$ & Minimum number of context trajectories observed in a system \\
$M_{max}$ & Maximum number of context trajectories observed in a system \\
$N_{\boldsymbol{x}_0}$ & Sampled number of initial conditions within a system \\
$N_{sys}$ & Number of dynamical system samples per training iteration \\
$N_{grid}$ & Number of points in the time grid for trajectory evaluation \\
$m_{min}, m_{max}$ & Min and max context points within a trajectory \\
$n_{min}, n_{max}$ & Min and max target points beyond context in a trajectory \\
$\mathcal{S}$ &  The support set of the stochastic parameters for meta-task distribution \\
$\boldsymbol{T}_{grid}$ & Time grid used for trajectory evaluations \\
\end{longtable}

\end{document}